\title{Faster Adaptive Optimization via Expected Gradient Outer Product Reparameterization}
\author[1,2,*]{Adela DePavia}
\author[3,$\dag$]{Jose Cruzado}
\author[2,$\dag$]{Jiayou Liang}
\author[4,5]{Vasileios Charisopoulos}
\author[1,2,3,5,6]{Rebecca Willett}
\affil[1]{Committee on Computational and Applied Mathematics, University of Chicago}
\affil[2]{Data Science Institute, University of Chicago}
\affil[3]{Department of Statistics, University of Chicago}
\affil[4]{Department of Electrical \& Computer Engineering, University of Washington}
\affil[5]{NSF-Simons National Institute for Theory and Mathematics in Biology}
\affil[6]{Department of Computer Science, University of Chicago}
\affil[*]{Corresponding author: adepavia@uchicago.edu}
\affil[$\dag$]{Denotes equal contribution}
\algrenewcommand\algorithmiccomment[1]{\texttt{\textbackslash\textbackslash\ }#1}
\newcommand{\leqnomode}{\tagsleft@true}
\newcommand{\reqnomode}{\tagsleft@false}
\begin{document}
\maketitle

\begin{abstract}
Adaptive gradient methods---such as Adagrad, Adam, and their variants---have found wide\-spread use in machine learning, signal processing and many other settings. Several methods in this family are not rotationally equivariant, meaning that simple reparameterizations  (i.e. change of basis) can drastically affect their convergence. However, their sensitivity to the choice of parameterization has not been systematically studied;  it is not clear how to identify a ``favorable'' change of basis in which these methods perform best. In this paper we propose a reparameterization method and demonstrate both theoretically and empirically its potential to improve the convergence of adaptive algorithms. Our method is an orthonormal transformation based on the \textit{expected gradient outer product} (EGOP) matrix, which can be approximated using either full-batch or stochastic gradient oracles. Our theoretical results show that in the neighborhoods of local minima, the sensitivity of adaptive algorithms to choice-of-basis is influenced by spectral decay in the EGOP matrix. We illustrate the potential impact of EGOP reparameterization by presenting empirical evidence and theoretical arguments that common machine learning tasks with ``natural'' data exhibit strong EGOP spectral decay.
\end{abstract}

\section{Introduction}\label{sec:intro}

Adaptive gradient methods are popular optimization algorithms in modern machine learning \cite{Crew_2020}. Optimizers in this family include the seminal Adagrad and Adam algorithms as well as variants such as AdamW, Adadelta, and Adamax \cite{kingma2017adammethodstochasticoptimization, loshchilov2019decoupledweightdecayregularization,reddi2019convergenceadam, zeiler2012adadeltaadaptivelearningrate}. These algorithms are termed ``adaptive'' because they maintain and update different learning rates for each coordinate in parameter space.\footnote{We focus on adaptive algorithms with diagonal preconditioners, corresponding to the standard implementation of Adagrad, Adam, and variants in popular software packages. Full-matrix Adagrad, proposed by Duchi et al. \cite{duchi2011adaptive}, is equivariant but computes a matrix root on every iteration and is thus rarely used in practice.} Despite their popularity, fully understanding the impact these adaptive learning rates have on convergence remains an area of ongoing research \cite{jiang2024convergence, liu2024adagrad, maes2024understanding}. Notably, coordinate-wise learning rates make these methods sensitive to orthonormal reparameterization, distinguishing them from equivariant methods like gradient descent, whose performance does not change under orthonormal reparameterization.

Orthonormal reparameterizations correspond to full-dimensional changes of basis, and include seemingly benign transformations such as rotations of the loss landscape about the origin. The sensitivity of adaptive algorithms to rotation means that changes of basis can affect the rate of convergence to local minima, and even impact the generalization properties of the obtained solutions in non-convex settings. Given the sensitivity of these ubiquitous optimization methods to choice of basis, we pose the research question: 
\begin{quote}
    \vspace{.075cm}
    \centering \textbf{\emph{When using an adaptive algorithm, in what settings and to what extent can change-of-basis improve optimization?}} 
    \vspace{.075cm}
\end{quote}

We address this question by identifying geometric properties of loss functions that govern the sensitivity of adaptive algorithms to change-of-basis. We propose a reparameterization procedure based on the \emph{expected gradient outer product} (EGOP) matrix and show that this reparameterization can improve the convergence of adaptive methods. The geometric properties identified in this work---namely, strong decay of the EGOP eigenvalues---have been observed in a variety of machine learning objectives \cite{ papyan2018full,sagun2017empirical, zhang2024transformers}. We include both empirical evidence and theoretical arguments suggesting  this property commonly arises when using natural data.

\paragraph{Contributions}
We show that for a large class of objectives, the proposed reparameterization procedure can improve the convergence of adaptive optimization algorithms. Our analysis of EGOP spectral decay and low-rank structure yields a novel and specific hypothesis for why adaptive gradient methods are particularly sensitive to changes of basis in machine learning settings. Our main contributions are as follows: \textbf{(1)} We characterize a class of objective functions for which EGOP reparameterization can reduce the number of iterations required for adaptive algorithms to converge to minima and first-order stationary points. \textbf{(2)} For these functions, we identify a choice of basis in which adaptive algorithms will perform well, and we propose an approximation procedure that only requires access to a (stochastic) gradient oracle, rather than analytical knowledge about the loss function. This procedure is defined in Section~\ref{sec:EGOP-defn}. \textbf{(3)} We develop theory that proves that in neighborhoods of local minima, the proposed reparameterization endows adaptive algorithms with improved convergence guarantees, quantified in terms of the spectrum of the EGOP matrix. Our main results are discussed in Section~\ref{sec:convergence-analysis}. \textbf{(4)} We empirically investigate this procedure and find that the proposed reparameterization improves the convergence of adaptive algorithms on a diverse suite of optimization problems. These experiments include results with large-scale contemporary neural network architectures. We present empirical results in Section~\ref{sec:experimental-results}.

\subsection{Related Work}\label{ssec:related-work}

\begin{figure}
    \centering
    \begin{subfigure}[c]{0.58\textwidth}
        \centering
        \includegraphics[width=\linewidth]{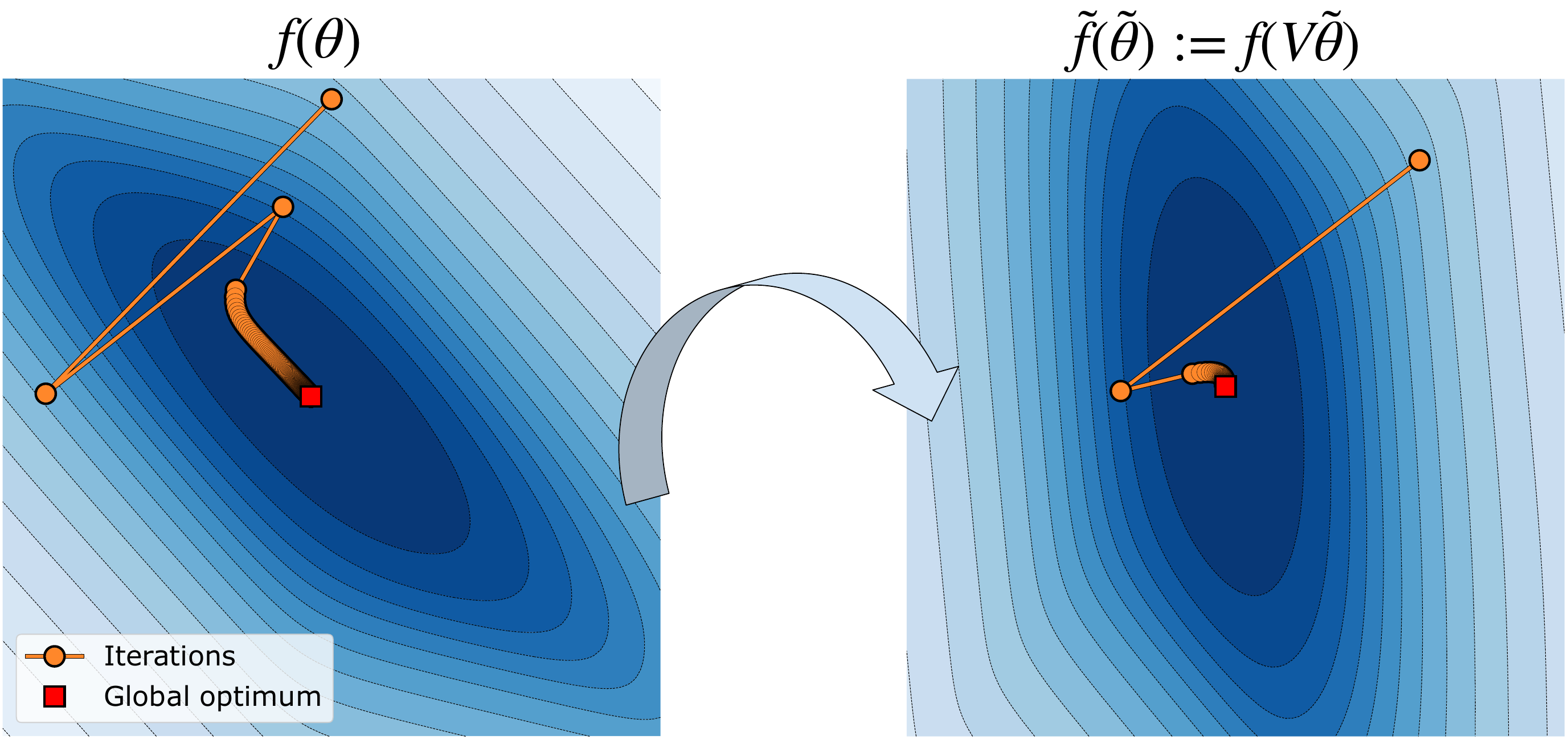}
    \end{subfigure}
    \hspace{0.01\textwidth} 
    \begin{subfigure}[c]{0.38\textwidth} 
        \centering
        \includegraphics[width=\linewidth]{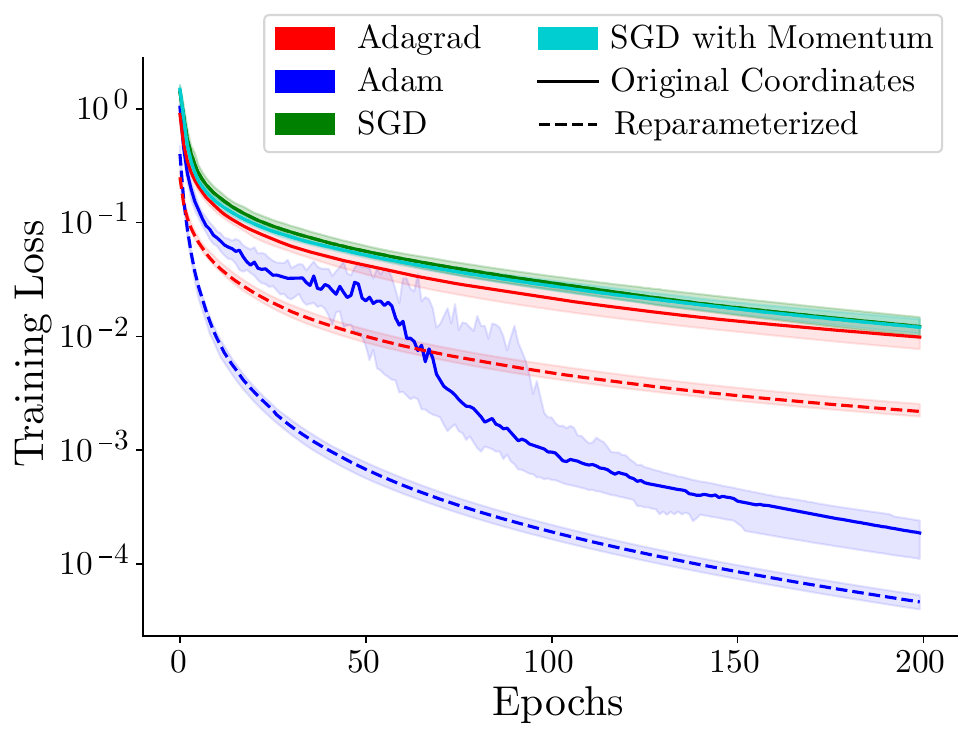}
    \end{subfigure}
    \caption{(Left) Using Adagrad in original coordinates and under EGOP reprameterization to optimize a two-dimensional log-sum-exp objective. The iterates produced by Adagrad are \textit{not} equivalent up to rotation, despite equivalent initializations. In the EGOP eigenbasis, the primary directions of function variation are axis-aligned. Details in Section~\ref{ssec:details-for-opener-cartoon}. (Right) Cross-entropy loss from a 2-layer ReLU network in 2.4k dimensions trained on image classification using Adam, Adagrad, SGD, and SGD with momentum, in both original coordinates and under reparameterization. Equivariant methods (e.g. SGD) exhibit no change under reparameterization. Discussion and details in Section~\ref{sec:experimental-results}.}\label{fig:opener-cartoon}
\end{figure}

Our work intersects with research on guarantees for adaptive algorithms, the role of orthogonal rotations in algorithmic performance, and geometry of machine learning objectives with natural data. Here we concisely survey related works, and we include an expanded discussion in Section~\ref{sec:extended-related-works}. 

\paragraph{Geometric Sensitivity of Adaptive Methods}
Recently, there has been renewed interest in distinguishing the properties of adaptive algorithms versus stochastic gradient descent (SGD), arising in part from several empirical studies suggesting that adaptive methods outperform SGD when training transformer models \cite{kunstner2024heavy,zhang2024transformers}. Traditional analyses of adaptive algorithms establish regret bounds for online convex optimization \cite{duchi2011adaptive, hazan2016introduction}, and more recent work establishes convergence rates for smooth, non-convex objectives \cite{defossez2020simple,ward2020adagrad}. However, because SGD is known to have optimal convergence rates in these settings, these theoretical results only show that adaptive algorithms achieve rates matching those of SGD.

In order to understand when adaptive algorithms enjoy provably stronger guarantees than SGD, recent work studies convergence under refined geometric assumptions, with particular emphasis on assumptions that are \textit{not} rotationally invariant \cite{jiang2024convergence, liu2024adagrad, xie2024adamexploitsellinftygeometryloss}. Xie et al. \cite{xie2024adamexploitsellinftygeometryloss} establish convergence guarantees in terms of the $\ell_{\infty}$ smoothness constant and show experimentally that rotationally invariant geometric assumptions do not suffice to capture settings when Adam outperforms SGD.  Jiang et al. \cite{jiang2024convergence} and Liu et al. \cite{liu2024adagrad} study convergence of Adagrad on objectives that are \textit{coordinate-wise smooth}, defined in Section~\ref{ssec:notation}. Our analysis builds on these results; Jiang et al. \cite{jiang2024convergence} and Liu et al. \cite{liu2024adagrad} show that the sum of the coordinate-wise smoothness constants governs Adagrad convergence, and we prove that EGOP reparameterization decreases this value by a factor as large as $1/d$, for $d$ the problem dimension.

\begin{figure*}
    \centering
    \includegraphics[width=\linewidth]{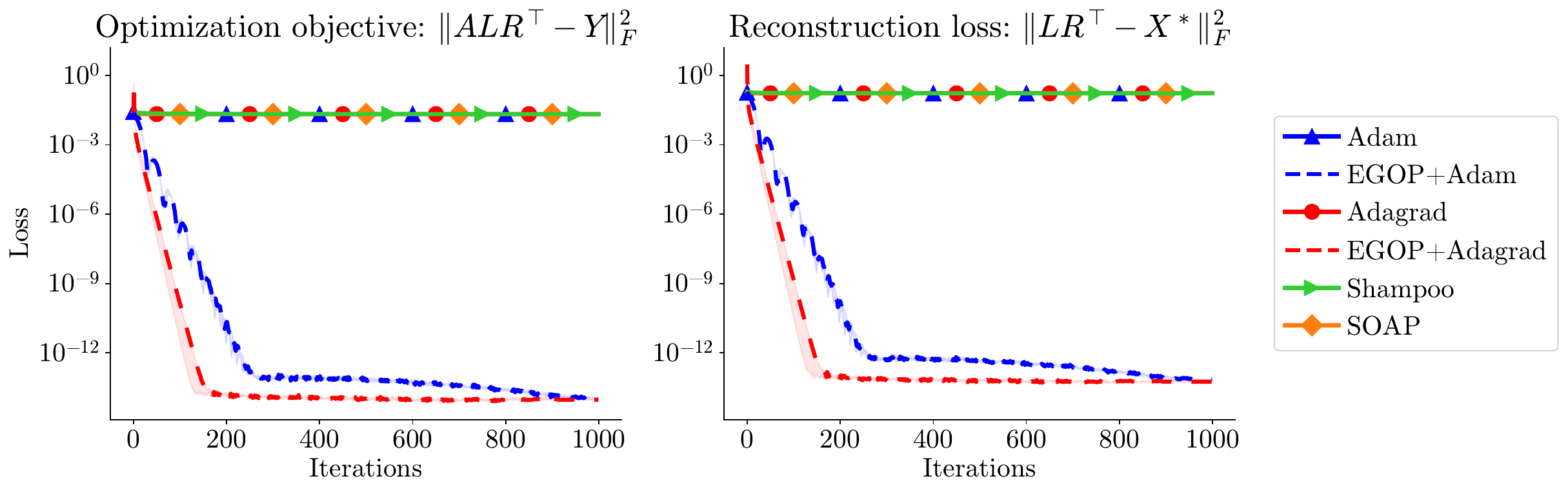}
    \caption{Comparing optimization algorithms on an instance of low-rank matrix factorization (defined in Section~\ref{sssec:LRMF}). Base Adam, base Adagrad, Shampoo, and SOAP all have similar performance, and thus \textbf{their traces are superimposed}. On the right, we validate that the improved optimization exhibited by EGOP-reparameterized Adam and Adagrad selects a good minimizer, and does not overfit to noise. For this objective, the Hessian blocks $\nabla_{L,L}^2 f(\theta)$  and $\nabla_{R,R}^2 f(\theta)$ always admit exact Kronecker product factorizations, making this setting well-suited to the constraints imposed by SOAP and Shampoo. Discussion and details in Section~\ref{sec:experimental-results}.}
    \label{fig:LRMF}
\end{figure*}

\paragraph{Change-of-Basis for Adaptive Algorithms} 
Recent works propose that using different orthonormal transformations with Adam and its variants can reduce computational costs and improve performance in neural networks \cite{maes2024understanding,vyas2024soap,  zhao2024galore}. Vyas et al. \cite{vyas2024soap} proposed a method called SOAP, which computes an orthonormal reparameterization based on the singular vectors of the matrix-valued network gradients and performs optimization in this basis. Vyas et al. \cite{vyas2024soap} empirically examine the performance of SOAP and find that in LLM pretraining it outperforms both Adam and Shampoo, a related preconditioning method \cite{gupta2018shampoo}. Zhao et al. \cite{zhao2024galore} propose GaLore, a method that simultaneously performs reparameterization and dimensionality reduction. GaLore computes a similar orthogonal basis to that used in SOAP, but instead of a full-dimensional change-of-basis GaLore retains only leading basis vectors in order to reduce dimension \cite{zhao2024galore}. Maes et al. \cite{maes2024understanding} empirically study Adam's rotational sensitivity and propose an orthonormal reparameterization, similar to those used by SOAP and GaLore; they show empirically that this can improve Adam's performance \cite{maes2024understanding}. 

EGOP reparameterization is related to SOAP, Shampoo, and GaLore because all three methods use spectral information of some notion of gradient covariance, but the properties of the EGOP reparameterization proposed in this work are fundamentally distinct from those of SOAP/Shampoo/GaLore. As we discuss in Section~\ref{sec:experimental-results}, the change of basis leveraged by those methods is constrained to have a Kronecker product structure, while our approach does not impose this constraint; in Figure~\ref{fig:EIV}, we demonstrate empirically that as a result, EGOP reparameterization outperforms SOAP and Shampoo in settings where the Hessian is not well-approximated by a Kronecker product. Furthermore, as shown in Figure~\ref{fig:LRMF},  we observe empirically that EGOP reparameterization can outperform SOAP and Shampoo even in settings where the Hessian \emph{is} well-approximated by a Kronecker product, suggesting that EGOP spectral decay is a powerful tool even when additional structure is present.

Outside of training neural networks, several works have considered data-driven dimensionality reduction methods for optimizing more general objectives with low-rank EGOP matrices \cite{cartis2024learning, cosson2023low}. These procedures target objectives with exact low-rank structure, while our method can improve convergence of adaptive algorithms even when the EGOP matrix has strong spectral decay but is full-rank.

\paragraph{EGOP Structure in Machine Learning}
    Increasing empirical evidence across a wide range of applications, including language modeling and image classification, suggests that empirical loss functions used for training machine learning models are approximately low-rank---meaning these functions vary strongly in only a small subset of directions in parameter space \cite{papyan2018full,sagun2017empirical,  zhang2024transformers}. This approximate low-rank structure can be detected and analyzed using \textit{active subspace methods}, which often leverage the \textit{EGOP matrix}, defined in Section~\ref{sec:EGOP-defn}  \cite{constantine2015active}. Recently, there has been growing interest in the EGOP matrix in machine learning research \cite{cui2020active, mallinar2024emergence,radhakrishnan2022mechanism,zhu2023catapults}. Radhakrishnan et al. \cite{radhakrishnan2022mechanism} provide theoretical and empirical evidence that the weights of neural networks trained with gradient descent correlate strongly with a kind of EGOP matrix.\footnote{In Mallinar et al. \cite{mallinar2024emergence} Radhakrishnan et al. \cite{radhakrishnan2022mechanism}, and Zhu et al. \cite{zhu2023catapults}, the EGOP is defined using the gradient with respect to the \emph{input data} instead of the optimization parameters.}

\subsection{Notation}\label{ssec:notation}

Given a vector $\theta \in \R^d$, we denote its $i$\ts{th} entry by $\theta(i)$. We denote the inner product on $\R^d$ by $\langle \cdot, \cdot \rangle$, and let $\shortnorm{\cdot}_p$ denote the vector $p$-norm on $\R^d$, with $\shortnorm{\theta}_{\infty} \defeq \max_{i}\abs{\theta(i)}$. Given a matrix $A \in \R^{m \times n}$, we write $\frobnorm{A}$ for its \emph{Frobenius} norm and $\opnorm{A} \defeq \sup_{\shortnorm{\theta}_2=1} \shortnorm{A\theta}_2$ for its \emph{operator} norm. Given a PSD matrix $H\in \R^{d\times d}$, we denote the norm $\shortnorm{\theta}_{H} \defeq \sqrt{\ip{\theta, H\theta}}$. For a matrix $A \in \R^{n\times m}$, we denote the vectorization of $A$ by  $\operatorname{vec}(A) \in \R^{nm}$.

We obtain guarantees in terms of the \textit{coordinate-wise smoothness constants} of the objective $f(\cdot)$. Following Jiang et al. \cite{jiang2024convergence} and Liu et al. \cite{liu2024adagrad}, we say that a function $f$ is \textit{coordinate-wise smooth} within a set $\Theta\subseteq \R^d$ with respect to constants $L_1,\dots,L_d > 0$ if 
$\forall \theta_1, \theta_2 \in \Theta$: 
\begin{equation}\label{eq:def-coordinate-wise-smoothness}
    \abs{f(\theta_1)-f(\theta_2) - \ip{\nabla f(\theta_2), \theta_1 - \theta_2}} \leq
    \frac{1}{2} \norm{\theta_1 - \theta_2}_{L}^2,
\end{equation}
where $L = \diag(L_1, \dots, L_{d})$.
\section{EGOP Reparameterization}\label{sec:EGOP-defn}

    Given a function $f:\R^d\rightarrow \R$ and a sampling distribution $\rho$, the expected gradient outer product of $f(\cdot)$ with respect to $\rho$ is defined as
    \begin{equation}\label{eq:def-EGOP}
        \EGOP(f) \defeq \mathbb{E}_{\theta\sim \rho}\left[\nabla f(\theta)\nabla f(\theta)^{\T}\right].
    \end{equation}
    As $f:\R^d\rightarrow \R$, the EGOP is a $d\times d$ symmetric matrix, and it is positive semidefinite because it is an expectation over PSD matrices. We denote its eigendecomposition by $\EGOP(f) = V \Lambda V^{\T}$ where $V\in \R^{d\times d}$ is an orthonormal matrix. The EGOP eigenbasis captures key geometric properties of the function $f(\cdot)$. When the sampling distribution $\rho$ is isotropic, the leading eigenvectors of the EGOP matrix capture the directions of greatest variation in $f(\cdot)$, whereas the eigenspaces of the smallest eigenvalues are directions along which $f(\cdot)$ does not vary strongly: this is reflected by the fact that for any eigenpair
    $(\lambda_i, v_i)$ of $\EGOP(f)$,
    $
        \lambda_i = \mathbb{E}_{\theta\sim\rho}[\langle \nabla f(\theta), v_i\rangle^2 ].
    $

    The EGOP matrix is defined with respect to a user-specified sampling distribution $\rho$.
    Our guarantees in  Section~\ref{sec:convergence-analysis} require that $\rho$ be isotropic and that its scale is sufficiently large with respect to the norm of some local minimum of $f(\cdot)$. In Section~\ref{sec:experimental-results}, we present empirical results when the EGOP is estimated with $\rho$ a standard Gaussian, and when $\rho$ is defined by common neural network initialization distributions \cite{glorot2010understanding}.
    
    In this work, we compare how adaptive optimization algorithms perform when optimizing $f(\cdot)$ versus the reparameterized function $\tf:\R^d\rightarrow \R$ defined $\tf(\ttheta)\defeq f(V\ttheta)$. For any $f$, the objective $\tf(\cdot)$ can be approximated by empirically estimating the EGOP via Monte Carlo sampling and forming the eigenvectors of the empirical EGOP matrix, as summarized in~\cref{alg:meta-algorithm-block}. Note that any solution $\ttheta$ obtained by optimizing $\tf(\cdot)$ can be transformed into a solution in original coordinates as $\theta \defeq
    V \ttheta$, which satisfies $f(\theta) = \tf(\ttheta)$.
    
    \begin{algorithm}
    \caption{Reparameterization by EGOP Eigenbasis}
    \label{alg:meta-algorithm-block}
    \begin{algorithmic}[1]
        \State {\bfseries Input:} $M$ number of gradient samples, distribution $\rho$.
        \State Generate $\{\theta_i\}_{i=1}^M \sim \rho$ i.i.d.
        \State Form empirical EGOP $\hat{P} = \frac{1}{M}\sum_{i=1}^M \nabla f(\theta_i)\nabla f(\theta_i)^\T$
        \State Form eigendecomposition $V\Lambda V^\T = \hat{P}$
        \State Define function $\tilde{f}(\cdot) = f\circ V$
        \State Optimize $\tf(\cdot)$ with adaptive algorithm of choice.
        \State {\bfseries Return:} $\ttheta \in \R^d$ the result of optimizing $\tf(\cdot)$.
    \end{algorithmic}
    \end{algorithm}

\section{Convergence Guarantees under EGOP Reparameterization}\label{sec:convergence-analysis}

In this section, we show that for objectives with strong EGOP spectral decay and Lipschitz Hessians, there is some neighborhood around each local minima such that EGOP reparameterization improves Adagrad's convergence guarantees within that neighborhood. Our results show that the radius of this neighborhood scales inversely with the Lipschitz constant of the Hessian. We analyze Adagrad's convergence in both convex and nonconvex settings, focusing
on the case of exact gradients for simplicity (see~\cref{alg:Adagrad} for a precise statement). In the convex setting, we study Adagrad constrained to set $\Theta$, which uses the following update rule:
\begin{equation}
    \label{eq:adagrad}
    \tag{\texttt{Adagrad}}
    \theta_{t+1} = \Pi_{\Theta}^{H_t}(\theta_t - \eta H_{t}^{-1} \grad f(\theta_t)), \;\;
    H_t := \diag\Big(
        \epsilon^2 I_{d} + \sum_{j \leq t} \grad f(\theta_j) \grad f(\theta_j)^{\T}
    \Big)^{1/2},
\end{equation}
where $\Pi_{\Theta}^{H}$ denotes the orthogonal projection onto set $\Theta$ under the
metric induced by $H \succ 0$.

Our results relate the improvement obtained through EGOP reparameterization to the \textit{stable rank}\footnote{We refer to this quantity the stable rank because of the connection to the stable rank considered in numerical linear algebra; the ratio in (\ref{eq:def-stable-rank}) is related to $\norm{G}_*/\opnorm{G}$, where $G$ denotes the empirical gradient bundle matrix $[\nabla f(\theta_1),\dots,\nabla f(\theta_M)]\in \R^{d\times M}$ and $\norm{\cdot}_*$ denotes the nuclear norm. This ratio is often referred to in literature as the \textit{stable} or \textit{effective rank} of $G$ \cite{chou2024gradient,rudelson2007sampling}.} of $f$:
\begin{equation}\label{eq:def-stable-rank}
    {\sr}(f) \defeq \frac{\sum_{i=1}^d \sqrt{\lambda_i(\EGOP(f))}}{\sqrt{\lambda_{\max}(\EGOP(f))}}.
\end{equation}
Functions with strong EGOP spectral decay will have $\sr(f)\ll d$, with $\sr(f)$ tending towards $1$ as spectral decay increases.

We now introduce the main assumptions for our theoretical results. Setting the stage,
we consider a twice-differentiable objective $f:\R^d\rightarrow \R$, a sampling distribution $\rho(\cdot)$ and fix a local minimum $\theta^*$. We consider constraint set $\Theta\subseteq \R^d$ and denote its $\ell_2$ diameter by
$
    \diam(\Theta).
$

\begin{assumption}\label{assumption:rho-and-Theta}
    The set $\Theta$ is convex, and the sampling distribution $\rho(\cdot)$ has support contained in $\Theta$ and is an isotropic distribution centered at some $\theta_c\in \Theta$ with scale $c$. This implies $\mathbb{E}_\rho[\theta] = \theta_c$ and 
    \[
        \mathbb{E}_{\rho}[(\theta-\theta_c)(\theta-\theta_c)^\T] = c^2\mathbb{I}.
    \]
    Moreover we assume $\Theta$ contains $\theta^*$ some local minimum of $f(\cdot)$ such that $\norm{\theta_c-\theta^*}_2 \leq c$.
\end{assumption}

Following the definitions introduced in Section~\ref{sec:EGOP-defn}, we let $V$ denote the eigenbasis of the EGOP matrix and let $\tf\defeq f\circ V$. Let $\tTheta$ denote the corresponding transformation of set $\Theta$: $\tTheta\defeq \{V^\T \theta \mid \theta\in \Theta\}$. Let $\{L_i\}_{i=1}^d$ denote the values for which $f(\cdot)$ is coordinate-wise smooth within $\Theta$, following the definition in Eq.~\ref{eq:def-coordinate-wise-smoothness}. We denote the vector of these coordinate-wise smoothness constants by $\vec{L}\in \R^d$.  Let $\{\tilde{L}_i\}_{i=1}^d$ denote the analogous coordinate-wise smoothness constants of $\tf(\cdot)$ within $\tTheta$. 

To measure the density of the eigenvectors of $\EGOP(f)$, we introduce the values $\beta_k$: for $v_k$ the $k$-th eigenvector of $\EGOP(f)$, let $\beta_k \defeq \norm{v_k}^2_1/d.$ Note that $\forall k\in [d]$, $\beta_k \in [1/d, 1]$ with $\beta_k=1/d$ if $v_k$ is one-hot, and $\beta_k=1$ if $v_k$ is uniformly dense.

Our second assumption requires that the Hessian of $f(\cdot)$ be Lipschitz within $\Theta$ and that its Lipschitz constant be suitably bounded.
\begin{assumption}\label{assumption:H-Lipschitz}
    The Hessian of $f(\cdot)$ is $H$-Lipschitz within $\Theta$: $\exists H\in \R$ such that
    $\forall \theta_1, \theta_2 \in \Theta$, $\opnorm{\nabla^2 f(\theta_1)-\nabla^2 f(\theta_2)} \leq H \norm{\theta_1-\theta_2}_2 \ 
    $
    and $\exists \delta \in [0,\beta^2_1)$ such that $H$ satisfies
    \begin{equation}\label{eq:H-bound}
        H \leq \frac{\sqrt{\delta \lambda_1(\EGOP) + \diam(\Theta)^2 \lambda^2_1(\nabla^2 f(\theta^*))} - \diam(\Theta) \lambda_1(\nabla^2 f(\theta^*))}{\diam(\Theta)^2}
    \end{equation}
    
\end{assumption}

Assumption~\ref{assumption:H-Lipschitz} quantifies how small the Lipschitz constant of the Hessian must be, i.e. how slowly the curvature of $f(\cdot)$ can change within $\Theta$, in order for the results in \cref{lemma:convex-cvgnce-ball-version} to apply, and its particular form derives from the analysis. 
Critically, $H$ is not an input to the algorithm, and Adagrad in the new coordinate system is guaranteed to converge even when Eq.~\ref{eq:H-bound} does not hold, albeit without the improved convergence bound from \cref{lemma:convex-cvgnce-ball-version}.

Under the above assumptions, we show that EGOP spectral decay governs the improvements conferred
by EGOP reparameterization. We first consider the convex setting. For simplicity, we instantiate these guarantees for constraint set $\Theta$ a ball; in practice, adaptive algorithms are often deployed with weight decay, which bounds the norm of the optimization parameters and effectively constrains $\theta$ to some ball.

In Section~\ref{ssec:cvx-cvgnce}, we state guarantees for generic convex constraint sets.
\begin{theorem}[Convergence for convex objectives with noise-free gradients]\label{lemma:convex-cvgnce-ball-version}
    Consider $f(\cdot)$ a convex objective, constraint set $\Theta$ a ball, and sampling distribution $\rho(\cdot)$ satisfying Assumptions \ref{assumption:rho-and-Theta} and \ref{assumption:H-Lipschitz}, and $\tf(\cdot)$, $\tTheta$ the EGOP-reparameterized objective and constraint set, respectively.  Running Algorithm~\ref{alg:Adagrad} with constrained updates and inputs $(\tf(\cdot), \nabla \tf(\cdot),\ttheta_0,T, \epsilon, \tTheta)$ for any initial condition $\ttheta_0\in \tTheta$ produces iterates $\{\ttheta_t\}_{t=1}^T$ satisfying
    \begin{equation}
        \begin{aligned}
            \frac{1}{T}\sum_{t=0}^{T-1}\tf(\ttheta_t) - \tf(\ttheta^*) &\\
            \lesssim
            \bigg(\eta + &\frac{\diam(\Theta)^2}{\eta}\bigg)^2 \frac{\shortnorm{\vec{L}}_1}{T}\left(\frac{\sr(f)\sqrt{1+\delta}}{d(\beta_1-\delta)} + \frac{\sqrt{\delta(1+\delta)}}{\beta_1-\delta} \right)
         + \frac{\epsilon\diam(\Theta)^2}{\eta T}.
        \end{aligned}
        \label{eq:cvx-reparam-bounds-ball-version}
    \end{equation}
\end{theorem}

We compare the above bound with the convergence guarantee for Adagrad in original coordinates initialized at any $\theta_0 \in \Theta$, established by Liu et al.\footnote{The inequality in \eqref{eq:cvx-og-coors-bound} is an intermediate result in the proof of Theorem 4.1 in Liu et al. \cite{liu2024adagrad}.} \cite{liu2024adagrad}:
\begin{equation}\label{eq:cvx-og-coors-bound}
    \frac{1}{T}\sum_{t=0}^{T-1}f(\theta_t) - f(\theta^*))\leq \left(\eta + \frac{\diam(\Theta)^2}{\eta}\right)^2\cdot \frac{\shortnorm{\vec{L}}_1}{T} + \frac{\epsilon \diam(\Theta)^2}{\eta T}.
\end{equation}
When $f(\cdot)$ has strong EGOP spectral decay and dense leading eigenvectors, \cref{lemma:convex-cvgnce-ball-version} demonstrates EGOP reparameterization can significantly improve Adagrad's regret bounds. In these settings, $\sr(f)/(\beta_1 d) = O(1/d)$, implying stronger guarantees over original coordinates by up to a factor of $d$; the term $\shortnorm{\vec{L}}_1$ appearing in both Eq.~\ref{eq:cvx-reparam-bounds-ball-version} and \ref{eq:cvx-og-coors-bound} is the same, and denotes the sum of the coordinate-wise smoothness constants of $f(\cdot)$ in original coordinates. We defer the proof of \cref{lemma:convex-cvgnce-ball-version} to Section~\ref{sec:deferred-proofs}.

In Section~\ref{sec:EGOP-spectral-decay}, we show that the two properties of the EGOP emphasized in this result, namely low stable rank and dense leading eigenvectors, are satisfied empirically for benchmark objectives. The factor $\beta_1$ tends towards $1$ as the leading EGOP eigenvector gets denser. For the guarantee to reflect a benefit from reparameterization, it suffices to have $\beta_1 \gg 1/d$ and small stable rank. We note that the density condition $\beta_1 \gg 1/d$ is satisfied with high probability for random unit vectors in high dimensions: in particular, for $v\sim\textrm{Unif}(S^{d-1})$ where $S^{d-1}$ is the unit sphere in $\R^d$, with high probability $\norm{v}^2_1/d > 0.6$.   For simplicity, the above guarantee only considers the density of $v_1$, but generalized guarantees in terms of $\beta_k$ can be obtained. 
Similarly, one can generalize \cref{lemma:convex-cvgnce-ball-version} for approximate versions of the EGOP eigenbasis; such guarantees will scale with the subspace distance between the EGOP eigenbasis and its approximation. We note that numerical stability parameter $\epsilon$ is typically chosen small enough that the second term, $\epsilon\diam(\Theta)^2/(\eta T)$, is not the dominant term.

Many naturally-motivated objectives in machine learning have locally Lipschitz Hessians, including loss functions used in logistic regression, over-parameterized matrix factorization, and training of multilayer linear networks; see Section~\ref{ssec:ML-lipschitz-Hessians} for examples.

In the non-convex setting, we study the convergence of unconstrained Adagrad (Algorithm~\ref{alg:Adagrad} with unconstrained updates). We show there is some neighborhood around each local minima such that while iterates remain in this neighborhood, EGOP-reparameterized Adagrad enjoys local convergence bounds that are stronger than those in original coordinates by a factor of $\sr(f)/d\beta_1$ (see \cref{lemma:nonconvex-cvgnce}). Similarly to the convex setting, the radius of these neighborhoods grows as the Lipschitz constant of the Hessian decreases.

\section{EGOP Spectral Decay in Machine Learning}\label{sec:EGOP-spectral-decay}

\begin{figure}[t!]
  \begin{center}
    \includegraphics[width=0.6\linewidth]{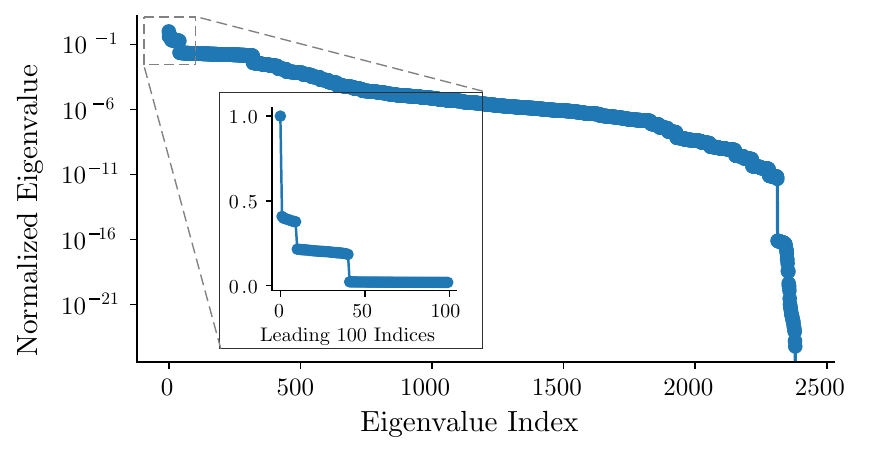}
  \end{center}
  \caption{The EGOP eigenspectrum of a 2-layer ReLU network on the UCI handwritten digits dataset. Plot shows ratio $\lambda_k/\lambda_1$ as a function of eigenvalue index $k$, indexed in decreasing order. The EGOP eigenvalues decay sharply. In Section~\ref{ssec:sup-figs-spectral-decay} of the supplementary material, we demonstrate that these results are robust to the choice of sampling distribution $\rho$.}\label{fig:tinyMNIST-global-spectral-decay}
\end{figure}

Our analysis in Section~\ref{sec:convergence-analysis} shows EGOP spectral decay and dense leading EGOP eigenvectors are sufficient for EGOP reparameterization to improve convergence guarantees for Adagrad. Here we present empirical evidence that these conditions occur in benchmark machine learning objectives and discuss why natural data may produce EGOP spectral decay in real-world problems.

Figure~\ref{fig:tinyMNIST-global-spectral-decay} shows the EGOP eigenspectrum for the objective function $f(\cdot)$ from an image classification problem. We use a 2-layer ReLU neural network to predict 10-class probabilities for handwritten digit images from the UCI ML digits dataset \cite{optical_recognition_of_handwritten_digits_80}. Here $f(\cdot)$ denotes the negative log-likelihood loss on the training data. Figure~\ref{fig:tinyMNIST-global-spectral-decay} shows strong EGOP spectral decay. We plot $\lambda_k/\lambda_1$ for all EGOP eigenvalues $\lambda_k$ on a logarithmic scale, while the inset zooms in on the leading 100 indices on a linear scale. These plots illustrate roughly exponential eigenvalue decay. In Section~\ref{ssec:sup-figs-spectral-decay}, we show these trends are robust to the choice of sampling distribution $\rho$, and that decay occurs in other benchmark datasets. In Section~\ref{ssec:sup-figs-spectral-decay} we also examine the density of the leading EGOP eigenvectors for a 2-layer ReLU network on the UCI digits dataset and show that $\beta_k \gg1/d$ for the leading eigenvectors.

    \subsection{Natural Data Induces EGOP Spectral Decay}

    In addition to empirical evidence, simple gradient calculations suggest natural data may induce EGOP spectral decay in machine learning problems. Many common objectives  can be expressed as $f(\theta) = h(A\theta)$, where $h(\cdot)$ is a loss function and $A \in \R^{n\times d}$ is a data matrix whose rows comprise samples $a_i \in \R^d$. By the chain rule, the EGOP for such objectives satisfies
    \begin{equation}
        \label{eq:EGOP-chain-rule-main}
        \EGOP(f) = A^\T \mathbb{E}_{\theta\sim \rho}\left[\nabla_\theta h(A\theta) \nabla_\theta h(A\theta)^\T\right] A
    \end{equation}
    where $\nabla_{\theta}h(A\theta)$ denotes the gradient $\nabla h(\cdot)$ evaluated at $A\theta$. This expression shows the EGOP is the transformation of some PSD matrix $M$ by $A^\T M A$. It suggests that strong spectral decay in $A$ may induce eigenvalue decay in the EGOP matrix. For many naturally occurring data distributions, the singular values of $A$ exhibit strong decay \cite{udell2019big}. The inner PSD matrix in the right hand side of Eq.~\ref{eq:EGOP-chain-rule-main} depends on both $A$ and $h(\cdot)$, so without further assumptions on $h(\cdot)$ it is difficult to precisely characterize the spectral decay induced by the composition with $A^\T$ and $A$, but this can be quantified for specific choices of $h(\cdot)$; see Section~\ref{ssec:spectral-decay-proofs} for examples.
\section{Heuristics for Scalability}\label{sec:efficient-heuristics}

Reparameterization with the EGOP eigenbasis incurs three main sources of additional computation: (1) sampling gradients to estimate the EGOP, (2) forming the EGOP eigenbasis, and (3) storing and applying the change-of-basis matrix to compute values and gradients of $f\circ V$. We outline some implementation details and heuristics that reduce the computational cost and enhance the scalability of the proposed framework.

Empirically, we find that for functions with strong spectral decay, it suffices to accurately estimate only the leading EGOP eigenvectors. Based on this finding, one can use techniques such as randomized SVD \cite{halko2011finding} to form $V_r \in \R^{d\times r}$, a matrix whose columns contain the estimated $r$ leading eigenvectors of the EGOP. To optimize over the full parameter space when estimating only the leading eigenvectors, we define a procedure called \textit{auxiliary variable} EGOP reparameterization. In this procedure, we use parameters $\ttheta_r$ to optimize over the span of $V_r$, and we introduce auxiliary parameters $\ttheta_d \in \mathbb{R}^d$ to optimize over the orthogonal complement. This method then represents vectors $\theta \in \R^d$ in full parameter space as
\begin{equation}\label{eq:aux-reconstruction}
    \theta = V_r \ttheta_r + (I - V_r V_r^T) \ttheta_d.
\end{equation}
The explicit projection $(I - V_r V_r^T)$ applied to $\ttheta_d$ ensures the second term remains in $\text{span}(V_r)^\perp$ throughout optimization. This is less expensive than forming and storing  $(V_r)^\perp \in \R^{d \times (d-r)}$, and the projection $(I - V_r V_r^T)$ can be applied without explicitly forming a $d\times d$ matrix, making this computationally efficient. For small values of $r$, storing $V_r$ is far less expensive than storing the full $V\in \R^{d\times d}$ matrix. In Section~\ref{ssec:heuristic-EGOP-experiments}, we show empirically that using this procedure with even small values of $r \ll d$ suffices to improve convergence of adaptive algorithms. For full details and pseudocode for auxiliary variable EGOP reparameterization, see Section~\ref{sec:reduced-egop}. 

For functions with strong spectral decay, we find that empirically EGOP reparameterization performs well when the number of gradient samples $M$ used to estimate the EGOP eigenbasis satisfies $M \approx d$, for $d$ the ambient dimension. Existing results (e.g., Corollary 3.10 in \cite{constantine2015active}) formally establish that for functions whose EGOP matrices have large spectral gaps, the number of gradient samples required to obtain an $\epsilon$-accurate estimate of the leading eigenspace scales like $\log(d)/\epsilon^2$. When using heuristics like auxiliary variable EGOP reparameterization, it suffices to approximate only leading eigenspaces; in Section~\ref{ssec:heuristic-EGOP-experiments}, we show that auxiliary variable EGOP reparameterization can confer a benefit even when using $M = 0.01 d$ gradient samples to estimate the leading EGOP eigenspace.

Structured approximations of $V$ can also reduce the cost of storing and applying the change of basis. Here we describe one such approximation, which we call \textit{block reparameterization}; given a user-specified number of blocks $L$, let $S_1,\dots,S_L$ denote a disjoint partition of the indices $\{1,\dots,d\}$. Let $\theta(S_\ell) \in \R^{|S_\ell|}$ be the vector whose entries correspond to the elements of $\theta$ with indices in $S_\ell$. In block reparameterization, for each subset $S_\ell$, one obtains a separate  change-of-basis matrix $V^{(\ell)} \in \R^{\abs{S_\ell} \times \abs{S_\ell}}$ via the eigenvectors of the \emph{block} EGOP matrix, 
\[
    \EGOP^{(\ell)} \defeq \frac{1}{M}\sum_{k=1}^M \nabla_{S_\ell} f(\theta_k)\nabla_{S_\ell} f(\theta_k)^\T
\]
where $\nabla_{S_\ell} f(\theta_k) \in \R^{|S_\ell|}$ is the vector of partial derivatives of $f$ with respect to the entries in $S_\ell$, and the points $\{\theta_k\}_{k=1}^M$ are sampled i.i.d. from $\rho$. Block reparameterization is well-suited to multilayer neural networks, where each layer forms a parameter block; such choices are well-motivated by empirical observations about the block-diagonal structure of the Hessian in neural networks \cite{dong2025towards}. Cost can be further reduced by only reparameterizing a subset of blocks. We present experiments employing block reparameterization in Section~\ref{ssec:heuristic-EGOP-experiments}.

When employing block reparameterization at the layer level, we consider related heuristics for reducing storage cost; this heuristic applies to layers that admit a further decomposition into parameter subgroups, such as kernels in a convolutional layer. In this setting, the same reparameterization matrix $V$ can be shared across parameter subgroups with compatible dimensions within a layer. In our experiments in Section~\ref{ssec:main-body-large-scale}, we apply this heuristic to convolutional layers by treating each kernel as a parameter subgroup and using a single matrix $V$, shared across all kernels in a single convolutional layer. This approach substantially reduces storage requirements for wide and deep neural networks. Although sharing $V$ imposes a constraint on the change-of-basis matrix\footnote{We note that while this construction is equivalent to employing a Kronecker-product reparameterization matrix $V$, the effective Kronecker structure differs from that imposed by SOAP/Shampoo. See Section~\ref{ssec:CNN-sup-details} for details. Moreover, our experiments in Section~\ref{ssec:main-body-large-scale} show that EGOP reparameterization employing this heuristic outperforms SOAP in training deep residual networks.}, we show in Section~\ref{ssec:main-body-large-scale} that EGOP reparameterization with this heuristic can still yield significant performance improvements over methods trained in the original coordinates.

One natural extension of \cref{alg:meta-algorithm-block} is to consider periodically re-estimating the EGOP, using a modified sampling distribution $\rho$ which has been re-centered at the latest optimization iterate. This allows one to recompute a change-of-basis matrix based on local geometry, and may be well-suited to problems with highly heterogeneous structure. In Section~\ref{ssec:main-body-large-scale}, we report experimental results using periodic EGOP reparameterization for a large-scale, non-convex optimization problem, and find that it can offer further benefits over a single up-front estimation of the EGOP eigenbasis. However, we do find that throughout our experiments, the up-front, single-time reparameterization proposed in Algorithm~\ref{alg:meta-algorithm-block} is sufficient to improve convergence, even for objectives where the Hessian is \textit{not} Lipschitz. See Section~\ref{sec:experimental-results} for details.

\section{Experimental Results}\label{sec:experimental-results}

We examine the impact of EGOP reparameterization in a variety of machine learning optimization problems. We include comparisons with (S)GD, (S)GD with momentum, SOAP, and Shampoo. In Section~\ref{ssec:full-EGOP-experiments}, we examine the effect of EGOP reparameterization for Adagrad and Adam, using the full method described in Algorithm~\ref{alg:meta-algorithm-block}. In Section~\ref{ssec:heuristic-EGOP-experiments},  we present results employing the heuristics described in Section~\ref{sec:efficient-heuristics}. 


When comparing performance in original coordinates versus under reparameterization, we always choose equivalent initializations. For example, when using the full method described in Algorithm~\ref{alg:meta-algorithm-block}, when a method in original coordinates is initialized at $\theta_0$, chosen randomly, the reparameterized method is initialized at $V^\T\theta_0$. Full experimental details are in Section~\ref{sec:experimental-details}.

\subsection{Full EGOP Reparameterization}\label{ssec:full-EGOP-experiments}

In this section, we examine the impact of EGOP reparameterization, as described in Algorithm~\ref{alg:meta-algorithm-block}. 

\begin{figure*}
    \centering
    \begin{subfigure}[t]{0.325\textwidth}
        \centering
        \includegraphics[width=\linewidth]{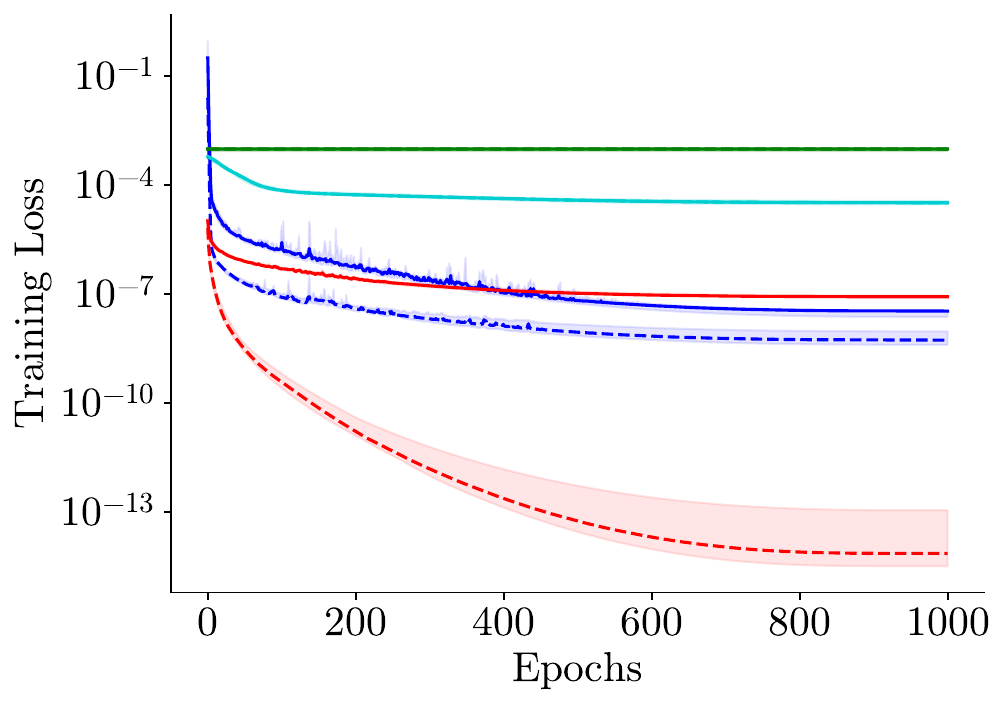}
        \caption{\scalebox{0.85}{Training loss over epochs}}
        \label{fig:linear-layers-global-reparam-loss-vs-epochs}
    \end{subfigure}
    \begin{subfigure}[t]{0.32\textwidth} 
        \centering
        \includegraphics[width=\linewidth]{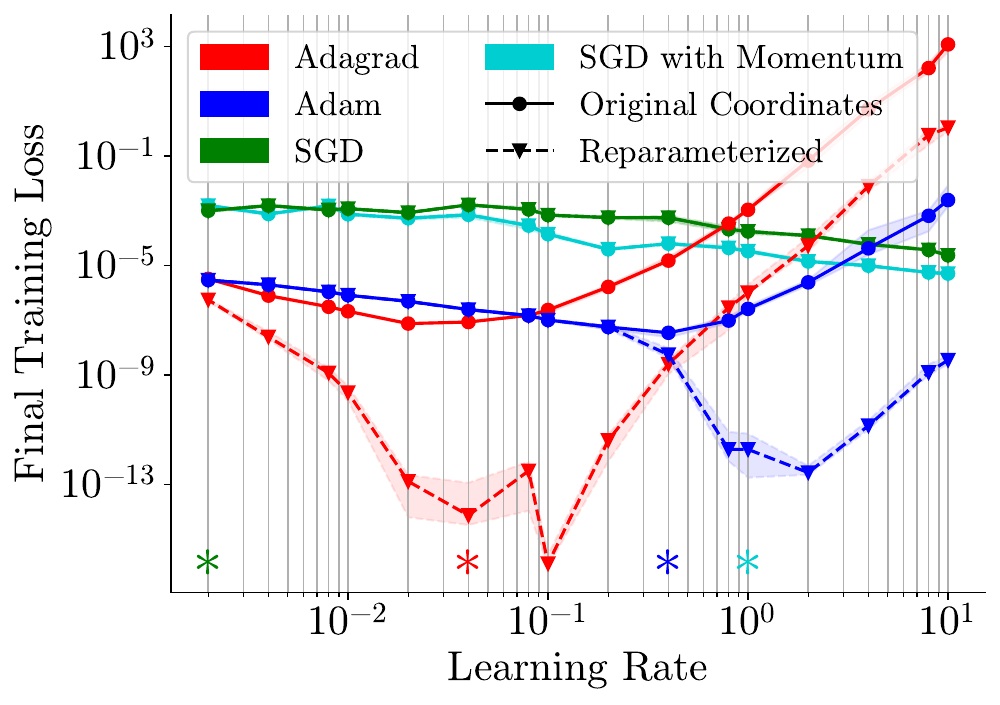}
        \caption{\scalebox{0.85}{Training loss over learning rates}}
        \label{fig:linear-layers-global-reparam-loss-vs-LR}
    \end{subfigure}
    \begin{subfigure}[t]{0.32\textwidth} 
        \centering
        \includegraphics[width=\linewidth]{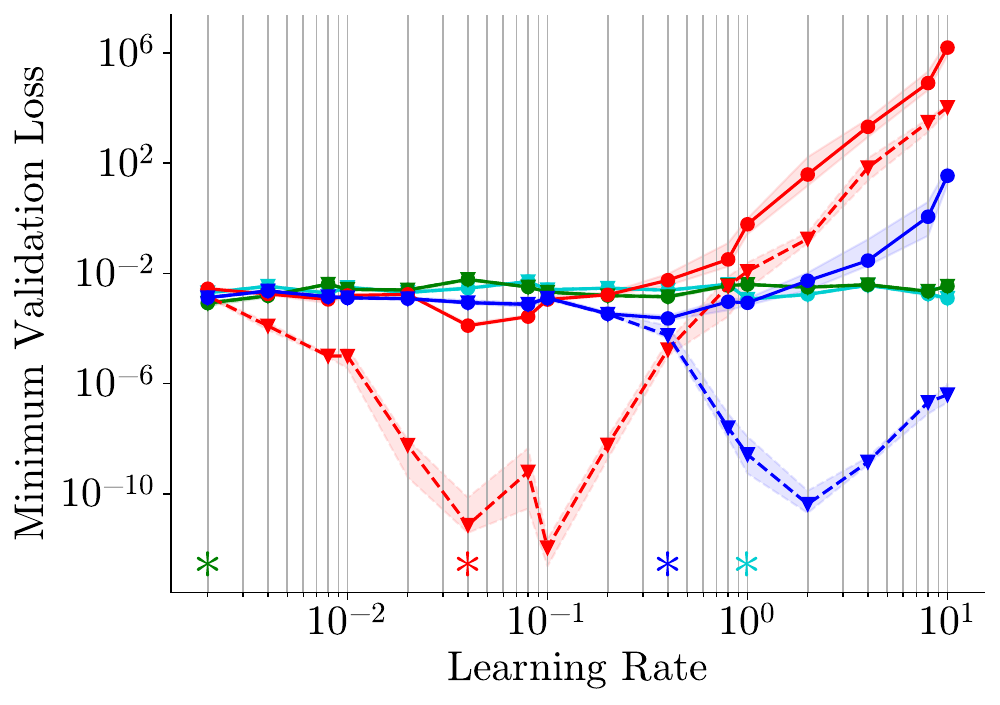}
        \caption{\scalebox{0.85}{Validation loss by learning rate}}
        \label{fig:linear-layers-global-reparam-valloss-vs-LR}
    \end{subfigure}
    \caption{Training multilayer linear networks (\ref{eq:linear-feedforward-objective}). Both SGD and SGD with momentum are equivariant optimization methods, so their results in original and reparameterized coordinates are exactly superimposed. In Figure~\ref{fig:linear-layers-global-reparam-valloss-vs-LR} we consider the minimum validation loss achieved over epochs during training. Results are aggregated over 10 independent trials, with traces showing medians and shading indicating 25th-75th quartile. Asterisks indicate the learning rate used for each method in Figure~\ref{fig:linear-layers-global-reparam-loss-vs-epochs}. Learning rates chosen to minimize validation loss of the algorithm in \textit{original} coordinates.
    }
    \label{fig:global-reparam-linear-layers}
\end{figure*}

\subsubsection{Linear Feedforward Networks} We consider EGOP reparameterization for training 3-layer fully-connected linear feedforward networks using synthetic data. We define parameters 
$
    \theta = [\textrm{vec}(W_1), \textrm{vec}(W_2), \textrm{vec}(W_3)]
$
and train by minimizing loss function
\begin{equation}\label{eq:linear-feedforward-objective}
    f(\theta) = \frobnorm{W_3 W_2 W_1 A - Y}^2 /\nsamples
\end{equation}
where $A\in \R^{10\times \nsamples}$, and $Y = M^* A$ for $M^*\in \R^{10\times 10}$ drawn from a standard Gaussian distribution. We use $W_1 \in \R^{50\times 10}$, $W_2 \in \R^{30\times 50}$, $W_3 \in \R^{10\times 30}$. We induce spectral decay in $\EGOP(f)$ by generating $A$ with singular values $\sigma_k(A) = k^{-2}$ and random singular vectors. We use mini-batch stochastic gradient samples throughout. For full details, see Section~\ref{sec:experimental-details}. 



Figure~\ref{fig:linear-layers-global-reparam-loss-vs-epochs} displays training loss over epochs obtained by algorithms in original coordinates and under EGOP reparameterization. Reparameterized Adagrad and reparameterized Adam achieve significantly lower final training loss and faster convergence than their counterparts in original coordinates. The adaptive methods also outperform the equivariant methods (SGD and SGD with momentum) in both coordinate settings. Figure~\ref{fig:linear-layers-global-reparam-loss-vs-LR} demonstrates the robustness of these results across learning rates. 
Figure~\ref{fig:linear-layers-global-reparam-valloss-vs-LR} confirms that the improved minimization of the training loss enabled by reparameterization does not lead to over-fitting; it shows that EGOP-reparameterized  adaptive methods achieve better performance on validation data.

In Section~\ref{ssec:sup-figs-main-experiments}, we evaluate heuristics from Section~\ref{sec:efficient-heuristics} on the same linear network training task. We compare the results in Figure~\ref{fig:global-reparam-linear-layers}, which use full EGOP reparameterization, to performance using block reparameterization and to performance when only the first layer of the network is reparameterized. Block-reparameterized Adagrad and Adam achieve lower training loss than their counterparts in original coordinates. For Adagrad, reparameterizing only the first layer yields a benefit comparable to reparameterizing all layers, whereas for Adam, reparameterizing the first layer alone provides only marginal benefit over original coordinates.

\subsubsection{Regression with Errors-in-Variables} In this section, we empirically demonstrate one distinction between our proposed reparameterization and the related methods SOAP and Shampoo; EGOP reparameterization exploits approximate low-rank loss function geometry, while SOAP and Shampoo target Kronecker-product structure in the Hessian.

Shampoo and SOAP are designed for optimization problems where the variable has a natural matrix structure (e.g., a neural network weight matrix), and thus the gradient is also matrix-valued. Their change-of-basis derives from the eigenvectors of the Gram matrices of these gradient matrices. When mapped back to a vector representation (i.e., the vectorization of the matrix parameter and the setting of our EGOP-based approach), the corresponding SOAP/Shampoo change-of-basis matrix is restricted to a Kronecker product form. As a result, Shampoo and SOAP are hypothesized to perform best when the Hessian admits a close Kronecker-product approximation. Figure~\ref{fig:EIV} presents evidence supporting this hypothesis.

%
\begin{figure}[t]
    \centering
    \includegraphics[width=\linewidth]{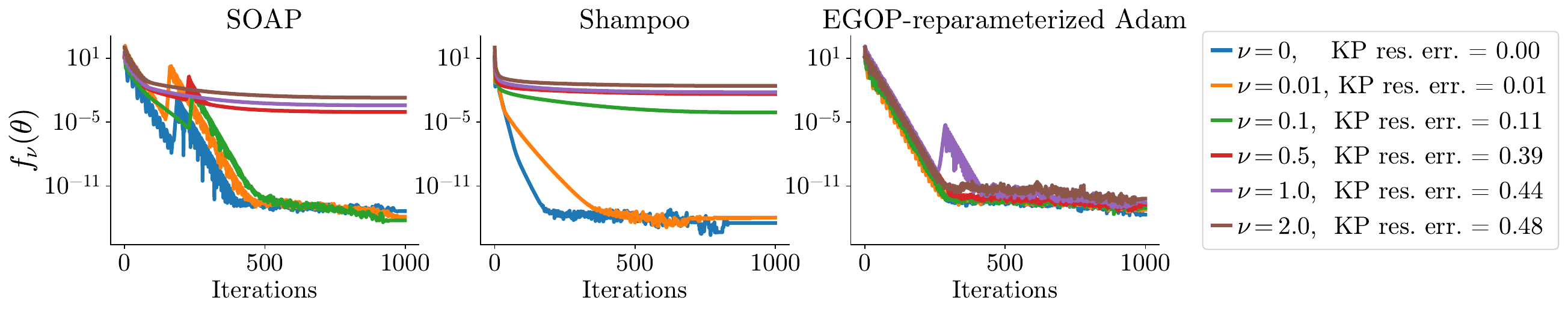}
    \caption{Comparing the impact of Hessian Kronecker-product (KP) structure on EGOP reparameterization, SOAP, and Shampoo, for objective $f(\theta)$ defined in \eqref{eq:EIV-objective}. SOAP and Shampoo both exhibit worse convergence as the Hessian KP approximation worsens. We quantify Hessian KP structure via the KP residual error, defined in \eqref{eq:def-KP-error}. EGOP-reparameterized Adam's convergence is robust to this structure.}
    \label{fig:EIV}
\end{figure}

We consider an objective that allows smooth interpolation between regimes when the Hessian admits an exact Kronecker-product structure, and cases where the Hessian is poorly approximated by such a factorization. For parameters $\theta = \textrm{vec}(W)$, $W \in \R^{n\times m}$, we consider
\begin{equation}\label{eq:EIV-objective}
    f_{\nu}(\theta) = \frac{1}{2} \norm{A(M_\nu\odot W) - Y}^2_F.
\end{equation}
Here, $M_\nu\in \R^{n\times m}$ is a fixed, known multiplicative noisy mask parameterized by $\nu\in \R$, $A \in \R^{n\times n}$ is the measurement map, and $Y = A(M_\nu\odot W^*)$ for some unknown $W^* \in \R^{n\times m}$. This models, for instance, regression problems based on data from sensors with well-characterized but potentially non-homogeneous gain. 

Figure~\ref{fig:EIV} displays $f(\cdot)$ by iteration for different instances of $M_\nu$. We draw the entries of $M_\nu$ i.\-i.d. from $\mathcal{N}(1, \nu^2)$, where different problem instances use different $\nu$. When $\nu = 0$, $M_\nu$ is all ones, and the Hessian is exactly Kronecker-structured. As $\nu$ increases, the Kronecker structure of the Hessian degrades. We quantify this via Kronecker product residual error: 
\begin{equation}\label{eq:def-KP-error}
    \operatorname{KP\ residual}(f) = \min_{A\in \R^{n\times n}, B\in \R^{n\times n}}\norm{\nabla^2 f(\theta) - A \otimes B}_F/\norm{\nabla^2 f(\theta)}_F.
\end{equation}

Figure~\ref{fig:EIV} (left, center) shows that SOAP and Shampoo performance degrade as the Hessian $\nabla^2 f_\nu(\cdot)$ becomes farther from Kronecker-product structured. In contrast, EGOP-reparameter\-ized Adam Figure~\ref{fig:EIV} (right) maintains linear convergence across all problem instances. In these experiments we allow SOAP and Shampoo to recompute their change-of-basis matrix on \textit{every iteration}, whereas for EGOP reparameterization we compute a single, up-front change-of-basis matrix following Algorithm~\ref{alg:meta-algorithm-block}. Even with this advantage, SOAP and Shampoo do not outperform EGOP-reparameterized Adam for $\nu \geq 0.5$.

\subsection{Examining Heuristics for EGOP Reparameterization}\label{ssec:heuristic-EGOP-experiments}

In this section, we empirically examine the impact of the computational heuristics for EGOP reparameterization presented in Section~\ref{sec:efficient-heuristics}. We conduct experiments on low-rank matrix factorization, 2-layer feedforward ReLU networks, and deep residual networks. All of these experiments use \textit{block reparameterization}, defined in Section~\ref{sec:efficient-heuristics}. In experiments with low-rank matrix factorization, parameter blocks correspond to the separate matrix factors. In experiments with neural networks, parameter blocks correspond to the parameters of different network layers.

\subsubsection{Low-rank matrix factorization}\label{sssec:LRMF} Here we discuss the empirical results displayed in Figure~\ref{fig:LRMF}, which show that EGOP reparameterization can offer advantages over SOAP and Shampoo even when the Hessian admits exact Kronecker product structure. We consider low-rank matrix factorization; for parameters $\theta = [\textrm{vec}(L), \textrm{vec}(R)]$ for $L\in \R^{n\times r}, R\in \R^{n\times r}$, we minimize the objective
\begin{equation}\label{eq:LRMF-objective}
    f(\theta) = \frac{1}{2}\norm{A(LR^\T) - Y}^2_F
\end{equation}
where $Y = AX^* + \zeta$ is a matrix of observations with noise $\zeta_{i,j} \sim \mathcal{N}(0, 0.001)$, $X^* \in \R^{n\times n}$ is an unknown ground-truth matrix of (known) rank $r$, and $A\in \R^{n\times n}$ is a measurement map  with singular value decay $\sigma_k(A) =k^{-1/2}$. We use standard spectral initialization. We perform block EGOP reparameterization, using two parameter blocks corresponding to the entries of $L$ and the entries of $R$ respectively.

Figure~\ref{fig:LRMF} compares the performance of different algorithms on this problem. EGOP-reparameterized Adam and Adagrad both achieve better minimization of $f(\cdot)$, and also yield lower reconstruction error, quantified by $\norm{LR^\T - X^*}^2_F$. We allow both SOAP and Shampoo to update their change-of-basis on every iteration, whereas EGOP reparameterization performs only a single up-front computation of the change-of-basis matrix. For this objective, the Hessian blocks $\nabla_{L,L}^2 f(\theta)$  and $\nabla_{R,R}^2 f(\theta)$ always admit exact Kronecker product decompositions, making this problem well-suited to the constraints imposed by SOAP and Shampoo\footnote{We note that in objectives such as \eqref{eq:LRMF-objective}, where the optimization parameters are collections of matrix-valued parameter blocks, SOAP and Shampoo construct block-wise preconditioners for matrix-valued parameters $L$ and $R$ separately. These preconditioners can be viewed as Kronecker product approximations to the Hessian diagonal blocks  $\nabla_{L,L}^2 f(\theta)$ and $\nabla_{R,R}^2 f(\theta)$ respectively.}. Nonetheless, EGOP-reparameterized algorithms outperform SOAP/Shampoo on this task, indicating the need for further analysis in order to predict when each approach will be optimal.

\begin{figure}[t]
\centering
\begin{subfigure}{0.47\textwidth}
    \centering
    \includegraphics[width=\linewidth]{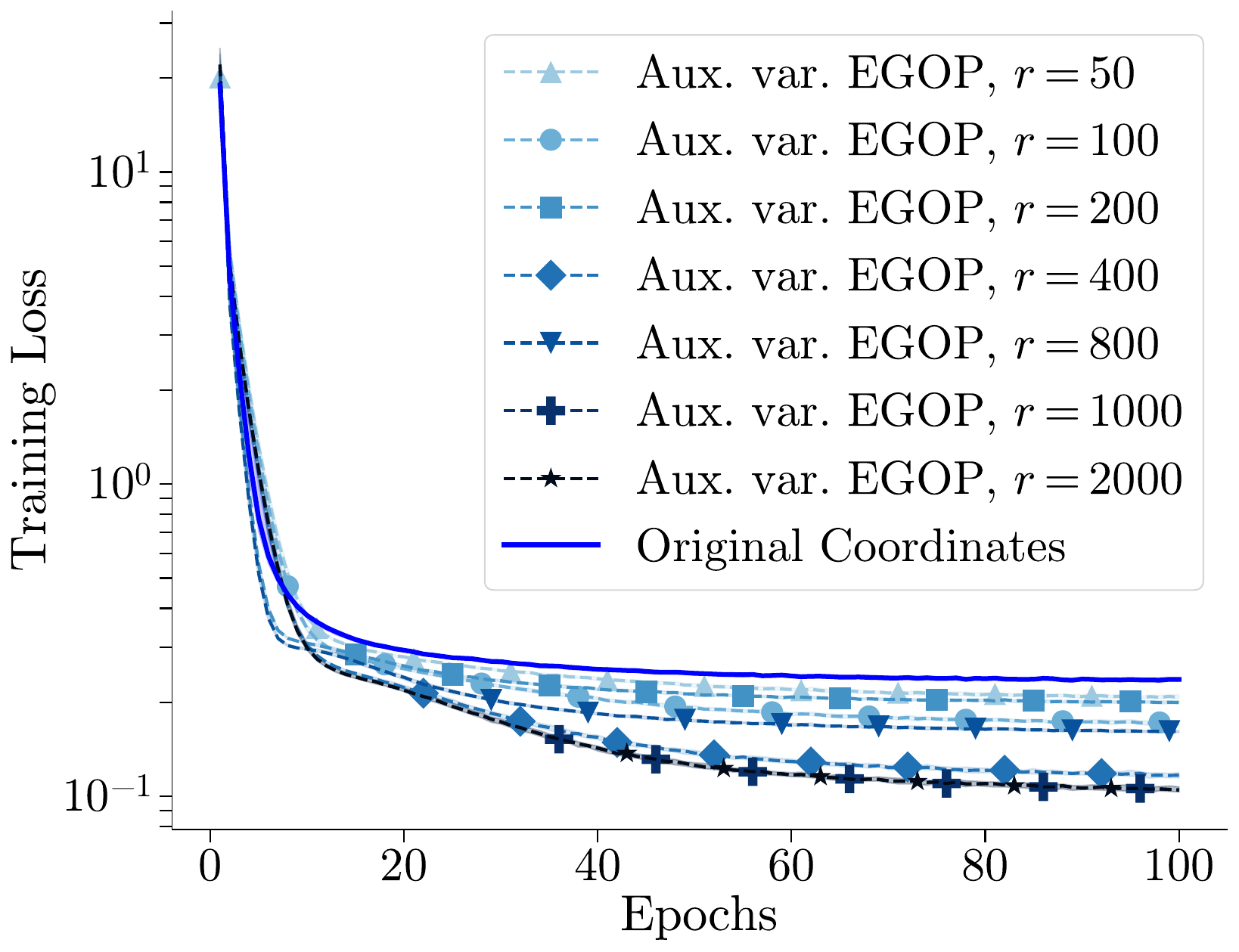}
    \caption{Training loss by epoch}
\end{subfigure}\hfill
\begin{subfigure}{0.47\textwidth}
    \centering
    \includegraphics[width=\linewidth]{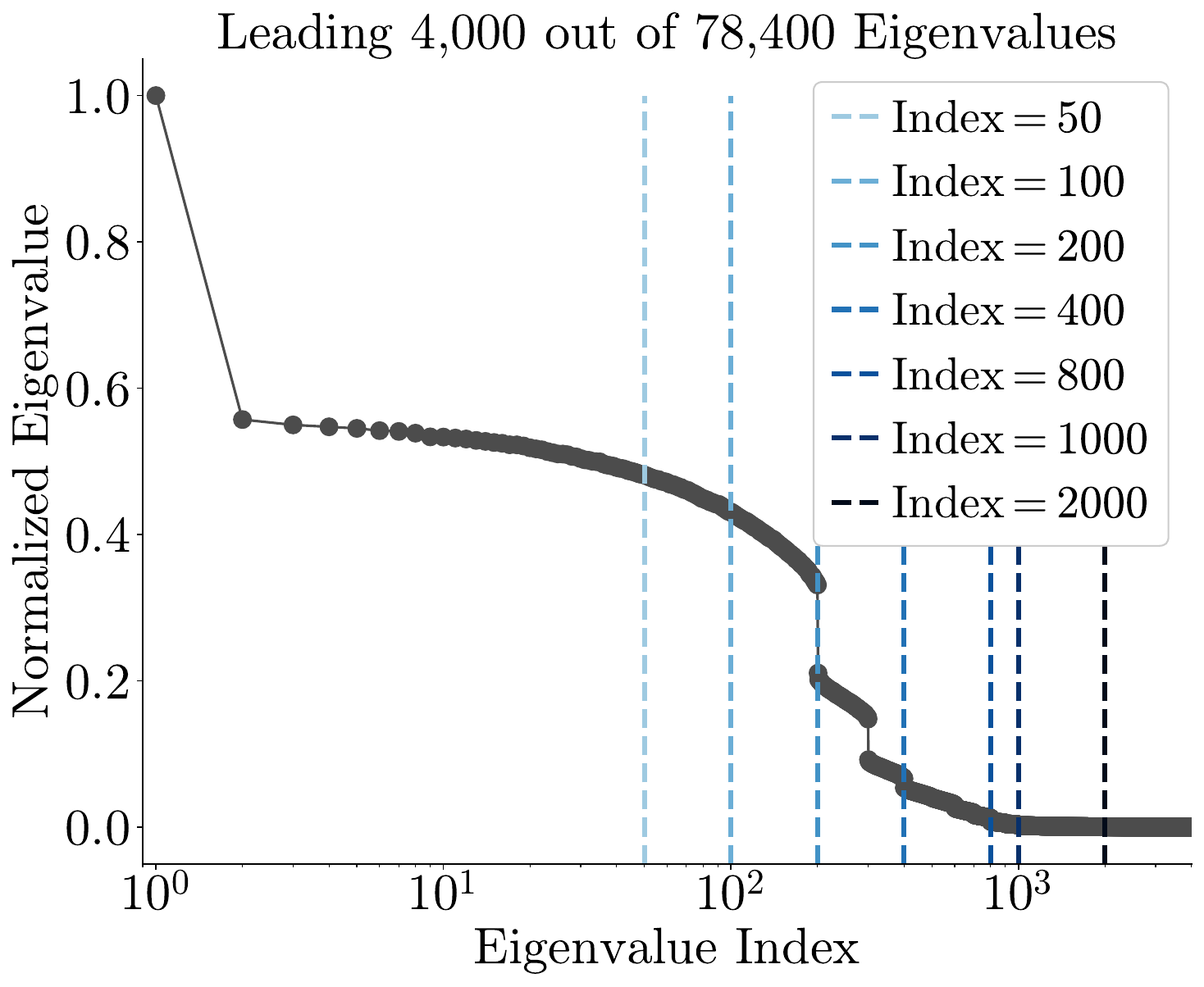}
    \caption{Estimated EGOP eigenvalues for first layer}
\end{subfigure}
\caption{Scaling EGOP reparameterization using block reparameterization and auxiliary variable EGOP reparameterization, as described in Section~\ref{sec:efficient-heuristics}. We train a 2-layer ReLU network, whose layers contain 78.4k weights and 1k weights respectively, to classify fashionMNIST data. We perform auxiliary variable reparameterization using varying numbers of leading EGOP eigenvectors, denoted by $r$. (Left) Auxiliary-variable reparameterization improves loss minimization; while larger values ($r=1000, 2000$) achieve the fastest convergence, even $r=50$ suffices to improve per-epoch convergence compared to original coordinates. (Right) We contextualize these values of $r$ by plotting $\lambda_k/\lambda_1$ for the top 4000 eigenvalues of the first-layer EGOP matrix. This plot shows strong spectral decay and suggests that $r=1000$ components suffice to approximately capture the leading eigenspace of the first-layer EGOP matrix. 
}\label{fig:fashionMNIST}
\end{figure}


\subsubsection{ReLU Networks for Image Classification} We examine the impact of EGOP reparameterization when training ReLU feedforward networks to perform image classification using real-world data. We consider two benchmark image classification datasets: the UCI hand-written digits dataset ($8\times 8$ pixel images), and the fashionMNIST dataset ($28\times 28$ pixel images of clothing) \cite{optical_recognition_of_handwritten_digits_80,xiao2017fashion}. We train multilayer fully-connected ReLU networks to perform image classification on each dataset by minimizing the cross-entropy loss. We use stochastic gradients, corresponding to batch size 300, to estimate gradients during both EGOP estimation and during training. For both image classification tasks, we perform block EGOP reparameterization, as described in Section~\ref{sec:efficient-heuristics}. We define the parameter blocks to correspond to the weight matrices of each layer; for further detail on this procedure, see  Section~\ref{sec:expanded-heuristics-discussion}.

Figure~\ref{fig:opener-cartoon} (right) plots training loss by epoch for the UCI digits classification task. This plot shows that reparameterized adaptive methods converge to lower training loss in fewer epochs compared to their counterparts in original coordinates, and compared to the equivariant baselines SGD and SGD with momentum. When performing EGOP reparameterization for this problem, we set $M$, the number of gradient samples used to estimate the EGOP, equal to the number of model weight parameters. 

Figure~\ref{fig:fashionMNIST} plots training loss by epoch for the fashionMNIST classification task. For this task, in addition to block reparameterization, we employ the auxiliary variable heuristic described in Section~\ref{sec:efficient-heuristics}. We use randomized SVD to approximate the leading $r$ EGOP eigenvectors, where $r \ll d$ is the number of optimization variables. In Figure~\ref{fig:fashionMNIST}, we compare performance for varying values of $r$. We find that even small values of $r$ suffice to improve convergence, and that this benefit increases as $r$ increases. In Section~\ref{ssec:sup-figs-main-experiments}, we demonstrate that these heuristics also lead to faster convergence in terms of wall-clock time for this task.

For both tasks, we report performance on hold-out data in Section~\ref{ssec:sup-figs-main-experiments}, showing that reparameterized methods generalize as well as, or better than, methods in original coordinates. 

\begin{figure}[t]
\centering
\begin{subfigure}{0.47\textwidth}
    \centering
    \includegraphics[width=\linewidth]{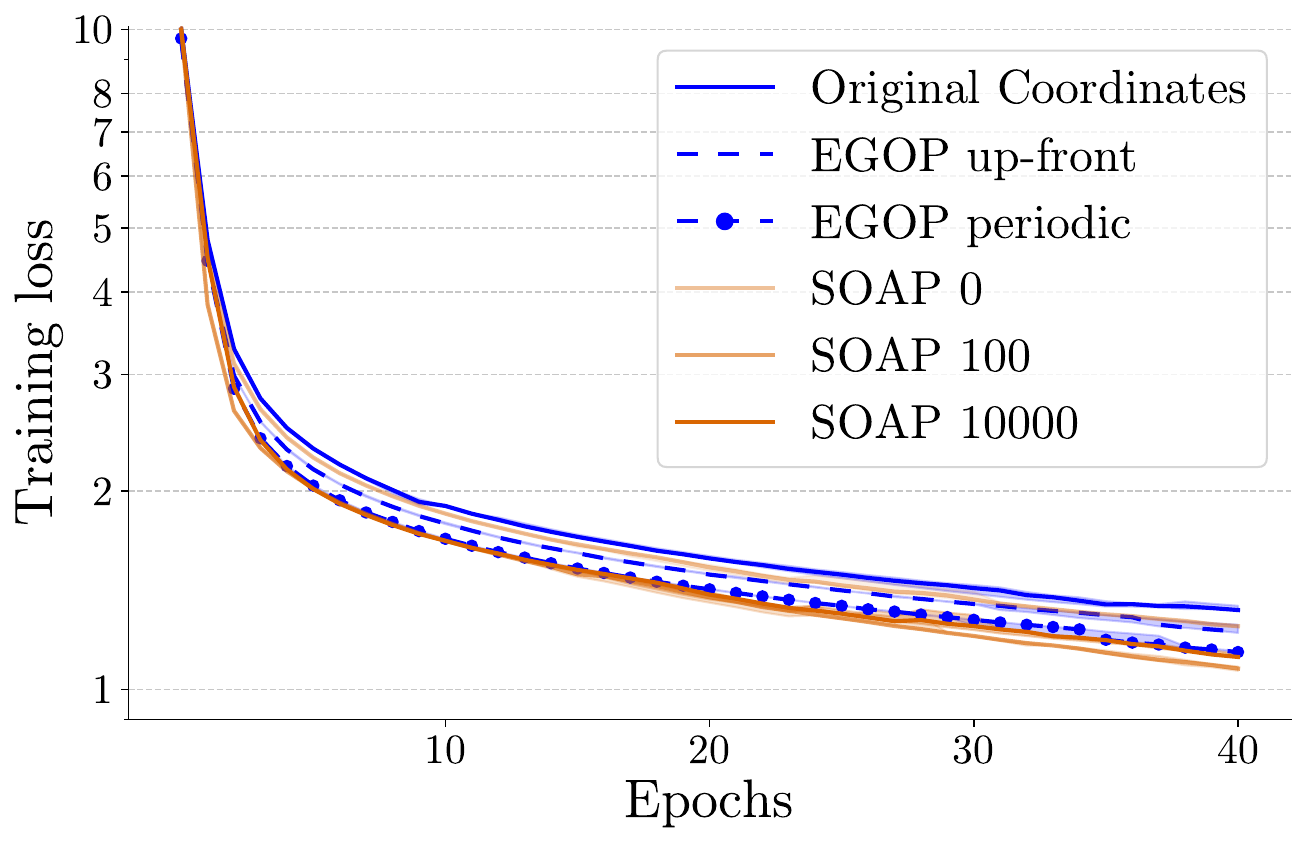}
    \caption{Training loss by epoch}
\end{subfigure}\hfill
\begin{subfigure}{0.52\textwidth}
    \centering
    \fontsize{9}{10}\selectfont
    \begin{tabular}{l l l}
        \toprule
        Method & Epochs & \makecell{Wallclock time\\(hours) } \\
        \midrule
        Original Coordinates & $24 \pm .49$& $5.66\pm 0.11$\\
        EGOP up-front & $15 \pm 0.98$& $4.14 \pm 0.49$\\
        EGOP periodic & $\mathbf{13 \pm .63}$& $\mathbf{3.62\pm .20}$\\
        SOAP 0 & $20\pm 1.60$& $7.06\pm .58$\\
        SOAP 100 & $\mathbf{13\pm .40}$& $4.82\pm .19$\\
        SOAP 10000 & $14\pm .40$& $4.63 \pm .16$\\
        \bottomrule
    \end{tabular}
    \vspace{.6cm}
    \caption{\centering Epochs and wallclock time to achieve validation loss $< 1.3$ (median $\pm$ standard deviation)}\label{table:wallclock-time}
\end{subfigure}
\caption{EGOP reparameterization can improve the convergence of adaptive gradient algorithms for large, modern neural network architectures. We consider training a deep residual network to perform image classification on ImageNet, and compare optimizing the training loss with AdamW in original coordinates (blue, solid trace), under up-front EGOP reparameterization (blue, dashed trace), and with periodic EGOP reparameterization performed once per epoch (blue, dashed trace with dots). We employ heuristics presented in Section~\ref{sec:efficient-heuristics}, particularly block reparameterization, to efficiently scale EGOP reparameterization to this large network. We compare performance with SOAP under different update cadences; see Section~\ref{ssec:main-body-large-scale} for descriptions of SOAP 0, SOAP 100, and SOAP 10000. Compared with SOAP, EGOP reparameterization converges faster in wall-clock time and exhibits competitive per-epoch minimization of training loss.}\label{fig:new-resnet-fig-plus-table}
\end{figure}

\subsubsection{Deep Residual Networks}\label{ssec:main-body-large-scale}

In Figure~\ref{fig:new-resnet-fig-plus-table}, we present results using EGOP repara\-meteriza\-tion to improve convergence in large-scale residual networks (ResNets). We emulate the well-tuned deep ResNet architecture established in He et al. \cite{he2016resnet} and consider the task of image classification using the ImageNet dataset \cite{deng2009imagenet}. We consider training with AdamW, an adaptive algorithm and variant of Adam which implements weight decay decoupled from the momentum computation \cite{loshchilov2019decoupledweightdecayregularization}. We compare AdamW in original coordinates with EGOP-reparameterized AdamW using a single up-front computation of the EGOP reparameterization matrix, as proposed in Algorithm~\ref{alg:meta-algorithm-block}.  We find that up-front EGOP reparameterization improves both minimization of the training and validation loss compared to AdamW in original coordinates. We also consider results using EGOP reparameterization with periodic re-computation of the EGOP eigenbasis, as discussed in Section~\ref{sec:efficient-heuristics}, where the EGOP eigenbasis is re-estimated every epoch. We find that periodic reparameterization further improves training loss minimization over up-front EGOP reparameterization.

We compare EGOP and SOAP convergence times on this large-scale task.
We compute the wall-clock time and epochs required to achieve validation loss $<1.3$, where the wall-clock time for EGOP-reparameterized methods includes the time required to estimate the EGOP eigenbasis. This threshold corresponds to achieving within $5\%$ of the minimum validation loss achieved by SOAP 100. Table~\ref{table:wallclock-time} demonstrates that EGOP-reparameterized AdamW converges faster in wall-clock time compared to SOAP and compared to AdamW optimization in original coordinates. In Section~\ref{ssec:sup-figs-main-experiments} we report validation loss, demonstrating that EGOP-reparameterized AdamW achieves lower validation loss than SOAP and converges at a competitive per-epoch rate.


Our experiments include comparisons with SOAP at a variety of update cadences. In their original work proposing SOAP, Vyas et al. \cite{vyas2024soap} reported that SOAP performance improves when the change-of-basis matrix is periodically updated, and they suggest an update cadence of every 10 to 100 batches. We include results with SOAP 0, which performs a single, up-front calculation of the change-of-basis matrix, comparable with up-front EGOP reparameterization; SOAP 100; and SOAP 10k, which updates the change-of-basis matrix once per epoch, comparable with our implementation of periodic EGOP reparameterization.

These results demonstrate that EGOP reparameterization can offer improvements over state-of-the-art methods in training large neural networks, that the heuristics discussed in Section~\ref{sec:efficient-heuristics} can effectively scale EGOP reparameterization to large optimization problems, and that EGOP reparameterization generalizes to a variety of modern network architectures.

\section{Conclusions and Limitations}\label{sec:conclusions}

In this work, we have shown through both analysis and experiments that EGOP reparameterization can improve the convergence of adaptive algorithms. There may be opportunities for future work to explore other reparameterizations and additional hyperparameter tuning, which we have not examined in this work. One limitation of this work is the engineering required for scalability, as discussed in Section~\ref{sec:efficient-heuristics}. Fully characterizing the trade-off between improved convergence and the up-front cost of reparameterization remains a direction for future research. Another interesting direction for future work would be studying whether the guarantees in Theorem~\ref{lemma:convex-cvgnce-ball-version} can be strengthened under a periodic EGOP reparameterization scheme, as described in Section~\ref{sec:efficient-heuristics}. Our experiments in Section~\ref{sec:experimental-results} demonstrate empirically that such periodic re-estimation of the EGOP eigenbasis can offer further improvements over the up-front reparameterization analyzed in this work. 

Further work examining the tradeoffs between EGOP reparameterization and the reparameterizations employed by SOAP and Shampoo would benefit practitioners by providing greater guidance on when each method may perform best. Our empirical results in Figure~\ref{fig:LRMF} demonstrate that EGOP reparameterization can outperform SOAP and Shampoo even in the presence of exact Kronecker product structure, suggesting that approximate low-rank geometry, as quantified by EGOP spectral decay, is a powerful tool even when additional geometric structure is present.


\section*{Acknowledgments}

We gratefully acknowledge the support of AFOSR FA9550-18-1-0166, NSF DMS-2023109, DOE DE-SC0022232, NSF DMS-2023109, the NSF-Simons National Institute for Theory and Mathematics in Biology (NITMB) through NSF (DMS-2235451) and Simons Foundation (MP-TMPS-00005320), the NSF-Simons AI-Institute for the Sky (SkAI) via grants NSF AST-2421845 and Simons Foundation MPS-AI-00010513, and the Margot and Tom Pritzker Foundation.

\bibliographystyle{plainnat}
\bibliography{ref}

\pagebreak

\appendix

\section{Proofs}\label{sec:deferred-proofs}

\paragraph{Notation for proofs} Throughout this section, we denote the unit sphere in $d$ dimensions by $\Sbb^{d-1}$ and the Euclidean ball of radius $r$ centered at $\bar{x}$ by $\mathcal{B}(\bar{x}; r)$. Given a matrix $A \in \R^{m \times d}$, we write $\frobnorm{A} := \sqrt{\ip{A, A}}$ for its \emph{Frobenius} norm and $\opnorm{A} = \sup_{x \in \Sbb^{d-1}} \norm{Ax}$ for its \emph{spectral} norm. For matrices $A, B \in \R^{d\times d}$, we denote the L\"owner ordering by $A\preceq B$. We write $A \lesssim B$ to indicate the existence of a dimension-independent positive constant $c > 0$ such that $A \leq cB$.  We write $\mathcal{O}(d, r) := \set{X \in \R^{d \times r} \mid X^{\T} X = I_{r}}$ and $\mathcal{O}(d) \equiv \mathcal{O}(d, d)$ for the set of matrices with orthogonal columns.

We write $\ip{X, Y} \defeq \tr(X^{\T} Y)$ for the Euclidean inner product and $\norm{X} = \sqrt{\ip{X, X}}$ for its induced norm. Given a PSD matrix $H$ we write $\ip{X, Y}_{H}$ for the weighted inner product $\ip{HX, Y} = \ip{H^{1/2} X, H^{1/2} Y}$.  We write $\mathcal{O}(d)$ for the set of matrices with orthogonal columns. Given a set of scalars $\{L_i\}_{i=1}^d$, we let $\diag(L_1,\dots,L_d)$ denote the $d\times d$ diagonal matrix with values $L_1,\dots, L_d$ along the diagonal.  We let $\mat(\theta)\in \R^{m\times n}$ denote the reshaping of $\theta$ into a matrix for some $m$ and $n$ such that $m n = d$, and similarly we let $\vec{M}$ denote the reshaping of a matrix into a vector. 

Following the definitions introduced in Section~\ref{sec:EGOP-defn}, we let $V$ denote the eigenbasis of the EGOP matrix and let $\tf\defeq f\circ V$. Let $\tTheta$ denote the corresponding transformation of set $\Theta$: $\tTheta\defeq \{V^\T \theta \mid \theta\in \Theta\}$. We denote the corresponding diameter measurements as $\tilde{D}_p \defeq \max_{\ttheta_1,\ttheta_2\in \tTheta}\shortnorm{\ttheta_1 - \ttheta_2}_p$. Let $\{L_i\}_{i=1}^d$ denote the values for which $f(\cdot)$ is coordinate-wise smooth within $\Theta$, following the definition in Eq.~\ref{eq:def-coordinate-wise-smoothness}, and let $\{\tilde{L}_i\}_{i=1}^d$ denote the analogous coordinate-wise smoothness constants of $\tf(\cdot)$ within $\tTheta$. We denote the vectors of coordinate-wise smoothness constants in original and reparameterized coordinates respectively by $\vec{L}\in \R^d$ and $\vec{\tilde{L}}\in \R^d$.

We analyze the convergence of Adagrad in both convex and nonconvex settings. For completeness, we recall the full Adagrad algorithm in Algorithm~\ref{alg:Adagrad}.

\begin{algorithm}[h]
    \caption{Adagrad$(f, g,\theta_0, T,\epsilon: \texttt{default} = 10^{-8}, \Theta: \texttt{Optional})$}\label{alg:Adagrad}
    \begin{algorithmic}[1]
    \State {\bfseries Input:} Objective $f$, oracle gradient $g$, $\theta_0 \in \R^d$, $\eta$ step size, $T$ number of iterations, $\epsilon$ numerical stability parameter, $\Theta\subseteq \R^d$ constraint set.
    \State $v_{0}\gets \epsilon^2 \mathbb{I}_d$
    \For{$t\in [1,T]$}
        \State $g_t \gets g(f, \theta_t)$
        \State $v_t(i) \gets v_{t-1}(i) + g_t(i)^2\ \forall i \in [d]$
        \State $H_t \gets \operatorname{diag}(v_t)^{1/2}$
        \State $w_t \gets \theta_t - \eta H_t^{-1}g_t$
        \State \textbf{Constrained update:} $\theta_{t+1} = \Pi^{H_t}_{\Theta}(w_t)$
        \State \textbf{Unconstrained update:} $\theta_{t+1} = w_t$
    \EndFor
    \State {\bfseries Return:} $\theta_T \in \R^d$
    \end{algorithmic}
\end{algorithm}

\subsection{Proofs from Section~\ref{sec:convergence-analysis}}

Several recent works have proved convergence guarantees for Adagrad in terms of $\shortnorm{\vec{L}}_1$, showing that Adagrad enjoys better performance guarantees in settings when  $\shortnorm{\vec{L}}_1$ is small \cite{jiang2024convergence, liu2024adagrad, xie2024adamexploitsellinftygeometryloss}. Our main results establish that when $\EGOP(f)$ has strong spectral decay and dense leading eigenvectors, then reparameterization can produce $\shortnorm{\vec{\tilde{L}}}_1$ significantly smaller than $\shortnorm{\vec{L}}_1$, implying better convergence bounds. 

\begin{theorem}\label{thm:smoothness-constants-ratio}
    If $\exists \delta \in [0,\beta^2_1)$ such that the Hessian of $f(\cdot)$ is $H$-Lipschitz within $\Theta$ for $H$ satisfying
    \begin{equation}
        H \leq \frac{\sqrt{\delta \lambda_1(\EGOP) + D_2^2 \lambda^2_1(\nabla^2 f(\theta^*))} - D_2 \lambda_1(\nabla^2 f(\theta^*))}{D^2_2}
    \end{equation}
    Then within $\Theta$ and $\tTheta$ respectively, the smoothness constants of $f(\cdot)$ and $\tf(\cdot)$ satisfy
    \[
        \frac{\shortnorm{\vec{\tilde{L}}}_1}{\shortnorm{\vec{L}}_1} = O\left( \sqrt{1+\delta}\left(\frac{\sr(f)}{d(\beta_1 -\delta)} + \frac{\sqrt{\delta}}{\beta_1-\delta}\right)\right)
    \]
    where $\sr(f)$ denotes the \textit{stable rank} of $f(\cdot)$.
\end{theorem}
Theorem~\ref{thm:smoothness-constants-ratio} implies that when the EGOP leading eigenvectors have constant density, i.e. when $\beta_1 >> 1/d$, then for small $\delta$ the ratio $\shortnorm{\vec{\tilde{L}}}_1/\shortnorm{\vec{L}}_1$ scales as $\sr(f)/d\beta_1$. We note that the density condition $\beta \gg 1/d$ is satisfied with high probability for random unit vectors in high dimensions: in particular, for $v\sim\textrm{Unif}(S^{d-1})$ where $S^{d-1}$ is the unit sphere in $\R^d$, with high probability $\norm{v}^2_1/d \approx 2/\pi > 0.6$. 

We prove Theorem~\ref{thm:smoothness-constants-ratio} by establishing the following pair of claims. The first lower bounds the sum of the coordinate-wise smoothness constants of $f(\cdot)$ in original coordinates:
\begin{lemma}\label{lem:general-LB}
    Consider $f(\cdot)$, $\Theta$, $\rho$ satisfying Assumptions~\ref{assumption:rho-and-Theta} and \ref{assumption:H-Lipschitz} and consider the set of dense unit vectors $\nu \in \{\pm d^{-1/2}\}^d$, the collection of vectors whose entries all have magnitude $\abs{\nu(i)} = d^{1/2}$. Then within $\Theta$, $f(\cdot)$ is coordinate-wise smooth with respect to constants satisfying
    \[
        \shortnorm{\vec{L}}_1 \geq \frac{d}{2c}\cdot \max_{\nu \in \{\pm d^{-1/2}\}^d}\frac{\ip{\nu, \EGOP\nu} - \gamma}{\sqrt{\lambda_{\max}(\operatorname{EGOP}) + \gamma}}
    \]
    where
    \begin{equation}\label{eq:def-gamma}
        \gamma \defeq  2H \lambda_{\max}(\nabla^2 f(\theta^*)) M_3 + H^2 M_4
    \end{equation}
    where $H$ is the Lipschitz constant of the Hessian of $f(\cdot)$ in Assumption~\ref{assumption:H-Lipschitz} and $M_p\defeq \mathbb{E}_{\theta\sim\rho}[\norm{\theta-\theta^*}^p_2]$.
\end{lemma}
If the EGOP has dense leading eigenvectors, then the term $\langle\nu, \EGOP \nu\rangle$ is large. In particular, Lemma~\ref{lem:general-LB} implies the following:
\begin{corollary}\label{claim:OG-coor-LB}
    For $f(\cdot)$, $\Theta$,$\rho$ satisfying Assumptions~\ref{assumption:rho-and-Theta} and \ref{assumption:H-Lipschitz}, the smoothness constants of $f$ satisfy
    \[
        \shortnorm{\vec{L}}_1 \geq \frac{d}{2c}\frac{\beta_k\lambda_k(\operatorname{EGOP}) - \gamma}{\sqrt{\lambda_{\max}(\operatorname{EGOP}) + \gamma}}.
    \]
    where $\beta_k \defeq \norm{v_k}^2_1/d$ for $\lambda_k(\operatorname{EGOP}), v_k$ the $k$\ts{th} eigenvalue and eigenvector of $\EGOP(f)$, and $\gamma$ defined in Eq.~\ref{eq:def-gamma}.
\end{corollary}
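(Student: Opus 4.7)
The plan is to derive Corollary~\ref{claim:OG-coor-LB} as a direct specialization of Lemma~\ref{lem:general-LB}: since the lemma already provides a lower bound on $L_f$ in terms of a maximum over dense sign vectors $\nu \in \{\pm d^{-1/2}\}^d$, it suffices to exhibit one such $\nu$ tailored to the $k$\ts{th} eigenpair $(\lambda_k, v_k)$ of $\EGOP(f)$ that achieves at least $\beta_k \lambda_k$ in the quadratic form. The natural candidate is the sign pattern of $v_k$, scaled to lie on the dense hypercube.

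Concretely, I would set
\[
    \nu(i) \defeq \frac{\operatorname{sign}(v_k(i))}{\sqrt{d}} \quad \text{for each } i \in \{1, \dots, d\},
\]
so that $\nu \in \{\pm d^{-1/2}\}^d$ by construction. Computing the inner product with $v_k$ directly yields
\[
    \ip{\nu, v_k} = \frac{1}{\sqrt{d}} \sum_{i=1}^d \abs{v_k(i)} = \frac{\norm{v_k}_1}{\sqrt{d}},
\]
and hence $\ip{\nu, v_k}^2 = \norm{v_k}_1^2 / d = \beta_k$.

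Next, I would expand $\ip{\nu, \EGOP \nu}$ in the eigenbasis of $\EGOP(f)$. Writing $\EGOP(f) = \sum_{j=1}^d \lambda_j v_j v_j^{\T}$, we obtain
\[
    \ip{\nu, \EGOP(f)\, \nu} = \sum_{j=1}^d \lambda_j \ip{\nu, v_j}^2 \geq \lambda_k \ip{\nu, v_k}^2 = \beta_k \lambda_k,
\]
where the inequality uses that every $\lambda_j \geq 0$ because $\EGOP(f)$ is PSD, so dropping all terms except $j = k$ is legal. Plugging this lower bound into the maximum appearing in Lemma~\ref{lem:general-LB} then yields the claimed inequality.

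I do not expect any substantive obstacle: the ``hard work'' is already encapsulated in Lemma~\ref{lem:general-LB}, and the only ingredient needed here is a judicious choice of $\nu$ witnessing $\beta_k \lambda_k$ from below. One minor point to flag is that the bound is only meaningful when the numerator $\beta_k \lambda_k - \gamma$ is positive; when it is not, the inequality remains valid but vacuous, since $L_f \geq 0$ automatically. No additional assumption beyond Assumptions~\ref{assumption:sampling-distribution} and~\ref{assume:Lipschitz-Hessian} is required, as these are exactly what Lemma~\ref{lem:general-LB} already presumes.
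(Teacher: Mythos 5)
Your proposal is correct and follows essentially the same route as the paper: choose the dense sign vector $\nu_k$ with entries $\operatorname{sign}(v_k(i))d^{-1/2}$, expand the quadratic form $\ip{\nu_k,\EGOP\nu_k}$ in the eigenbasis to obtain the lower bound $\lambda_k\norm{v_k}_1^2/d = \beta_k\lambda_k$, and substitute into Lemma~\ref{lem:general-LB}. The additional remark that the bound is vacuous when $\beta_k\lambda_k \leq \gamma$ is a sensible observation but not needed for the statement.
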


In contrast, we show that under  reparameterization by the EGOP eigenbasis, the smoothness constants can be upper bounded by the following:
 
\begin{lemma}\label{claim:reparam-coor-UB}
    Let $V$ be the eigenbasis of the EGOP of $f(\cdot)$ with respect to $\rho$, and define $\tf(\ttheta)\defeq f(V\theta)$. Let the function $f(\cdot)$ satisfy Assumptions~\ref{assumption:rho-and-Theta} and \ref{assumption:H-Lipschitz}. Then within $\tTheta$, $\tf(\cdot)$ satisfies Eq.~\ref{eq:def-coordinate-wise-smoothness} with respect to smoothness coordinates whose sum is bounded by
    \[
        \shortnorm{\vec{\tilde{L}}}_1 \leq d\left(\frac{\sqrt{\gamma}}{c} + H D_2\right)+ \frac{1}{c}\sum_{i=1}^d \sqrt{\lambda_i}
    \]
    where $\gamma$ is defined in Eq.~\ref{eq:def-gamma}. and $\lambda_i$ denotes the $i$th eigenvalue of the EGOP of $f$ with respect to $\rho$.
\end{lemma}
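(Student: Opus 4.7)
The plan is to show that in the EGOP eigenbasis, the Hessian of $\tf$ is bounded above (in the Löwner order) by a diagonal matrix whose $i$\ts{th} entry is approximately $\sqrt{\lambda_i}/c$, plus lower-order corrections controlled by $\gamma$ and by the Hessian-Lipschitz constant times the radius of the ball. Those diagonal entries then serve as valid coordinate-wise smoothness constants for $\tf$, and summing them produces the claimed bound.

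First, I Taylor-expand the gradient around the local minimum $\theta^*$. Since $\nabla f(\theta^*) = 0$, this gives $\nabla f(\theta) = A(\theta - \theta^*) + R(\theta)$ with $A \defeq \nabla^2 f(\theta^*)$ and $\norm{R(\theta)} \leq (H/2)\norm{\theta - \theta^*}^2$ by \cref{assume:Lipschitz-Hessian}. Taking the expected outer product and using \cref{assumption:sampling-distribution}, the leading term is
\[
    A\,\mathbb{E}_{\theta\sim\rho}\bigl[(\theta - \theta^*)(\theta - \theta^*)^\T\bigr]\,A \;=\; c^2 A^2 + (A\theta^*)(A\theta^*)^\T,
\]
while the cross terms involving $R$ contribute an error $E$ whose norm is controlled by the same quantity $\gamma$ used in \cref{lem:general-LB} and \cref{claim:OG-coor-LB}, via identical moment-bound computations on $\norm{R(\theta)}$. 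Since $(A\theta^*)(A\theta^*)^\T \succeq 0$, dropping this PSD summand preserves one direction of the Löwner inequality and yields $c^2 A^2 \preceq \EGOP(f) + \gamma I$.

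Next, I apply operator monotonicity of the matrix square root on PSD matrices: writing $|A| \defeq (A^2)^{1/2}$, the inequality $(cA)^2 \preceq \EGOP(f) + \gamma I$ gives $c|A| \preceq (\EGOP(f) + \gamma I)^{1/2}$. Because $\theta^*$ is a local minimum, $A \succeq 0$, so $A = |A|$; conjugating by the orthogonal matrix $V$ preserves Löwner order and makes the right-hand side diagonal in the $V$-basis, so $V^\T A V \preceq (1/c)\,\diag\!\bigl(\sqrt{\lambda_1 + \gamma},\ldots,\sqrt{\lambda_d + \gamma}\bigr)$. To extend from $\theta^*$ to an arbitrary $\ttheta$ with $\norm{V\ttheta} \leq B$, I use \cref{assume:Lipschitz-Hessian}: since $V$ is orthogonal, $\opnorm{V^\T[\nabla^2 f(V\ttheta) - A]V} \leq H\norm{V\ttheta - \theta^*} \leq H(B + c)$, so this perturbation lies between $\pm H(B+c)I$ in the Löwner order. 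Combining yields $\pm \nabla^2 \tf(\ttheta) \preceq \diag(L_1,\ldots,L_d)$ with $L_i \defeq \sqrt{\lambda_i + \gamma}/c + H(B + c)$, and by Taylor's theorem with integral remainder this Löwner domination on the (convex) ball certifies coordinate-wise smoothness of $\tf$ with constants $L_i$. Summing and applying $\sqrt{a+b} \leq \sqrt{a} + \sqrt{b}$ produces the desired bound $L_{\tf} \leq d(\sqrt{\gamma}/c + H(B+c)) + (1/c)\sum_{i=1}^d \sqrt{\lambda_i}$.

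The main obstacle is the first step --- specifically, ensuring that the cross-term error $E$ in the EGOP decomposition is bounded by precisely the $\gamma$ stated in the lemma (with its $\sqrt{d}$ and $d$ factors). This should follow from the moment-bound computations using $M_3$ and $M_4$ already carried out in the proof of \cref{lem:general-LB}, and would ideally be extracted as a shared sub-lemma so both the upper and lower bounds use the same error estimate. A smaller subtlety is the identity $A = |A|$, which relies on the local-minimum assumption; for a general stationary point one would only obtain a bound on $V^\T |A| V$ rather than $V^\T A V$ and would need an additional argument to close the gap.
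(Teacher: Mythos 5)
Your proposal is correct and follows essentially the same argument as the paper's: Taylor-expand the gradient around the local minimum, identify $\EGOP(f) = c^2(\nabla^2 f(\theta^*))^2 + (A\theta^*)(A\theta^*)^\T + E$ with $\|E\|$ controlled by $\gamma$ (the paper's Lemma~\ref{lem:EGOP-equation}), drop the PSD rank-one term, apply operator monotonicity of the matrix square root (Lemma~\ref{lemma:order-preserving-square-root}), observe that conjugation by $V$ diagonalizes the bound, extend from $\theta^*$ to the ball via the Hessian-Lipschitz assumption, and finish with $\sqrt{a+b}\leq\sqrt{a}+\sqrt{b}$. The only cosmetic difference is that the paper applies Lemma~\ref{lem:EGOP-equation} directly to the already-reparameterized objective $\tf$ (using Corollary~\ref{cor:EGOP-reparam-diagonalizes-EGOP} to make the EGOP diagonal up front), whereas you work in the original coordinates and conjugate by $V$ at the end—equivalent bookkeeping.
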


Note that as Lipschitz constant of Hessian in Assumption~\ref{assumption:H-Lipschitz} goes to zero, so does the value of $\gamma$ in both Lemmas~\ref{claim:OG-coor-LB} and \ref{claim:reparam-coor-UB}. Theorem~\ref{thm:smoothness-constants-ratio} follows from Lemmas~\ref{claim:OG-coor-LB} and \ref{claim:reparam-coor-UB}. In order to prove these lemmas, we first establish some intermediate lemmas.

\paragraph{Helper Lemmas} 

The first intermediate result gives an alternative characterization of the coordinate wise smoothness constants defined in Eq.~\ref{eq:def-coordinate-wise-smoothness}.

\begin{lemma}\label{lem:Hessian-coor-wise-smooth}
    A twice-differentiable function $f:\R^d\rightarrow \R$ satisfies Eq.~\ref{eq:def-coordinate-wise-smoothness} with respect to smoothness constants $L_1,\dots,L_d$ within $\Theta$ if and only if  $\forall \theta\in \Theta, \forall v\in \R^d$,
    \[
        \abs{\ip{v, \nabla^2 f(\theta) v}} \leq \ip{v, \diag(L) v}
    \]
    where $\diag(L) \in \R^{d\times d}$ is the diagonal matrix with diagonal entries $L_i$.
\end{lemma}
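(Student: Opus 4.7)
My plan is to prove both directions via Taylor's theorem with remainder, exploiting the fact that the twice-differentiability of $f$ lets us relate second-order differences of $f$ directly to the quadratic form induced by the Hessian.

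For the forward direction (the Hessian bound implies coordinate-wise smoothness in the sense of Eq.~\eqref{eq:def-coordinate-wise-smoothness}), I would fix arbitrary $\theta_1, \theta_2 \in \R^d$, set $v \defeq \theta_1 - \theta_2$, and use the integral form of Taylor's theorem:
\[
    f(\theta_1) - f(\theta_2) - \ip{\nabla f(\theta_2), v} = \int_0^1 (1-t)\, \ip{v, \nabla^2 f(\theta_2 + t v)\, v}\, dt.
\]
Taking absolute values inside the integral, applying the hypothesized bound $\abs{\ip{v, \nabla^2 f(\theta)v}} \leq \ip{v, \diag(L) v}$ pointwise in $t$, and evaluating $\int_0^1 (1-t)\, dt = 1/2$ yields exactly the coordinate-wise smoothness inequality.

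For the reverse direction, I would instead fix $\theta \in \R^d$ and $v \in \R^d$, and apply the smoothness condition to $\theta_1 = \theta + \varepsilon v$, $\theta_2 = \theta$ for small $\varepsilon > 0$. A second-order Taylor expansion gives
\[
    f(\theta + \varepsilon v) - f(\theta) - \varepsilon \ip{\nabla f(\theta), v} = \tfrac{\varepsilon^2}{2} \ip{v, \nabla^2 f(\theta) v} + o(\varepsilon^2),
\]
while the hypothesis bounds the left-hand side in absolute value by $\tfrac{\varepsilon^2}{2}\ip{v, \diag(L) v}$ (since $\|\varepsilon v\|_L^2 = \varepsilon^2 \ip{v, \diag(L) v}$). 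Dividing both sides by $\varepsilon^2/2$, taking the limit $\varepsilon \downarrow 0$, and using continuity of $\nabla^2 f$ at $\theta$ gives the pointwise bound on $\abs{\ip{v, \nabla^2 f(\theta) v}}$.

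This argument is essentially a folklore characterization of smoothness; the only mild subtlety is being careful with the absolute value inside the Taylor integral in the forward direction, which is handled by the triangle inequality since $(1-t) \geq 0$ on $[0,1]$. I do not expect any step to be a real obstacle: both directions are a few lines once one writes down the Taylor remainder, and no further hypothesis on $f$ beyond twice-differentiability (and continuity of $\nabla^2 f$ for the reverse direction, which is implicit in the statement) is needed.
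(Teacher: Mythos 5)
Your proof is correct and takes essentially the same route as the paper's: the forward direction via the integral Taylor remainder with weight $(1-t)$, and the reverse direction via a second-order local expansion along $\theta + \varepsilon v$ followed by dividing out $\varepsilon^2/2$. If anything you are a bit more careful with the constants: the paper's displayed chain in the ``if'' direction drops a factor of $\tfrac12$, and its statement of the second-order expansion omits the $\tfrac12$ on the Hessian term, whereas your version carries both through correctly.
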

\begin{proof}[Proof of Lemma~\ref{lem:Hessian-coor-wise-smooth}]
    By definition, $f$ is twice-differentiable if and only if it satisfies
    \begin{equation}
        f(y) = f(x) + \ip{\grad f(x), y - x} + \ip{y - x, \grad^2 f(x)(y - x)} + o(\|y - x\|^2),
        \quad \text{for all $x, y \in \R^d$.}
        \label{eq:2nd-order-frechet}
    \end{equation}
    We first prove the ``if'' version: assume $\abs{\ip{v,\nabla^2 f(\theta)v}} \leq \ip{v,\diag(L)v}$. Consider any $x,y\in \Theta$. By the mean value theorem,
    \begin{align*}
        f(y) - f(x) - \ip{\grad f(x), y - x} &=
        \int_{0}^1 t \ip{y - x, \grad^2 f(x + t(y - x)) (y - x)} \mathrm{d}t \\
        &\leq
        \int_{0}^1 t \ip{y - x, \diag(L)(y - x)} \mathrm{d}t \\
        &=
        \ip{y - x, \diag(L)(y - x)},
    \end{align*}
    where the inequality follows by assumption and the convexity of $\Theta$. Taking absolute values on both sides yields the claim.

    We now prove the ``only if'' part. In particular, for any $x, y\in \Theta$, writing $y = x + tv$ and invoking~\eqref{eq:2nd-order-frechet} we obtain
    \begin{align*}
        \abs{\ip{y - x, \grad^2 f(x)(y - x)}} &\leq
        \abs{f(y) - f(x) - \ip{\grad f(x), y - x}} +
        o(\|y - x\|^2) \\
        &\leq
        \ip{y - x, \diag(L)(y - x)} + o(\|y - x\|^2) \\
        \Leftrightarrow
        t^2 \abs{\ip{v, \grad^2 f(x) v}} &\leq
        t^2 \ip{v, \diag(L)v} + o(t^2).
    \end{align*}
    Because $t$ was chosen such that $y = x+tv$ for $x,y\in \Theta$, convexity of $\Theta$ implies that $x+\tau v \in \Theta$ for all $\tau \in [0,t]$. Thus dividing both sides by $t$ and letting $t \downarrow 0$ implies the result.
\end{proof}

The second intermediate result relates the EGOP to the Hessian.

\begin{lemma}\label{lem:EGOP-equation}
    Consider twice-differentiable function $f:\R^d \rightarrow \R$ and sampling distribution $\rho$ satisfying Assumptions~\ref{assumption:H-Lipschitz}, and \ref{assumption:rho-and-Theta} with respect to local minimum $\theta^*$. Then the EGOP of $f(\cdot)$ with respect to $\rho$ satisfies
    \[
        \mathbb{E}_{\theta\sim \rho}[\nabla f(\theta)\nabla f(\theta)^\T] = G(\theta^*) + E_f(\theta^*)
    \]
    where $G(\theta^*)$ is a PSD matrix satisfying
    \[
        c^2 \nabla^2 f(\theta^*)\nabla^2 f(\theta^*)^\T \preceq G(\theta^*) \preceq 2c^2 \nabla^2 f(\theta^*)\nabla^2 f(\theta^*)^\T
    \]
    and the matrix $E_f(\theta^*)$ satisfies
    \[
    \abs{\ip{v, E_f (\theta^*) v}} \leq \left(2H \lambda_{\max}(\nabla^2 f(\theta^*)) M_3 + H^2 M_4\right)\norm{v}^2_2
   \]
   where
   \[
    M_p \defeq \mathbb{E}_{\theta\sim\rho}[\norm{\theta-\theta^*}^{p}_2].
   \]
\end{lemma}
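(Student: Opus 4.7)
The plan is to Taylor-expand $\nabla f$ around $\theta^\ast$, exploit $\nabla f(\theta^\ast) = 0$ (since $\theta^\ast$ is a local minimum), and split the resulting outer product into a leading Hessian piece plus a remainder. Concretely, using the integral form of Taylor's theorem applied to the vector-valued map $\nabla f$,
\[
    \nabla f(\theta) = \nabla^2 f(\theta^\ast)(\theta - \theta^\ast) + R(\theta), \quad R(\theta) = \int_0^1 (1-t)\,\nabla^3 f\bigl(\theta^\ast + t(\theta - \theta^\ast)\bigr)[\theta - \theta^\ast,\,\theta - \theta^\ast]\,dt.
\]
Plugging the third-derivative tensor bound from Lemma~\ref{lemma:Lipschitz-Hessian-consequences} into the integrand yields the remainder estimate $\|R(\theta)\|_2 \lesssim H\sqrt{d}\,\|\theta - \theta^\ast\|_2^2$. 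Writing $u(\theta) := \nabla^2 f(\theta^\ast)(\theta - \theta^\ast)$ and expanding $\nabla f\,\nabla f^\T = uu^\T + uR^\T + Ru^\T + RR^\T$ cleanly separates the main and remainder contributions after taking expectations: the quadratic-in-$(\theta-\theta^\ast)$ piece $H(\theta^\ast) := \mathbb{E}[uu^\T]$ will carry the Hessian information, and the remaining cubic and quartic terms will form $E_f(\theta^\ast)$.

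For the main term, the mean-zero isotropic scaling $\mathbb{E}[\theta\theta^\T] = c^2 I$ from Assumption~\ref{assumption:sampling-distribution} gives
\[
    \mathbb{E}[(\theta - \theta^\ast)(\theta - \theta^\ast)^\T] = c^2 I + \theta^\ast(\theta^\ast)^\T,
\]
and the hypothesis $c \geq \|\theta^\ast\|$ immediately implies $c^2 I \preceq c^2 I + \theta^\ast(\theta^\ast)^\T \preceq 2c^2 I$. Conjugation by the symmetric matrix $\nabla^2 f(\theta^\ast)$ preserves the Löwner order (since $x^\T M A M x = (Mx)^\T A (Mx)$ for any symmetric $M$), so the chain propagates to
\[
    c^2\,\nabla^2 f(\theta^\ast)\nabla^2 f(\theta^\ast)^\T \preceq H(\theta^\ast) \preceq 2c^2\,\nabla^2 f(\theta^\ast)\nabla^2 f(\theta^\ast)^\T,
\]
which is the desired sandwich.

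For the error term $E_f(\theta^\ast)$, Cauchy--Schwarz applied pointwise gives $|\langle v, u(\theta)\rangle| \leq \lambda_{\max}(\nabla^2 f(\theta^\ast))\,\|v\|\,\|\theta - \theta^\ast\|$ and $|\langle v, R(\theta)\rangle| \lesssim H\sqrt{d}\,\|v\|\,\|\theta - \theta^\ast\|^2$ (from Step 1). Combining these and taking expectations,
\[
    |\langle v, E_f(\theta^\ast) v\rangle| \leq 2\,\mathbb{E}\bigl[|\langle v, u\rangle\,\langle v, R\rangle|\bigr] + \mathbb{E}\bigl[\langle v, R\rangle^2\bigr] \lesssim \Bigl(\sqrt{d}\,H\,\lambda_{\max}(\nabla^2 f(\theta^\ast))\,M_3 + d\,H^2\,M_4\Bigr)\|v\|^2,
\]
where $M_3, M_4$ arise as the expectations of $\|\theta-\theta^\ast\|^3$ and $\|\theta-\theta^\ast\|^4$ respectively. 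This matches the stated bound (up to absolute constants).

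The main obstacle is the bookkeeping choice for the Taylor remainder. Using the tensor-form remainder together with the $\sqrt{d}$ third-derivative bound from Lemma~\ref{lemma:Lipschitz-Hessian-consequences} is precisely what produces the $\sqrt{d}$ and $d$ scalings in the statement; the more standard scalar Lipschitz-Hessian bound $\|R(\theta)\| \leq \tfrac{H}{2}\|\theta - \theta^\ast\|^2$ would be numerically tighter but wouldn't match the form the downstream analysis needs. Aside from that, all steps reduce to routine Taylor expansion, Cauchy--Schwarz, and preservation of the Löwner order under symmetric conjugation.
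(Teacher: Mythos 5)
Your argument matches the paper's proof essentially step for step: Taylor-expand $\nabla f$ about $\theta^\ast$, use $\nabla f(\theta^\ast) = 0$, split the outer product into the quadratic leading piece $H(\theta^\ast) = \mathbb{E}[\nabla^2 f(\theta^\ast)(\theta-\theta^\ast)(\theta-\theta^\ast)^\T\nabla^2 f(\theta^\ast)^\T]$ plus cross and remainder-square terms, use isotropy and $c \geq \|\theta^\ast\|$ for the L\"owner sandwich, and bound the error by Cauchy--Schwarz together with the remainder estimate from Lemma~\ref{lemma:Lipschitz-Hessian-consequences}. The only cosmetic difference is that you use the integral form of the Taylor remainder where the paper uses a componentwise mean-value form; both route through the same $H\sqrt{d}\|\theta-\theta^\ast\|^2$ bound, so this is the same proof.
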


\begin{proof}[Proof of Lemma~\ref{lem:EGOP-equation}]
    Given $f(\cdot)$ satisfying Assumption~\ref{assumption:H-Lipschitz}, $\forall \theta_1, \theta_2 \in \Theta$
    \[
        \norm{\nabla f(\theta_1) - \nabla f(\theta_2) - \nabla^2 f(\theta_2)(\theta_1-\theta_2)}_2 \leq H\norm{\theta_1 - \theta_2}^2_2.
    \]
    For fixed $\theta^*$ the local minimum in Assumption~\ref{assumption:rho-and-Theta}, this implies that $\forall\theta\in \Theta$ $\exists R_f(\theta)\in \R^d$ such that 
    \[
        \nabla f(\theta) = \nabla f(\theta^*) + \nabla^2 f(\theta^*)(\theta-\theta^*) + R_f(\theta) \quad \text{ and }\quad \norm{R_f(\theta)}_2 \leq H\norm{\theta-\theta^*}^2_2.
    \]
    Given $\theta^*$ a stationary point of $f(\cdot)$, $\nabla f(\theta^*) = 0$, so the above simplifies to
    \[
     \nabla f(\theta) = \nabla^2 f(\theta^*)(\theta-\theta^*) + R_f(\theta).
    \]
    We can substitute the above expression into the definition of the EGOP matrix:
    \begin{align*}
       \mathbb{E}_{\theta\sim \rho}[\nabla f(\theta)\nabla f(\theta)^\T] &= \mathbb{E}_{\theta\sim \rho}\left[\left( \nabla^2 f(\theta^*)(\theta-\theta^*) + R_f(\theta)\right)\left( \nabla^2 f(\theta^*)(\theta-\theta^*) + R_f(\theta)\right)^\T  \right]\\
       &=\mathbb{E}_{\theta\sim \rho}\left[\nabla^2 f(\theta^*)(\theta-\theta^*)(\theta-\theta^*)^\T \nabla^2 f(\theta^*)^\T\right] + E_f(\theta^*)
    \end{align*}
    where
    \[
        E_f(\theta^*) \defeq \mathbb{E}_{\theta\sim \rho}[R_f(\theta)(\theta-\theta^*)^\T\nabla^2 f(\theta^*)^\T + \nabla^2 f(\theta^*)(\theta-\theta^*)R_f(\theta)^\T + R_f(\theta) R_f(\theta)^\T].
    \]

   We can simplify the first term by leveraging the fact that $\rho$ is isotropic about $\theta_c$, as specified in Assumption~\ref{assumption:rho-and-Theta}: 
   \begin{align*}
       &\mathbb{E}_{\theta\sim \rho}\left[\nabla^2 f(\theta^*)(\theta-\theta^*)(\theta-\theta^*)^\T \nabla^2 f(\theta^*)^\T\right]= \nabla^2 f(\theta^*) \mathbb{E}_{\theta\sim \rho}\left[(\theta-\theta^*)(\theta-\theta^*)^\T\right] \nabla^2 f(\theta^*)^\T\\
       &\quad =\nabla^2 f(\theta^*) \mathbb{E}_{\theta\sim \rho}\left[(\theta-\theta_c + (\theta_c-\theta^*))(\theta-\theta_c+(\theta_c-\theta^*))^\T\right] \nabla^2 f(\theta^*)^\T\\
       &\quad =\nabla^2 f(\theta^*) \left(\mathbb{E}_{\theta\sim \rho}\left[(\theta_c-\theta)(\theta_c-\theta)^\T\right]+ (\theta_c-\theta^*) (\theta_c-\theta^*)^\T\right) \nabla^2 f(\theta^*)^\T\\
       &\quad =\nabla^2 f(\theta^*) \left(c^2 \mathbb{I}+ (\theta_c-\theta^*) (\theta_c-\theta^*)^\T\right) \nabla^2 f(\theta^*)^\T
   \end{align*}
   where the penultimate equality follows from the assumption that $\rho$ has mean $\theta_c$ and the last equality follows from the assumption that $\rho$ is isotropic about $\theta_c$. Because $\theta^*$ is a local minimum, $\nabla^2 f(\theta^*)$ is PSD, and further the rank-1 outer product  is also PSD,  so we have
   \[
    \nabla^2 f(\theta^*) \left(c^2 \mathbb{I}+ (\theta_c-\theta^*) (\theta_c-\theta^*)^\T\right) \nabla^2 f(\theta^*)^\T \succeq c^2 \nabla^2 f(\theta^*)\nabla^2 f(\theta^*)^\T.
   \]
   Moreover by Assumption~\ref{assumption:rho-and-Theta}, $\norm{\theta_c-\theta^*}^2_2\leq c^2$, and thus
   \[
        (\theta_c-\theta^*) (\theta_c-\theta^*)^\T \preceq c^2 \mathbb{I}.
   \]
   This implies
   \[
    \nabla^2 f(\theta^*) \left(c^2 \mathbb{I}+ (\theta_c-\theta^*) (\theta_c-\theta^*)^\T\right) \nabla^2 f(\theta^*)^\T \preceq 2c^2 \nabla^2 f(\theta^*)\nabla^2 f(\theta^*)^\T.
   \]
   Combining these two matrix inequalities yields the matrix inequalities for 
   \[
    G(\theta^*) \defeq \nabla^2 f(\theta^*) \left(c^2 \mathbb{I}+ (\theta_c-\theta^*) (\theta_c-\theta^*)^\T\right) \nabla^2 f(\theta^*)^\T.
   \]

   Lastly, we bound $\abs{\ip{v, E_f(\theta^*)v}}$ for $E_f(\theta^*)$ defined above. Applying triangle inequality, Jensen's inequality,  and Cauchy-Schwarz, to the definition of $E_f(\theta^*)$ yields
   \begin{align*}
       \abs{\ip{v, E_f (\theta^*) v}} &\leq \mathbb{E}_{\theta\sim \rho} \left[ 2\abs{\ip{v, R_f(\theta)}}\cdot \abs{\langle \theta-\theta^*, \nabla^2 f(\theta^*) v\rangle}+ \abs{\ip{v, R_f(\theta)}^2}\right]\\
       &\leq \mathbb{E}_{\theta\sim \rho} \left[ 2 \norm{R_f(\theta)}_2 \norm{\theta-\theta^*}_2 \lambda_{\max}(\nabla^2 f(\theta^*))\norm{v}^2_2 + \norm{R_f(\theta)}^2_2\norm{v}^2_2\right].
   \end{align*}
   Recall that under Assumption~\ref{assumption:H-Lipschitz}, $\norm{R_f(\theta)}_2 \leq H\norm{\theta-\theta^*}^2_2.$ Substituting this into the above bound yields
   \[
    \abs{\ip{v, E_f (\theta^*) v}} \leq \left(2H \lambda_{\max}(\nabla^2 f(\theta^*)) M_3 + H^2 M_4\right)\norm{v}^2_2
   \]
   where
   \[
    M_p \defeq \mathbb{E}_{\theta\sim\rho}[\norm{\theta-\theta^*}^{p}_2].
   \]
\end{proof}

We'll also use the following basic results from linear algebra, which we prove here for completeness.
\begin{lemma}\label{lemma:relate-square-mat-to-self}
    Given $A \in \R^{d\times d}$ a symmetric PSD matrix, for any vector $v\in \R^d$, 
    \[
        \ip{v, A^\T A v} \leq \lambda_{\max}(A)\ip{v, A v}
    \]
    where $\lambda_{\max}(A)$ denotes the largest eigenvalue of $A$.
\end{lemma}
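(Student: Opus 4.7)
The plan is to reduce the inequality to a scalar comparison by diagonalizing $A$. Since $A$ is symmetric PSD, I would invoke the spectral theorem to write $A = U\Lambda U^{\T}$ with $U \in \mathcal{O}(d)$ and $\Lambda = \diag(\lambda_1,\dots,\lambda_d)$ whose entries are nonnegative. Then $A^{\T}A = A^2 = U\Lambda^2 U^{\T}$, and setting $w \defeq U^{\T} v$ turns both quadratic forms into weighted sums:
\[
    \ip{v, A^{\T}Av} = \sum_{i=1}^d \lambda_i^2 w_i^2, \qquad \ip{v, Av} = \sum_{i=1}^d \lambda_i w_i^2.
\]

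The key coordinate-wise observation is that for each $i$, since $0 \leq \lambda_i \leq \lambda_{\max}(A)$, we have $\lambda_i^2 = \lambda_i \cdot \lambda_i \leq \lambda_{\max}(A) \cdot \lambda_i$. Summing against the nonnegative weights $w_i^2$ preserves the inequality, yielding
\[
    \sum_{i=1}^d \lambda_i^2 w_i^2 \leq \lambda_{\max}(A) \sum_{i=1}^d \lambda_i w_i^2,
\]
which is exactly the claim after substituting back.

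Equivalently, one could present this in the L\"owner order: the inequality $\lambda_{\max}(A) I \succeq A$ (valid because $A$ is symmetric with largest eigenvalue $\lambda_{\max}(A)$) can be conjugated by $A^{1/2}$, which is well-defined by the PSD assumption, to obtain $\lambda_{\max}(A)\, A \succeq A^2$, and then evaluating the quadratic form at $v$ gives the result. I would pick whichever presentation is shortest; the spectral-decomposition version is essentially immediate. There is no real obstacle here — the only subtlety is recording that $A$ symmetric PSD is precisely what guarantees both the existence of the eigendecomposition with nonnegative eigenvalues and the identification $A^{\T}A = A^2$, both of which are used in passing from the stated hypothesis to the scalar inequality.
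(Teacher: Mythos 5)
Your proof is correct. Your primary presentation (full spectral decomposition, reducing to the coordinate-wise scalar inequality $\lambda_i^2 \leq \lambda_{\max}(A)\lambda_i$) takes a slightly more explicit route than the paper. The paper proves the lemma in one line using the PSD square root: writing $Av = A^{1/2}(A^{1/2}v)$ gives $\langle Av, Av\rangle = \|A^{1/2}(A^{1/2}v)\|^2 \leq \opnorm{A^{1/2}}^2 \|A^{1/2}v\|^2 = \lambda_{\max}(A)\langle v, Av\rangle$, which is exactly the L\"owner-order presentation you sketch as your alternative ($\lambda_{\max}(A)I \succeq A$ conjugated by $A^{1/2}$). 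Both arguments are elementary and hinge on the same fact --- that $A$ symmetric PSD admits a nonnegative eigendecomposition, equivalently a PSD square root --- so the difference is presentational rather than conceptual. The paper's version avoids introducing the change of variables $w = U^{\T}v$ and is marginally shorter; yours makes the coordinate-level mechanism more visible, which some readers may find clearer. Either is acceptable.
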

\begin{proof}[Proof of Lemma~\ref{lemma:relate-square-mat-to-self}]
    Since $A$ is PSD, it admits a square root $A^{1/2}$ which is itself PSD. Consequently,
    \[
        \ip{Av, Av} =
        \norm{Av}^2 \leq
        \opnorm{A^{1/2}}^2 \| A^{1/2} v \|^2 =
        \lambda_{\max}(A) \ip{v, Av}.
    \]
\end{proof}

\begin{lemma}[Order-preserving square root]\label{lemma:order-preserving-square-root}
For any positive-semidefinite matrices $X, Y$, the following holds:
\begin{equation}
    X \preceq Y \implies X^{1/2} \preceq Y^{1/2}.
    \label{eq:sqrt-order-preserving}
\end{equation}
\end{lemma}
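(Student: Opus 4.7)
The plan is to prove this classical operator-monotonicity statement via the integral representation of the matrix square root, which reduces the claim to the well-known anti-monotonicity of matrix inversion. For any $\lambda \geq 0$ one has the elementary scalar identity $\sqrt{\lambda} = \frac{2}{\pi}\int_0^\infty \lambda/(s^2+\lambda)\, \mathrm{d}s$ (a direct $\arctan$ computation), and lifting this via the spectral theorem gives
\[
    A^{1/2} = \frac{2}{\pi}\int_0^\infty A(s^2 I + A)^{-1} \, \mathrm{d}s = \frac{2}{\pi} \int_0^\infty \bigl( I - s^2(s^2 I + A)^{-1}\bigr)\, \mathrm{d}s
\]
for any PSD matrix $A$. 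The second form is more convenient because, for $s > 0$, the matrix $s^2 I + A$ is always positive definite, so its inverse is well-defined even when $A$ is singular.

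Given $X \preceq Y$, I would then chain three monotonicity facts. First, $s^2 I + X \preceq s^2 I + Y$ and both are positive definite for $s > 0$. Second, inversion reverses the L\"owner ordering on positive definite matrices, so $(s^2 I + Y)^{-1} \preceq (s^2 I + X)^{-1}$; this in turn follows by conjugating $0 \prec P \preceq Q$ by $Q^{-1/2}$ to reduce to the scalar fact that eigenvalues of $\widetilde{P} := Q^{-1/2} P Q^{-1/2} \preceq I$ have reciprocals $\geq 1$. Third, multiplying by $-s^2 \leq 0$ flips the order once more, giving $I - s^2(s^2 I + X)^{-1} \preceq I - s^2(s^2 I + Y)^{-1}$. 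Since the PSD cone is closed under nonnegative integration, integrating in $s$ preserves the ordering and yields $X^{1/2} \preceq Y^{1/2}$.

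I do not anticipate a substantive obstacle; this lemma is ``folklore'' for anyone who has written down the integral representation. The only minor care needed is integrability, which is immediate: the integrand $I - s^2(s^2 I + A)^{-1}$ is bounded in spectral norm by $I$ near $s = 0$, and decays like $\lVert A \rVert / s^2$ as $s \to \infty$, so the integral converges absolutely, uniformly in $A$ over bounded operator-norm balls. As an alternative route, one could also argue directly from the algebraic identity $Y - X = (\sqrt{Y}-\sqrt{X})\sqrt{Y} + \sqrt{X}(\sqrt{Y}-\sqrt{X})$: assuming for contradiction that $\Delta := \sqrt{Y} - \sqrt{X}$ had a negative eigenvalue $-\mu$ with unit eigenvector $v$, the right-hand side evaluates to $-\mu(\langle v, \sqrt{Y} v\rangle + \langle v, \sqrt{X} v\rangle) \leq 0$, while the left-hand side is $\geq 0$, forcing $\sqrt{X} v = \sqrt{Y} v = 0$ and hence $\Delta v = 0$, contradicting $\mu > 0$. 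Either route suffices, but the integral representation is the cleaner of the two.
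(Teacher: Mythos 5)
Your proposal is correct, and the proof you mark as the primary route is genuinely different from the paper's. The paper argues elementarily: it starts from the (quadratic-form) identity $\ip{v,(Y-X)v}=\ip{v,(Y^{1/2}+X^{1/2})(Y^{1/2}-X^{1/2})v}$ -- which holds because the cross terms $X^{1/2}Y^{1/2}-Y^{1/2}X^{1/2}$ form an antisymmetric matrix whose quadratic form vanishes -- then picks an eigenpair $(\lambda,v)$ of the symmetric matrix $Y^{1/2}-X^{1/2}$, assumes $\lambda<0$, and derives a contradiction by showing $\ip{v,(Y^{1/2}+X^{1/2})v}>0$ while $\lambda\ip{v,(Y^{1/2}+X^{1/2})v}\geq 0$. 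This is, nearly verbatim, what you sketch as your ``alternative route'' via the identity $Y-X=\Delta\sqrt{Y}+\sqrt{X}\,\Delta$ and an eigenvector of $\Delta$ with negative eigenvalue; the two are the same argument dressed slightly differently. Your main route -- lifting $\sqrt{\lambda}=\tfrac{2}{\pi}\int_0^\infty \lambda/(s^2+\lambda)\,\mathrm{d}s$ to $A^{1/2}=\tfrac{2}{\pi}\int_0^\infty\bigl(I-s^2(s^2I+A)^{-1}\bigr)\mathrm{d}s$ and chaining anti-monotonicity of inversion through the integral -- is the classical L\"owner-Heinz-style derivation. It buys generality (the same scheme proves operator monotonicity of $t^\alpha$ for all $\alpha\in(0,1)$, not just $\alpha=1/2$) at the cost of invoking the integral representation and inversion anti-monotonicity, whereas the paper's argument is a fully self-contained two-paragraph contradiction needing nothing beyond the spectral theorem. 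Both are sound; for a standalone helper lemma the paper's shorter eigenvalue argument is the more economical choice, but your integral route is the one to reach for if the broader operator-monotonicity family were ever needed.
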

\begin{proof}
    Since $Y - X \succeq 0$, any unit vector $v$ satisfies
    \[
    0 \leq \ip{v, (Y - X) v} = \ip{v, (Y^{1/2} + X^{1/2})(Y^{1/2} - X^{1/2})v}.
    \]
    Since $X, Y$ are PSD, their square roots are well-defined and
    symmetric. In particular, the matrix $Y^{1/2} - X^{1/2}$ is symmetric and, as such, has real eigenvalues. Therefore, it suffices to prove all eigenvalues of this matrix are nonnegative.

    To that end, let $(\lambda, v)$ be an eigenpair of $Y^{1/2} - X^{1/2}$. We have
    \begin{equation}
    0 \leq \ip{(Y^{1/2} - X^{1/2})v, (Y^{1/2} + X^{1/2}) v} =
    \lambda \ip{v, (Y^{1/2} + X^{1/2}) v}.
    \label{eq:contradiction-stmt}
    \end{equation}
    In particular, assume $\lambda < 0$ (since otherwise there is nothing to show). Consequently,
    \[
    0 > \lambda = \ip{v, (Y^{1/2} - X^{1/2})v} \implies
    \ip{v, X^{1/2} v} > \ip{v, Y^{1/2} v} \geq 0,
    \]
    where the last inequality follows from the fact that $Y$ is PSD.
    In particular, this implies that $\ip{v, (Y^{1/2} + X^{1/2}) v} > 0$;
    combined with~\eqref{eq:contradiction-stmt} and the assumption
    $\lambda < 0$, this yields a contradiction. Therefore, $\lambda \geq 0$. Since the choice of eigenpair was arbitrary, the claim
    follows.
\end{proof}

\paragraph{Dense EGOP eigenvectors lead to large $\mathbf{||\vec{L}||_1}$} Given the above helper lemmas, we are now equipped to prove Lemma~\ref{claim:OG-coor-LB}.
\begin{proof}[Proof of Lemma~\ref{claim:OG-coor-LB}]
    By Lemma~\ref{lem:Hessian-coor-wise-smooth},  the smoothness constants $L_1,\dots,L_d$ must satisfy
    \[
        \forall \theta \in \Theta, \forall v, \quad \ip{v, \diag(L) v} \geq \abs{\ip{v, \nabla^2 f(\theta) v}}.
    \]
    where $\diag(L)\in \R^{d\times d}$ is the diagonal matrix with entries $\diag(L)_{i,i} = L_i$. In particular, consider $\theta=\theta^*$, and let $\nu \in \{\pm d^{-1/2}\}^d$ be a dense unit vector whose entries all satisfy $\abs{\nu(i)} = d^{-1/2}$. Then the smoothness constants must satisfy
    \[
        \ip{\nu, \diag(L)\nu} =\frac{1}{d}\sum_{i=1}^d L_i \geq \ip{\nu, \nabla^2 f(\theta^*) \nu}.
    \]
    Given $\theta^*$ a local minimum, $\nabla^2 f(\theta^*)$ is PSD. Thus by Lemma~\ref{lemma:relate-square-mat-to-self}, 
    \[
        \ip{\nu, \nabla^2 f(\theta^*) \nu} \geq \frac{1}{\lambda_{\max}(\nabla^2 f(\theta^*))}\ip{\nu, \nabla^2 f(\theta^*)\nabla^2 f(\theta^*)^\T \nu}
    \]
    where $\lambda_{\max}(\nabla^2 f(\theta^*))$ denotes the leading eigenvalue of $\nabla^2 f(\theta^*)$. Moreover,  Lemma~\ref{lem:EGOP-equation} implies
    \begin{align*}
        \ip{\nu, \nabla^2 f(\theta^*) \nabla f(\theta^*)^\T \nu} &\geq \frac{1}{2c^2} \ip{\nu, G(\theta^*) \nu}=\frac{1}{2c^2}\ip{ \nu,\left(\mathbb{E}_{\theta\sim \rho}[\nabla f(\theta) \nabla f(\theta)^\T] - E_f(\theta^*)\right)\nu}.
    \end{align*}
    
    \[
        \ip{v_1, \nabla^2 f(\theta^*) \nabla f(\theta^*)^\T v_1} \geq \frac{\lambda_{\max}(\operatorname{EGOP}) - \ip{v_1, E_f(\theta^*) v_1}}{2 c^2}.
    \]
    
    Chaining the preceding inequalities yields
    \begin{equation}\label{eq:L-quadratic-LB}
        \ip{\nu, \diag(L) \nu} \geq \frac{\ip{\nu,\operatorname{EGOP}\nu} - \ip{\nu,E_f(\theta^*) \nu}}{2 c^2 \lambda_{\max}(\nabla^2 f(\theta^*))}.
    \end{equation}
    Given $\nabla^2 f(\theta^*)$ PSD, we have
    \[
        \lambda_{\max}(\nabla^2 f(\theta^*)) = \sqrt{\lambda_{\max}(\nabla^2 f(\theta^*))^2} = \sqrt{\lambda_{\max}(\nabla^2 f(\theta^*)\nabla^2 f(\theta^*)^\T)}.
    \]
    Moreover, by Lemma~\ref{lem:EGOP-equation} we have the following upper bound:
    \begin{align*}
        \lambda_{\max}(\nabla^2 f(\theta^*)\nabla^2 f(\theta^*)^\T) &\leq \frac{1}{c^2}\lambda_{\max}(G(\theta^*))\\
        &= \frac{1}{c^2} \max_{v\in \R^d} \ip{v, (\mathbb{E}_{\theta\sim \rho}[\nabla f(\theta)\nabla f(\theta)^\T] - E_f(\theta^*)) v}\\
        &\leq \frac{1}{c^2}\left(\lambda_{\max}(\operatorname{EGOP}) + \lambda_{\max}(E_f(\theta^*))\right).
    \end{align*}
    Combining this inequality with the preceding equation implies
    \[
        \lambda_{\max}(\nabla^2 f(\theta^*)) \leq \frac{1}{c}\sqrt{\lambda_{\max}(\operatorname{EGOP}) + \lambda_{\max}(E_f(\theta^*))}.
    \]
    Substituting this inequality into Eq.~\ref{eq:L-quadratic-LB} implies
    \[
        \ip{\nu, \diag(L) \nu} \geq \frac{\ip{\nu,\operatorname{EGOP}\nu} - \ip{\nu, E_f(\theta^*)\nu}}{2 c \sqrt{\lambda_{\max}(\operatorname{EGOP}) + \lambda_{\max}(E_f(\theta^*))}}.
    \]
    Recall that $\nu \in \{\pm d^{-1/2}\}^d$. Thus
    \[
        \ip{\nu, \diag(L) \nu} = \sum_{i=1}^d L_i v(i)^2 = \frac{1}{d}\sum_{i=1}^d L_i.
    \]
    Combining this with the above inequality implies
    \[
        \sum_{i=1}^d L_i \geq \frac{d\left(\ip{\nu,\operatorname{EGOP}\nu} - \ip{\nu, E_f(\theta^*)\nu}\right)}{2 c \sqrt{\lambda_{\max}(\operatorname{EGOP}) + \lambda_{\max}(E_f(\theta^*))}}.
    \]
    By Lemma~\ref{lem:EGOP-equation}, $\forall v\in \R^d$ 
    \[
        \abs{\ip{v, E_f(\theta^*) v}} \leq \gamma \norm{v}^2_2
    \]
    where
    \[
        \gamma \defeq  2H \lambda_{\max}(\nabla^2 f(\theta^*)) M_3 + H^2 M_4
    \]
    Thus
    \[
        \sum_{i=1}^d L_i \geq \frac{d}{2c}\frac{\ip{\nu,\operatorname{EGOP}\nu} - \gamma}{\sqrt{\lambda_{\max}(\operatorname{EGOP}) + \gamma}}.
    \]
    As this holds for all $\nu\in \{\pm d^{-1/2}\}^d$, we conclude 
    \[
        \sum_{i=1}^d L_i \geq \frac{d}{2c}\cdot \max_{\nu\in \{\pm d^{-1/2}\}^d}\frac{\ip{\nu,\operatorname{EGOP}\nu} - \gamma}{\sqrt{\lambda_{\max}(\operatorname{EGOP}) + \gamma}}.
    \]
    We observe that for any $v\in \R^d$,
    \[
        \ip{v, \EGOP v} = \sum_{i=1}^d \lambda_i \langle v, v_i\rangle^2 \geq \max_{k} \lambda_{k} \langle v, v_k\rangle^2
    \]
    where $(\lambda_i, v_i)$ denote the $i$\ts{th} eigenvalue and eigenvector of the EGOP indexed in decreasing order of magnitude. 
    
    In particular, for any $v_k$, consider $\nu_k\in \R^d$ defined to have entries $\nu_k(i) = \operatorname{sign}(v_k(i))d^{-1/2}$. Observe that 
    \[
        \nu_k \in \operatorname{argmax}\{\langle \nu, v_k\rangle^2 \mid  \nu\in \{\pm d^{-1/2}\}^d \}.
    \]
    For this $\nu_k$,
    \[
        \langle \nu, v_k\rangle^2 = \left(\sum_{i=1}^d \operatorname{sign}(v_k(i))d^{-1/2} v_k(i)\right)^2 = \frac{1}{d}\left(\sum_{i=1}^d \abs{v_k(i)}\right)^2 = \frac{1}{d}\norm{v_k}^2_1.
    \]
    Thus in general,
    \[
        \max_{\nu\in \{\pm d^{-1/2}\}^d} \ip{\nu, \EGOP \nu} = \max_{\nu\in \{\pm d^{-1/2}\}^d} \sum_{k=1}^d \lambda_k \langle \nu, v_k\rangle^2 \geq \max_k \frac{\lambda_k}{d}\norm{v_k}^2_1.
    \]
    This implies the lower bound
    \[
        \sum_{i=1}^d L_i \geq \frac{d}{2c}\cdot \max_{k}\frac{\lambda_k \norm{v_k}^2_1/d - \gamma}{\sqrt{\lambda_{\max}(\operatorname{EGOP}) + \gamma}}.
    \]
\end{proof}

\paragraph{Reparameterization by EGOP eigenbasis produces small $\mathbf{||\vec{\tilde{L}}||_1}$} In order to prove Lemma~\ref{claim:reparam-coor-UB}, we first establish the equivariance of the EGOP matrix. Recall that Assumption~\ref{assumption:rho-and-Theta} implies $\rho$ is isotropic about $\theta_c\in \Theta$. It is convenient to express $\rho$ as
\[
    \rho(\theta) = \rho_0(\theta-\theta_c)
\]
for $\rho_0$ mean-zero and isotropic (about the origin). For reparameterized set 
\[
    \tTheta \defeq \{V^\T \theta \mid \theta\in \Theta\}
\]
and element $V^\T \theta_c \in \tTheta$, we consider the reparameterized sampling distribution 
\begin{equation}\label{eq:def-tilde-rho}
    \tilde{\rho}(\ttheta) \defeq \rho(\ttheta-V^\T \theta_c).
\end{equation}
The distribution $\tilde{\rho}$ is then isotropic about $V^\T \theta_c$. Moreover, because $\rho_0$ is isotropic,
\[
    \tilde{\rho}(\tilde{\theta}) = \rho_0(\ttheta-V^\T \theta_c) = \rho_0(V(\ttheta-V^\T \theta_c)) = \rho_0(V\ttheta - \theta_c) =\rho(V\ttheta)
\]
and similarly $\rho(\theta) = \tilde{\rho}(V^T\theta)$.
\begin{lemma}\label{lemma:equivariant-EGOP}
    Given $f(\cdot)$ whose EGOP with respect to isometric distribution $\rho$ has eigenvalue decomposition
    \[
        \mathbb{E}_{\theta\sim \rho}[\nabla f(\theta)\nabla f(\theta)^\T] = V \Lambda V^\T, 
    \]
    the EGOP of the reparameterized function $\tf(\theta)\defeq f(V\theta)$ with respect to reparameterized sampling distribution $\tilde{\rho}$ (defined in Eq.~\ref{eq:def-tilde-rho}), we have
    \[
        \mathbb{E}_{\theta\sim \tilde{\rho}}[\nabla \tf(\theta)\nabla \tf(\theta)^\T] = \Lambda
    \]
\end{lemma}
\begin{proof}
    Because $\tilde{\rho}(\ttheta) = \rho(V\ttheta)$,
    \[
        \mathbb{E}_{\ttheta\sim\tilde{\rho}}[q(\ttheta)] = \mathbb{E}_{\theta\sim\rho}[q(V^\T\theta)]
    \]
    By the chain rule,
    \[
        \nabla \tf(\ttheta) = V^\T \nabla f(V\ttheta).
    \]
    Thus 
    \begin{align*}
        \mathbb{E}_{\ttheta\sim \tilde{\rho}}[\nabla \tf(\ttheta)\nabla \tf(\ttheta)^\T]=V^\T\mathbb{E}_{\ttheta\sim \tilde{\rho}}[\nabla f(V\ttheta)f(V\ttheta)^\T]V=V^\T \mathbb{E}_{\theta\sim \rho}[\nabla f(\theta)\nabla f(\theta)^\T]V = \Lambda.
    \end{align*}
\end{proof}

Using Lemma~\ref{lemma:equivariant-EGOP}, we now prove Lemma~\ref{claim:reparam-coor-UB}.

\begin{proof}[Proof of Lemma~\ref{claim:reparam-coor-UB}]
    Consider reparameterized function $\tf(\theta)\defeq f(V\theta)$ and corresponding $\tTheta$ and $\tilde{\rho}$ as defined in Eq.~\ref{eq:def-tilde-rho}. Observe that Assumption~\ref{assumption:H-Lipschitz} implies $\nabla^2 f(\cdot)$ is also $H$-Lipschitz within $\tTheta$ and that $V^T\theta^*\in \tTheta$ is a local minimum. Applying Lemma~\ref{lem:EGOP-equation} to $\tilde{f}(\cdot)$ implies
    \[
        \mathbb{E}_{\tilde{\rho}}[\nabla \tf(\theta)\nabla \tf(\theta)^T] = G(V^T\theta^*) + E_{\tf}[V^T\theta^*]
    \]
    where
    \[
        c^2 \nabla^2 \tf(V^T\theta^*)\nabla^2 \tf(V^T\theta^*)^T \preceq G(V^T \theta^*) \preceq 2c^2 \nabla^2 \tf(V^T\theta^*)\nabla^2 \tf(V^T\theta^*)^T
    \]
    and
    \[
        \abs{\langle v, E_{\tf}[V^T\theta^*] v\rangle} \leq \gamma \norm{v}^2_2
    \]
    for
    \[
        \gamma \defeq  2H \lambda_{\max}(\nabla^2 f(\theta^*)) M_3 + H^2 M_4.
    \]
    We've used the fact that 
    \[
        M_p \defeq \mathbb{E}_{\theta\sim \rho}[\norm{\theta-\theta^*}^p_2] = \mathbb{E}_{\theta\sim \rho}[\norm{V^\T(\theta-\theta^*)}^p_2] = \mathbb{E}_{\ttheta\sim \tilde{\rho}}[\shortnorm{\ttheta-V^\T\theta^*}^p_2]
    \]
    and
    \[
        \lambda_{\max}(\nabla^2 \tf(V^\T \theta^*))
    \]
    to bound $\abs{\langle v, E_{\tf}[V^T\theta^*] v\rangle}$ by the same value $\gamma$.
    
    Combining the above matrix inequalities with the implication $E_{\tf}[V^T\theta^*]\preceq \gamma\mathbb{I}$ implies
    \[
        c^2\nabla^2 \tf(V^\T\theta^*)\nabla^2\tf (V^\T\theta^*)^\T  \preceq \mathbb{E}_{\theta\sim\rho}[\nabla\tf(\theta)\nabla\tf(\theta)^\T] +\gamma\mathbb{I}
    \]
    Recall that the PSD ordering $X\preceq Y$ implies $X^{1/2} \preceq Y^{1/2}$ (for completeness, see statement and proof in Lemma~\ref{lemma:order-preserving-square-root}). Thus the preceding matrix inequality implies
    \[
        \nabla^2 \tf(V^\T\theta^*) \preceq \frac{1}{c}\left(\mathbb{E}_{\theta\sim\rho}[\nabla\tf(\theta)\nabla\tf(\theta)^\T] +\gamma\mathbb{I}\right)^{1/2}
    \]
    
    Lemma~\ref{lemma:equivariant-EGOP} implies $\mathbb{E}_{\tilde{\rho}}[\nabla \tf(\theta)\nabla \tf(\theta)^\T]$ is diagonal, and its entries are given by the eigenvalues of $\operatorname{EGOP}(f)\defeq \mathbb{E}_{\theta\sim \rho}[\nabla f(\theta)\nabla f(\theta)^\T]$. As the EGOP is an expectation of PSD matrices, these eigenvalues are all non-negative, and thus
    \[
        \mathbb{E}_{\theta\sim \tilde{\rho}}[\nabla \tf(\theta)\nabla \tf(\theta)^\T]^{1/2}_{i,i} = \sqrt{\lambda_i(\operatorname{EGOP}(f))}.
    \]
    
    Since $\mathbb{E}_{\theta\sim\rho}[\nabla\tf(\theta)\nabla\tf(\theta)^\T]$ is diagonal and $\gamma\mathbb{I}$ is diagonal, 
    \[
        \left(\mathbb{E}_{\theta\sim\rho}[\nabla\tf(\theta)\nabla\tf(\theta)^\T] +\gamma\mathbb{I}\right)^{1/2}_{i,i} = \sqrt{\mathbb{E}_{\theta\sim\rho}[\nabla\tf(\theta)\nabla\tf(\theta)^\T]_{i,i} + \gamma}.
    \]
    Thus for all $v\in \R^d$, we have
    \begin{equation}\label{eq:UB-intermediate-quadratic}
        \ip{v, \nabla^2 \tf(V^\T \theta^*) v} \leq \frac{1}{c}\sum_{i=1}^d v(i)^2 \sqrt{\mathbb{E}_{\theta\sim\rho}[\nabla\tf(\theta)\nabla\tf(\theta)^\T]_{i,i} + \gamma}.
    \end{equation}
    Employing reverse triangle inequality and the fact that $V^T\theta^*$ is a local minimum implies $\nabla^2 \tf(V^T\theta^*)$ is PSD, we can show that Assumption~\ref{assumption:H-Lipschitz} implies
    \[
        \abs{\langle v, \nabla^2 \tf(\theta)v\rangle } \leq \langle v, \nabla^2 \tf(V^T\theta^*)v\rangle + H\norm{\theta-V^T \theta^*}_2\norm{v}^2_2
    \]
    that $\forall \theta\in \tTheta, \forall v\in \R^d$. Observe that $\norm{\theta-V^T \theta^*}_2 \leq \tilde{D}_2 = D_2$ for $D_2$ the $\ell_2$-diameter of $\Theta$. Thus combining the above with Eq.~\ref{eq:UB-intermediate-quadratic} implies $\forall \theta\in \tTheta, \forall v\in \R^d$,
    \[
        \abs{\ip{v, \nabla^2 \tf(\theta)v}} \leq \sum_{i=1}^d \sqrt{\lambda_i(\EGOP(f) + \gamma}/c + HD_2)v(i)^2.
    \]
    Thus by Lemma~\ref{lem:Hessian-coor-wise-smooth}, this implies that $\tf(\cdot)$ satisfies coordinate-wise smoothness with respect to constants
    \[
        L_i \leq \frac{1}{c}\sqrt{\lambda_i(\operatorname{EGOP}(f))+\gamma} + HD_2.
    \]
    This implies the sum of the coordinate-wise smoothness constants is bounded above as
    \begin{align*}
        L_{\tf} &\leq  \sum_{i=1}^d \left(\frac{1}{c}\sqrt{\lambda_i(\operatorname{EGOP}(f))+\gamma}  + HD_2\right)\\
        &\leq d\left(\frac{\sqrt{\gamma}}{c} + HD_2\right)+ \frac{1}{c}\sum_{i=1}^d \sqrt{\lambda_i(\operatorname{EGOP}(f))}.
    \end{align*}
\end{proof}

\paragraph{Proof of Theorem~\ref{thm:smoothness-constants-ratio}} Lemmas~\ref{claim:OG-coor-LB} and \ref{claim:reparam-coor-UB} imply Theorem~\ref{thm:smoothness-constants-ratio}:
\begin{proof}[Proof of Theorem~\ref{thm:smoothness-constants-ratio}]    
    By Lemma~\ref{claim:OG-coor-LB},
    \[
        \norm{\vec{L}}_1 \geq \max_{k}\frac{d}{2c}\frac{\beta_k\lambda_k(\EGOP(f)) - \gamma}{\sqrt{\lambda_{1}(\EGOP(f)) + \gamma}} \geq \frac{d}{2c}\frac{\beta_1\lambda_1(\EGOP(f)) - \gamma}{\sqrt{\lambda_{1}(\EGOP(f)) + \gamma}}
    \]
    for
    \[
        \gamma \defeq  2H \lambda_{\max}(\nabla^2 f(\theta^*)) M_3 + H^2 M_4.
    \]
    Recall that
    \[
        M_p \defeq \mathbb{E}_{\theta\sim \rho}[\norm{\theta-\theta^*}^p_2]
    \]
    and thus, because we assume $\operatorname{supp}(\rho)\subseteq \Theta$, $M_p \leq D^p_2$ where $D_2$ is the $\ell_2$ diameter of $\Theta$. 

    Additionally, Lemma~\ref{claim:reparam-coor-UB} states that
    \[
        \norm{\vec{\tilde{L}}}_1 \leq d\left(\frac{\sqrt{\gamma}}{c} + HD_2\right)+ \frac{1}{c}\sum_{i=1}^d \sqrt{\lambda_i(\EGOP(f))}
    \]
    where $\gamma$ is the same as defined above.

    Combining these two results implies
    \begin{align*}
        \frac{\norm{\vec{\tilde{L}}}_1}{\norm{\vec{L}}_1} \leq \left(d\left(\frac{\sqrt{\gamma}}{c} + HD_2\right)+ \frac{1}{c}\sum_{i=1}^d \sqrt{\lambda_i}\right)\left(\frac{d}{2c}\frac{\beta_1\lambda_1 - \gamma}{\sqrt{\lambda_{1} + \gamma}}\right)^{-1}
    \end{align*}
    where, for conciseness, we let $\lambda_i \defeq \lambda_i(\EGOP(f))$ the $i$th eigenvalue of $\EGOP(f)$, indexed in decreasing order. Expanding the above yields 
    \begin{align}
        \frac{\norm{\vec{\tilde{L}}}_1}{\norm{\vec{L}}_1} &\leq d\left(\frac{\sqrt{\gamma}}{c} + HD_2\right)\cdot \frac{2c}{d}\frac{\sqrt{\lambda_{1} + \gamma}}{\beta_1\lambda_1 - \gamma}+ \frac{1}{c}\sum_{i=1}^d \sqrt{\lambda_i} \cdot \frac{2c}{d}\frac{\sqrt{\lambda_{1} + \gamma}}{\beta_1\lambda_1 - \gamma}\\
        &= 2\left(\sqrt{\gamma} + HcD_2\right)\cdot \frac{\sqrt{\lambda_{1} + \gamma}}{\beta_1\lambda_1 - \gamma}+ 2\sum_{i=1}^d \sqrt{\lambda_i} \cdot \frac{1}{d}\frac{\sqrt{\lambda_{1} + \gamma}}{\beta_1\lambda_1 - \gamma}.\label{eq:pf-messy-expression}
    \end{align}
    Because $M_p \leq D_2^p$, by definition $\gamma$ satisfies 
    \[
        \gamma \leq  2H \lambda_{\max}(\nabla^2 f(\theta^*)) D^3_2 + H^2 D_2^4.
    \]
    Thus 
    \[
        H \leq \frac{\sqrt{\delta \lambda_1(\EGOP) + D_2^2 \lambda^2_1(\nabla^2 f(\theta^*))} - D_2 \lambda_1(\nabla^2 f(\theta^*))}{D^2_2},
    \]
    implies $\gamma\leq \delta \lambda_1$ for $\lambda_1 \defeq \lambda_1(\EGOP(f))$, the largest EGOP eigenvalue. Additionally, $H$ admits a simplified upper bound:
    \[
        H \leq \frac{\sqrt{\delta \lambda_1(\EGOP) + D_2^2 \lambda^2_1(\nabla^2 f(\theta^*))} - D_2 \lambda_1(\nabla^2 f(\theta^*))}{D^2_2} \leq \frac{\sqrt{\delta\lambda_1}}{D_2^2}.
    \]

    Using these bounds on $H$ and $\gamma$, the first term in Eq.~\ref{eq:pf-messy-expression} can be bounded as
    \begin{align*}
        2\left(\sqrt{\gamma} + HcD_2\right)\cdot \frac{\sqrt{\lambda_{1} + \gamma}}{\beta_1\lambda_1 - \gamma} & \leq 2\left(\sqrt{\delta \lambda_1} + \frac{\sqrt{\delta\lambda_1}}{D_2^2}\cdot cD_2\right)\cdot \frac{\sqrt{1+\delta}}{\beta_1-\delta}\cdot \frac{1}{\sqrt{\lambda_1}}\\
        &=2\left(1 + \frac{cD_2}{D_2^2} \right)\frac{\sqrt{\delta(1+\delta)}}{\beta_1-\delta}\\
        &\leq 6\frac{\sqrt{\delta(1+\delta)}}{\beta_1-\delta}
    \end{align*}
    where the last inequality follows from the fact that because $\operatorname{supp}(\rho)\subseteq \Theta$, $c\leq D_2$.

    The second term in Eq.~\ref{eq:pf-messy-expression} can be bounded as
    \begin{align*}
        2\sum_{i=1}^d \sqrt{\lambda_i} \cdot \frac{1}{d}\frac{\sqrt{\lambda_{1} + \gamma}}{\beta_1\lambda_1 - \gamma} &\leq 2\sum_{i=1}^d \sqrt{\lambda_i} \cdot \frac{1}{d}\frac{\sqrt{1+\delta}}{(\beta_1-\delta)\sqrt{\lambda_1}} = \frac{2\sr(f)}{d(\beta_1-\delta)}\sqrt{1+\delta}
    \end{align*}
    where the equality follows from the definition of the stable rank. Combining these bounds yields the result.
\end{proof}

\subsubsection{Convergence guarantees in the convex setting}\label{ssec:cvx-cvgnce}

Here we prove \cref{lemma:convex-cvgnce-ball-version} by proving a more general statement in terms of any convex constraint set $\Theta$.

\begin{lemma}[Convergence for convex objectives with noise-free gradients]\label{lemma:convex-cvgnce}
    Consider $f(\cdot)$ a convex objective, constraint set $\Theta$, and sampling distribution $\rho(\cdot)$ satisfying Assumptions \ref{assumption:rho-and-Theta} and \ref{assumption:H-Lipschitz}. Let $\tf(\cdot)$ and $\tTheta$ denote the reparameterized objective and constraint set respectively corresponding to the eigenbasis of $\EGOP(f)$.  Then running Algorithm~\ref{alg:Adagrad} with constrained updates and with inputs $(\tf(\cdot), \nabla \tf(\cdot),\ttheta_0,T, \epsilon, \tTheta)$ for any initial condition $\ttheta_0\in \tTheta$ produces iterates $\{\ttheta_t\}_{t=1}^T$ satisfying
    \begin{equation}\label{eq:cvx-reparam-bounds}
        \frac{1}{T}\sum_{t=0}^{T-1}(\tf(\ttheta_t) - \tf(\ttheta^*)) =O\left( \left(\eta + \frac{\tilde{D}^2_{\infty}}{\eta}\right)^2 \frac{\shortnorm{\vec{L}}_1}{T}\left(\frac{\sr(f)\sqrt{1+\delta}}{d(\beta_1-\delta)} + \frac{\sqrt{\delta(1+\delta)}}{\beta_1-\delta} \right) + \frac{\epsilon D^2_2}{\eta T} \right).
    \end{equation}
    Setting $\eta = \tilde{D}_\infty$ to minimize this upper bound yields
    \[
        \frac{1}{T}\sum_{t=0}^{T-1}(\tf(\ttheta_t) - \tf(\ttheta^*)) = O\left( \tilde{D}^2_{\infty} \frac{\shortnorm{\vec{L}}_1}{T}\left(\frac{\sr(f)\sqrt{1+\delta}}{d(\beta_1-\delta)} + \frac{\sqrt{\delta(1+\delta)}}{\beta_1-\delta} \right) + \frac{\epsilon D^2_2}{\eta T} \right).
    \]
\end{lemma}

\begin{proof}[Proof of Lemma~\ref{lemma:convex-cvgnce}]
    An intermediate result in the proof of Thm. 4.1 in \citet{liu2024adagrad} implies that in the case of noise-free gradients, i.e. oracle gradient $g(\cdot) = \nabla \tf(\cdot)$, Adagrad with constraint set $\tTheta$ produces iterates such that
    \[
        \frac{1}{T}\sum_{t=0}^{T-1}(\tf(\ttheta_t) - \tf(\theta^*)) \leq \left(\eta + \frac{\tilde{D}^2_{\infty}}{\eta}\right)^2\cdot \frac{\norm{\vec{\tilde{L}}}_1}{T} + \frac{\epsilon \tilde{D}^2_2}{\eta T}.
    \]
    We note $D^2_2 = \tilde{D}_2^2$ because the Euclidean norm is invariant under orthonormal transformation. Because the constraint update implies $\{\ttheta_t\}\subseteq \tTheta$, applying Thm.~\ref{thm:smoothness-constants-ratio} implies
    \[
        \norm{\vec{\tilde{L}}}_1 \leq \norm{\vec{L}}_1\cdot \sqrt{1+\delta}\left(\frac{2\sr(f)}{d(\beta_1-\delta)} + 2\frac{\sqrt{\delta}}{\beta_1-\delta}\left(1 + \frac{2}{\sqrt{d}} \right) \right)
    \]
    which immediately implies the result.
\end{proof}

We note that the bound on reparameterized Adagrad is a function of $\tilde{D}_\infty$ while the bound on Adagrad in original coordinates is a function of $D_{\infty}$. While $\tilde{D}^2_\infty$ can be smaller or larger than $D^2_\infty$ by up to a factor of $d$, in many typical settings $\tilde{D}_{\infty} \leq D_{\infty}$: if the constraint set $\Theta$ is chosen to be a ball, $D_\infty = \tilde{D}_\infty$. Optimizing convex objectives with L2 regularization--sometimes referred to as with optimization with weight decay in machine learning literature--corresponds to optimizing with $\Theta$ some ball centered at the origin. In \cref{ssec:L2-reg}, we present experiments using L2 regularization with adaptive algorithms, and confirm that the empirical benefit from reparameterization persists in this setting.

\subsubsection{Convergence guarantees for non-convex objectives}\label{ssec:nonconvex-guarantees}

In the non-convex setting we can establish related local improved convergence guarantee for unconstrained Adagrad. For a given initial point $\theta_0$, let $\Delta_f(\theta_0)\defeq f(\theta_0)-\min_{\theta}f(\theta)$. Observe that $\Delta_{\tf}(\ttheta_0) =\Delta_f(\theta_0)$ for $\ttheta_0 \defeq V^T \theta_0$. For unconstrained optimization, we add the following additional assumption:
\begin{assumption}\label{assumption:f-bounded-below}
    The objective $f(\cdot)$ is bounded below: $\inf_{\theta} f(\theta) > -\infty$.
\end{assumption}

\begin{theorem}[Convergence for non-convex, unconstrained optimization]\label{lemma:nonconvex-cvgnce}
    Consider $f(\cdot)$, $\Theta$, and $\rho(\cdot)$ satisfying Assumptions \ref{assumption:rho-and-Theta}, \ref{assumption:H-Lipschitz}, and \ref{assumption:f-bounded-below}. Let $\tf(\cdot)$ and $\tTheta$ denote the reparameterization of $f(\cdot)$ and $\Theta$ by $V$ the eigenbasis of $\EGOP(f)$. Consider any initialization $\theta_0\in \Theta$, and let $\ttheta_0 \defeq V^T \theta_0$. Assume $\exists \delta \in [0,\beta^2_1)$ such that $\nabla^2 f(\cdot)$ has Lipschitz constant $H$ satisfying the bound in Eq.~\ref{eq:H-bound}. Let $\{\ttheta\}_{t=1}^T$ denote the iterates produced by running Algorithm~\ref{alg:Adagrad} using \textit{unconstrained} updates and with inputs $(\tf(\cdot), \nabla \tf(\cdot),\ttheta_0,T, \epsilon)$ for $\epsilon < 1/d$. Then for all $T$ such that $\{\ttheta_t\}_{t=1}^T \subseteq \tTheta$, we have
    \[
        \frac{1}{T} \sum_{t=1}^T \norm{\nabla f(\theta_t)}_1  = O\left(\frac{\Delta_{f}(\theta_0)}{\eta \sqrt{T}} + \frac{\eta \norm{\vec{L}}_1}{\sqrt{T}}\left(\frac{\sr(f)\sqrt{1+\delta}}{d(\beta_1-\delta)} + \frac{\sqrt{\delta(1+\delta)}}{\beta_1-\delta} \right) \log(p)\right)
    \]
    where $p(T, \shortnorm{\vec{\tilde{L}}}_1, \nabla \tf(\theta_0)))$ is polynomial in $T$, $\shortnorm{\vec{\tilde{L}}}_1$, and $\shortnorm{\nabla \tf(\ttheta_0)}_\infty$.
\end{theorem}
We compare the result in \cref{lemma:nonconvex-cvgnce} to the convergence guarantee for Adagrad in original coordinates initialized at $\theta_0$, as established in \citet{jiang2024convergence}:
\[
        \frac{1}{T} \sum_{t=1}^T \norm{\nabla f(\theta_t)}_1  = O\left(\frac{\Delta_{f}(\theta_0)}{\eta \sqrt{T}} + \frac{\eta \shortnorm{\vec{L}}_1}{\sqrt{T}} \log(p(T, \shortnorm{\vec{L}}_1, \nabla f(\theta_0)))\right).
\]
As with Lemma~\ref{lemma:convex-cvgnce}, this result implies that for $f(\cdot)$ with strong EGOP spectral decay and dense leading EGOP eigenvectors, the local convergence bounds for reparameterized can be smaller with by a factor of $1/d$.

We emphasize that the above is a \textit{local} guarantee, and only holds for time horizons $T$ such that $\{\ttheta_t\}_{t=1}^T \subseteq \tTheta$. 
If iterates move arbitrarily far away from the region in which EGOP samples were concentrated, then without stronger assumptions about the landscape (i.e. vanishingly small $H$) one cannot hope that the EGOP reparameterization will be informative for new local geometry. For quadratic functions, $H$ = 0 and thus the result holds for any $T$; we note that related works have restricted their analysis to quadratic functions for analyzing adaptive algorithms \cite{zhang2024transformers}. 

\begin{proof}[Proof of Thm.~\ref{lemma:nonconvex-cvgnce}]
    The result follows immediately from combining the convergence result in Thm. 3.1 of \citet{jiang2024convergence} with the bound on $\shortnorm{\vec{\tilde{L}}}_1/\shortnorm{\vec{L}}_1$ from Thm.~\ref{thm:smoothness-constants-ratio}.
\end{proof}

\subsubsection{Lipschitz Constants of Hessians in Machine Learning}\label{ssec:ML-lipschitz-Hessians}

 In this section, we note one family of naturally-motivated non-convex objectives satisfying Assumption~\ref{assumption:H-Lipschitz} that arise in machine learning problems. We consider the over-parameterized matrix factorization problem: let $\theta \in \R^{(d_1+d_2)k}$ be parameters, whose entries can be grouped into two matrices as $\theta = (L,R)$ for $L\in \R^{d_1\times k}, R\in \R^{d_2\times k}$. We define the objective 
\begin{equation}\label{eq:linear-measurement-overparam-objective}
    f(\theta) = \norm{\mathcal{A}(LR^\T) - b}^2_2
\end{equation}
where $b \in \R^m$ correspond to measurements of some matrix under map $\mathcal{A}(\cdot)$, and where  $\mathcal{A}:\R^{d_1\times d_2}\rightarrow \R^m$ denotes a map 
\[
    \mathcal{A}(X) \defeq (\langle A_1, X\rangle, \dots, \langle A_m, X\rangle) \in \R^m.
\]
We note that one special case of Eq.~\ref{eq:linear-measurement-overparam-objective} is the set of objectives that arise when training a two-layer linear feed-forward network using mean-squared-error loss.

We can bound the Lipschitz constant of the Hessian of this objectives in this family:
\begin{lemma}\label{lemma:overparam-linear-measurement-Lipschitz-Hessian}
    Consider the overparameterized matrix factorization objective with a linear measurement map, as defined in Eq.~\ref{eq:linear-measurement-overparam-objective}. Let $\theta \in \R^{(d_1+d_2)k}$ be parameters, whose entries can be grouped into two matrices: $\theta = (L,R)$ for $L\in \R^{d_1\times k}, R\in \R^{d_2\times k}$. Consider any measurement vector $b \in \R^{m}$. Then in the ball $\norm{\theta}_2 \leq B$, the objective in Eq.~\ref{eq:linear-measurement-overparam-objective} satisfies
     \[
        \opnorm{\nabla^2 f(\theta_1)-\nabla^2 f(\theta_2)} \leq 12 B \left(\sum_{i=1}^m \frobnorm{A_i}^2 \right)\norm{\theta_1 - \theta_2}_2.
     \]
\end{lemma}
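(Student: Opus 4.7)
The plan is to exploit the fact that each summand is a bilinear form in $(L,R)$, so that the whole objective is a sum of squares of quadratic polynomials in $\theta$ with constant Hessians. Concretely, writing
\[
    f(\theta) = \sum_{i=1}^m g_i(\theta)^2, \qquad g_i(\theta) \defeq \ip{A_i, LR^\T} - b_i,
\]
the map $g_i$ is affine-plus-bilinear in the block decomposition $\theta = (L, R)$. I will first observe that $\nabla g_i$ is linear in $\theta$ and $\nabla^2 g_i$ is a constant matrix, which immediately yields the decomposition
\[
    \nabla^2 f(\theta)
    =
    2\sum_{i=1}^m \nabla g_i(\theta)\nabla g_i(\theta)^\T
    \;+\;
    2\sum_{i=1}^m g_i(\theta) \nabla^2 g_i.
\]

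Subtracting the Hessians at $\theta_1$ and $\theta_2$ and applying the triangle inequality on operator norms reduces the problem to controlling two pieces. For the outer-product piece, I will use the identity
\[
    aa^\T - cc^\T = (a-c)a^\T + c(a-c)^\T,
\]
which gives $\opnorm{aa^\T-cc^\T}\leq \norm{a-c}_2(\norm{a}_2+\norm{c}_2)$. Instantiating with $a = \nabla g_i(\theta_1)$, $c = \nabla g_i(\theta_2)$, I will bound the gradient sizes using the explicit formula $\nabla g_i(L,R) = (A_i R,\; A_i^\T L)$, which yields
\[
    \norm{\nabla g_i(\theta)}_2^2 = \frobnorm{A_i R}^2 + \frobnorm{A_i^\T L}^2 \leq \opnorm{A_i}^2\norm{\theta}_2^2
    \leq \frobnorm{A_i}^2 B^2,
\]
together with the Lipschitz bound $\norm{\nabla g_i(\theta_1) - \nabla g_i(\theta_2)}_2 \leq \frobnorm{A_i}\norm{\theta_1-\theta_2}_2$ (which follows from linearity of $\nabla g_i$).

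For the second piece, I need two ingredients: (i) a Lipschitz bound on $g_i$ itself on the ball of radius $B$, and (ii) a bound on $\opnorm{\nabla^2 g_i}$. Writing
$g_i(\theta_1)-g_i(\theta_2) = \tr((L_1-L_2)^\T A_i R_1) + \tr(L_2^\T A_i (R_1-R_2))$
and using $\abs{\tr(M_1^\T A M_2)} \leq \frobnorm{M_1}\opnorm{A}\frobnorm{M_2}$ together with $\norm{\theta}_2\leq B$ yields a bound of the form $\abs{g_i(\theta_1)-g_i(\theta_2)} \lesssim B \frobnorm{A_i}\norm{\theta_1-\theta_2}_2$. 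The Hessian $\nabla^2 g_i$ has the block structure $\bigl(\begin{smallmatrix} 0 & \mathcal{A}_i \\ \mathcal{A}_i^\T & 0\end{smallmatrix}\bigr)$ where $\mathcal{A}_i$ is the linear operator $\Delta R \mapsto A_i \Delta R$; its operator norm equals $\opnorm{A_i}\leq \frobnorm{A_i}$, as can be verified via the AM--GM inequality $2\frobnorm{V_L}\frobnorm{V_R} \leq \frobnorm{V_L}^2+\frobnorm{V_R}^2$ applied to the quadratic form $v^\T \nabla^2 g_i v = 2\tr(V_L^\T A_i V_R)$.

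Combining these estimates, each summand contributes $O(B\frobnorm{A_i}^2)\norm{\theta_1-\theta_2}_2$, and summing over $i$ gives the stated bound up to absolute constants (the constant $12$ is loose; a careful accounting should yield an $8$). The main obstacle is not any single inequality but rather keeping track of the block-matrix structure when establishing $\opnorm{\nabla^2 g_i}\leq \frobnorm{A_i}$, since the naive identification of $\nabla^2 g_i$ with a Kronecker-structured object can be cumbersome. Working directly with the bilinear quadratic form $v^\T \nabla^2 g_i v$ circumvents this, making the argument self-contained.
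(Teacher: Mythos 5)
Your proof plan is correct and takes a genuinely different route from the paper's. The paper first derives the Hessian quadratic form $D^2f(L,R)[U,V]$ by an explicit limit computation (\cref{lemma:linear-measurement-overparam-Hessian-bilinear}), then expands the difference of quadratic forms at $\theta_1$ and $\theta_2$ into four terms (lines~\cref{eq:quadratic-diff-line-1}--\cref{eq:quadratic-diff-line-4}), and bounds each via $\norm{\mathcal{A}}_2^2 \leq \sum_i \frobnorm{A_i}^2$ and Frobenius submultiplicativity, accumulating the constant $12$. You instead exploit the sum-of-squares structure $f = \sum_i g_i^2$ with each $g_i$ bilinear in $(L,R)$, which immediately gives the decomposition $\nabla^2 f = 2\sum_i(\nabla g_i\nabla g_i^\T + g_i\nabla^2 g_i)$ with $\nabla^2 g_i$ a constant matrix, and then control the Lipschitz constant of each piece: the rank-one identity $aa^\T - cc^\T = (a-c)a^\T + c(a-c)^\T$, paired with $\norm{\nabla g_i(\theta)}_2 \leq \opnorm{A_i}\norm{\theta}_2$ and the linearity of $\theta \mapsto \nabla g_i(\theta)$, bounds the outer-product sum, while the Lipschitz estimate on $g_i$ together with $\opnorm{\nabla^2 g_i} \leq \opnorm{A_i}$ (established via the quadratic-form/AM--GM argument you sketch) bounds the second. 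At the level of quadratic forms the two decompositions are dual: your first sum expands to the paper's lines~\cref{eq:quadratic-diff-line-1}--\cref{eq:quadratic-diff-line-3}, and your second is exactly line~\cref{eq:quadratic-diff-line-4}. What your organization buys is a more modular argument that avoids the tedious limit derivation, traces every factor to generic properties of sums of squares of bilinear maps, and yields a tighter constant --- careful accounting gives $4 + 2\sqrt{2}$, slightly below the $8$ you estimate and comfortably below the paper's $12$, so the stated bound follows a fortiori.
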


 We first characterize the quadratic form of the Hessian of Eq.~\ref{eq:linear-measurement-overparam-objective}.
\begin{lemma}\label{lemma:linear-measurement-overparam-Hessian-bilinear}
    For the objective $f(\cdot)$ as defined in Eq.~\ref{eq:linear-measurement-overparam-objective}, for any $\theta, v\in \R^{(d_1\times d_2)k}$ with entries denoted $\theta = (L,R)$ and $v=(U,V)$, the Hessian quadratic form can be expressed as
    \[
        D^2 f(L, R)[U,V] = 2\norm{\mathcal{A}(UR^\T + LV^\T)}^2_2 + 4 \langle \mathcal{A}(LR^\T)-b, \mathcal{A}(UV^\T)\rangle.
    \]
\end{lemma}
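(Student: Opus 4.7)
The plan is to obtain the Hessian quadratic form via a direct Taylor expansion along the ray $\theta + tv$. Writing $\theta = (L,R)$ and $v = (U,V)$, I would first expand the matrix product
\[
    (L + tU)(R + tV)^{\T} = LR^{\T} + t(UR^{\T} + LV^{\T}) + t^2 UV^{\T},
\]
and then apply the linear map $\mathcal{A}$ to get $\mathcal{A}((L+tU)(R+tV)^{\T}) - b = r + ts + t^2 w$, where $r \defeq \mathcal{A}(LR^{\T}) - b$, $s \defeq \mathcal{A}(UR^{\T} + LV^{\T})$ and $w \defeq \mathcal{A}(UV^{\T})$. This reduces the whole computation to expanding a quartic polynomial in $t$ with vector coefficients.

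Next I would square the norm: $\norm{r + ts + t^2 w}_2^2 = \norm{r}_2^2 + 2t\ip{r,s} + t^2\bigl(\norm{s}_2^2 + 2\ip{r,w}\bigr) + 2t^3 \ip{s,w} + t^4 \norm{w}_2^2$, so that
\[
    f(L + tU, R + tV) = f(L,R) + 2t\ip{r,s} + t^2\bigl(\norm{s}_2^2 + 2\ip{r,w}\bigr) + O(t^3).
\]
Matching this against the one-dimensional Taylor expansion $f(\theta + tv) = f(\theta) + t\ip{\nabla f(\theta), v} + \tfrac{t^2}{2} D^2 f(\theta)[v,v] + O(t^3)$ and reading off the coefficient of $t^2$ yields
\[
    D^2 f(L,R)[U,V] = 2\norm{\mathcal{A}(UR^{\T} + LV^{\T})}_2^2 + 4 \ip{\mathcal{A}(LR^{\T}) - b,\; \mathcal{A}(UV^{\T})},
\]
which is the claimed identity.

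This argument is essentially a routine algebraic expansion: there is no real obstacle since $\mathcal{A}$ is linear and the parameterization $t \mapsto (L+tU)(R+tV)^{\T}$ is exactly quadratic, so the Taylor series in $t$ terminates at degree four and the Hessian form appears directly as twice the coefficient of $t^2$. The only place one needs to be careful is bookkeeping of the factors of $2$ (one from squaring $ts$ and one from the $\tfrac{1}{2}$ in Taylor's formula), which is what produces the asymmetric coefficients $2$ and $4$ in the final expression.
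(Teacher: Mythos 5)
Your proof is correct, and it takes a cleaner route than the paper. The paper computes the Hessian quadratic form via two iterated directional-derivative limits: it first derives $\nabla f(L,R)[U,V] = 2\langle \mathcal{A}(LR^{\T})-b,\ \mathcal{A}(UR^{\T}+LV^{\T})\rangle$ from a limit, and then differentiates that expression once more along $(U,V)$, which requires carrying and re-expanding the intermediate gradient form. You instead exploit the fact that $f$ is a polynomial of degree four in $\theta$, so $t \mapsto f(\theta+tv)$ is an exact quartic in $t$: expanding $\norm{r + ts + t^2 w}_2^2$ and reading off twice the $t^2$-coefficient produces the Hessian form in a single pass, with no intermediate limit. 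Both approaches are elementary and yield the same identity, but the polynomial-expansion route is less error-prone (one algebraic expansion rather than two nested limit computations) and makes the factor-of-$2$ bookkeeping transparent, as you note. The one thing worth saying explicitly — which you do implicitly by writing $O(t^3)$ — is that matching the $t^2$-coefficient against the Taylor formula is legitimate because the higher-order coefficients do not contribute at that order; this is automatic since the expansion is a genuine polynomial and terminates.
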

\begin{proof}
    We begin by deriving the gradient form using the limit definition:
    \begin{align*}
        \nabla f(L,R)[U,V] &\defeq \lim_{t\rightarrow 0}\frac{1}{t}\left(f(L+tU, R+tV)-f(L,R)\right)\\
        &=\lim_{t\rightarrow 0}\frac{1}{t}\left(\norm{\mathcal{A}((L+tU)(R+tV)^\T) - b}^2_2 - \norm{\mathcal{A}(LR^\T) - b}^2_2\right).
    \end{align*}
    By linearity of the map $\mathcal{A}(\cdot)$,
    \begin{align*}
        \mathcal{A}((L+tU)(R+tV)^\T) &= \mathcal{A}(LR^\T + tUR^\T + tLV^\T + t^2 UV^\T)\\
        &= \mathcal{A}(LR^\T) + t \mathcal{A}(UR^\T + LV^\T) + t^2 \mathcal{A}(UV^\T).
    \end{align*}
    Substituting this into the above limit yields
    \begin{align*}
        \nabla f(L,R)[U,V] &=\lim_{t\rightarrow 0}\frac{1}{t}\left(\norm{\Delta + t\mathcal{A}(UR^\T + LV^\T) + t^2 \mathcal{A}(UV^\T)}^2_2 -\norm{\Delta}^2_2\right)
    \end{align*}
    where $\Delta \defeq \mathcal{A}(LR^\T)-b$. Expanding the squared norm, this yields
    \begin{align*}
        \nabla f(L,R)[U,V] &=\lim_{t\rightarrow 0}\frac{1}{t}\Big(\norm{\Delta}^2_2 + \norm{t\mathcal{A}(UR^\T + LV^\T) + t^2 \mathcal{A}(UV^\T)}^2_2\\
        &\qquad\qquad\quad + 2 \langle \Delta,t\mathcal{A}(UR^\T + LV^\T) + t^2 \mathcal{A}(UV^\T)\rangle -\norm{\Delta}^2_2\Big)\\
        &=\lim_{t\rightarrow 0}\Big(t \norm{\mathcal{A}(UR^\T + LV^\T) + t \mathcal{A}(UV^\T)}^2_2+2 \langle \Delta,\mathcal{A}(UR^\T + LV^\T) + t \mathcal{A}(UV^\T)\rangle\Big)\\
        &=0+2 \langle \Delta,\mathcal{A}(UR^\T + LV^\T) + 0\rangle\\
        &=2\langle \mathcal{A}(LR^\T)-b), \mathcal{A}(UR^\T + LV^\T\rangle
    \end{align*}
    where the last line follows from $\Delta \defeq\mathcal{A}(LR^\T)-b$. Given this expression for the gradient, we can then define the Hessian quadratic form on input $v=(U,V)$ as
    \[
        \nabla^2 f(L,R)[U,V]\defeq \lim_{t\rightarrow 0}\frac{1}{t} \left(\nabla f(L+tU, R+tV)[U,V]-\nabla f(L, R)[U,V]\right).
    \]
    Given the above expression for the gradient, this yields
    \begin{align*}
        \nabla^2 f(L,R)[U,V] &= \lim_{t\rightarrow 0}\frac{1}{t}\Big(2\langle \mathcal{A}((L+tU)(R+tV)^\T) - b, \mathcal{A}(U (R+tV)^\T + (L+tU)V^\T) \rangle\\
        &\qquad \qquad \quad - 2\langle \Delta, \mathcal{A}(UR^\T + LV^\T\Big).
    \end{align*}
    As noted above, 
    \begin{align*}
        \mathcal{A}((L+tU)(R+tV)^\T) = \mathcal{A}(LR^\T) + t \mathcal{A}(UR^\T + LV^\T) + t^2 \mathcal{A}(UV^\T).
    \end{align*}
    Substituting this in to the first term in the expression for the Hessian, and using linearity of $\mathcal{A}(\cdot)$ to simplify terms, yields
    \begin{align*}
        \nabla^2 f(L,R)&[U,V] = \lim_{t\rightarrow 0}\frac{2}{t}\Big(\langle \Delta  +t \mathcal{A}(UR^\T + LV^\T) + t^2 \mathcal{A}(UV^\T), \mathcal{A}(UR^\T + LV^\T) + 2t\mathcal{A}(UV^\T) \rangle\\
        &\qquad \qquad \quad - \langle \Delta, \mathcal{A}(UR^\T + LV^\T\Big)\\
        &= \lim_{t\rightarrow 0}\frac{2}{t}\Big(t\langle  \mathcal{A}(UR^\T + LV^\T) + t \mathcal{A}(UV^\T), \mathcal{A}(UR^\T + LV^\T)\rangle \\
        &\qquad \qquad \quad +2t \langle \Delta, \mathcal{A}(UV^\T)\rangle + 2t^2\langle \mathcal{A}(UR^\T + LV^\T) + t^2 \mathcal{A}(UV^\T),\mathcal{A}(UV^\T)\rangle\Big).
    \end{align*}
    Taking the limit as $t\rightarrow 0$,
    \begin{align*}
        \nabla^2 f(L,R)[U,V] &= 2\Big(\langle \mathcal{A}(UR^\T + LV^\T), \mathcal{A}(UR^\T + L^\T)\rangle + 2\langle \Delta, \mathcal{A}(UV^\T)\rangle\Big)\\
        &= 2\norm{\mathcal{A}(UR^\T + LV^\T)}^2_2 + 4\langle \mathcal{A}(LR^\T)-b, \mathcal{A}(UV^\T)\rangle
    \end{align*}
    which yields the result.
\end{proof}

We can prove Lemma~\ref{lemma:overparam-linear-measurement-Lipschitz-Hessian}, which bounds the Lipschitz constant of the Hessian of $f(\cdot)$ as defined in Eq.~\ref{eq:linear-measurement-overparam-objective}.

\begin{proof}[Proof of Lemma~\ref{lemma:overparam-linear-measurement-Lipschitz-Hessian}]
    Given some vector $v\in \R^{(d_1 + d_2)k}$, group the entries of the vector $v$ into two matrices: let $v = (U, V)$ for matrices $U\in \R^{d_1\times k}, V\in \R^{d_2\times k}$. Then the quadratic form of the Hessian is
     \[
        \ip{v, \nabla^2 f(\theta) v} = D^2 f(L,R)[U,V].
     \]
     The objective $f(\cdot)$ satisfies Assumption~\ref{assumption:H-Lipschitz} with respect to Lipschitz constant $H$ if  $\forall \theta_1, \theta_2, v\in \R^{(d_1 + d_2)k}$,
     \[
        \abs{\ip{v, (\nabla^2 f(\theta_1)-\nabla^2 f(\theta_2)) v}} \leq H \norm{v}^2_2 \norm{\theta_1-\theta_2}_2.
     \]
     For any pair $\theta_1, \theta_2\in \R^{(d_1 + d_2)k}$, denote the entries by $\theta_1 = (L_1, R_1)$ and $\theta_2 = (L_2, R_2)$, and for any $v\in \R^{(d_1 + d_2)k}$, denote the entries by $v = (U, V)$ as above. Then the above inequality is equivalent to
     \[
        \abs{D^2f(L_1, R_1)[U,V] - D^2f(L_2, R_2)[U,V]} \leq H \left(\frobnorm{U}^2 + \frobnorm{V}^2 \right)\norm{(L_1, R_1)-(L_2, R_2)}_2.
     \]
     where
     \[
        \norm{(L,R)}_2 \defeq \sqrt{\frobnorm{L}^2 + \frobnorm{R}^2} = \norm{\theta}_2.
     \]
    By Lemma~\ref{lemma:linear-measurement-overparam-Hessian-bilinear}, for any $\theta = (L,R)$ and any $v=(U,V)$, the Hessian satisfies
    \begin{align*}
        D^2 f(L, R)[U,V] &= 2\norm{\mathcal{A}(UR^\T + LV^\T)}^2_2 + 4 \langle \mathcal{A}(LR^\T)-b, \mathcal{A}(UV^\T)\rangle.
    \end{align*}
    By linearity of $\mathcal{A}(\cdot)$, we can expand the squared norm:
    \begin{align*}
        D^2 f(L, R)&[U,V] = 2\norm{\mathcal{A}(UR^\T) + \mathcal{A}(LV^\T)}^2_2 + 4 \langle \mathcal{A}(LR^\T)-b, \mathcal{A}(UV^\T)\rangle\\
        &= 2\left(\norm{\mathcal{A}(LV^\T)}^2_2+\norm{\mathcal{A}(UR^\T)}^2_2\right) +4\langle \mathcal{A}(UR^\T), \mathcal{A}(LV^\T)\rangle+ 4 \langle \mathcal{A}(LR^\T)-b, \mathcal{A}(UV^\T)\rangle.
    \end{align*}
    Thus
    \begin{align}
        D^2f(L_1, R_1)[U,V] - &D^2f(L_2, R_2)[U,V] = 2\left(\norm{\mathcal{A}(L_1V^\T)}^2_2-\norm{\mathcal{A}(L_2V^\T)}^2_2\right)\label{eq:quadratic-diff-line-1}\\ 
        &\quad + 2\left(\norm{\mathcal{A}(UR_1^\T)}^2_2-\norm{\mathcal{A}(UR_2^\T)}^2_2\right)\label{eq:quadratic-diff-line-2}\\ 
        &\quad + 4\left(\langle \mathcal{A}(UR_1^\T), \mathcal{A}(L_1V^\T)\rangle-\langle \mathcal{A}(UR_2^\T), \mathcal{A}(L_2V^\T)\rangle\right)\label{eq:quadratic-diff-line-3}\\ 
        & \quad + 4\langle \mathcal{A}(L_1 R_1^\T)-\mathcal{A}(L_2 R_2^\T), \mathcal{A}(UV^\T)\rangle. \label{eq:quadratic-diff-line-4}
    \end{align}
    We bound the magnitude of each term in sequence. For the first term in Line~\ref{eq:quadratic-diff-line-1}, we observe
    \begin{align*}
        \left|\norm{\mathcal{A}(L_1V^\T)}^2_2-\norm{\mathcal{A}(L_2V^\T)}^2_2\right| &= \abs{\langle \mathcal{A}(L_1V^\T)+\mathcal{A}(L_2V^\T), \mathcal{A}(L_1V^\T)-\mathcal{A}(L_2V^\T) \rangle}\\
        &= \abs{\langle \mathcal{A}\left((L_1+L_2)V^\T\right), \mathcal{A}\left((L_1-L_2)V^\T\right) \rangle}\\
        &\leq \norm{\mathcal{A}\left((L_1+L_2)V^\T\right)}_2 \norm{\mathcal{A}\left((L_1-L_2)V^\T\right)}_2.
    \end{align*}
    The operator norm of $\mathcal{A}(\cdot)$ can be bounded as
    \[
        \norm{\mathcal{A}(X)}^2_2 = \sum_{i=1}^m \langle A_i, X\rangle^2 \leq \frobnorm{X}^2 \sum_{i=1}^m \frobnorm{A_i}^2.
    \]
    Thus
    \begin{equation}\label{eq:linear-measurement-map-2-norm}
        \norm{\mathcal{A}}_2 \leq \left(\sum_{i=1}^m \frobnorm{A_i}^2\right)^{1/2}.
    \end{equation}
    We can thus bound the term in Line~\ref{eq:quadratic-diff-line-1} as
    \begin{align*}
        \left|\norm{\mathcal{A}(L_1V^\T)}^2_2-\norm{\mathcal{A}(L_2V^\T)}^2_2\right| &\leq \norm{\mathcal{A}\left((L_1+L_2)V^\T\right)}_2 \norm{\mathcal{A}\left((L_1-L_2)V^\T\right)}_2\\
        &\leq \norm{\mathcal{A}}^2_2 \frobnorm{(L_1+L_2)V^\T} \frobnorm{(L_1-L_2)V^\T} \\
        &\leq \norm{\mathcal{A}}^2_2 \frobnorm{L_1+L_2} \frobnorm{V} \frobnorm{L_1-L_2} \frobnorm{V}.
    \end{align*}
    In the ball $\norm{\theta}_2 \leq B$, we have that for $\theta = (L, R)$
     \[
        \frobnorm{L} \leq \norm{\theta}_2 \leq B.
     \]
    Thus $\frobnorm{L_1+L_2} \leq 2B$, so we can bound
    \begin{align*}
        \left|\norm{\mathcal{A}(L_1V^\T)}^2_2-\norm{\mathcal{A}(L_2V^\T)}^2_2\right| &\leq 2B \norm{\mathcal{A}}^2_2 \norm{V}^2_F \frobnorm{L_1-L_2} \\
        &\leq 2B \norm{\mathcal{A}}^2_2 \frobnorm{V}^2 \norm{(L_1, R_1)-(L_2, R_2)}_2
    \end{align*}
    where the last line follows by
     \[
        \norm{(L_1, R_1)-(L_2, R_2)}_2 \defeq \sqrt{\frobnorm{L_1-L_2}^2 + \frobnorm{R_1-R_2}^2} \geq \frobnorm{L_1 - L_2}.
     \]
     An analogous argument bounds the term in Line~\ref{eq:quadratic-diff-line-2} as
     \[
        \left|\norm{\mathcal{A}(UR_1^\T)}^2_2-\norm{\mathcal{A}(UR_2^\T)}^2_2\right|\leq 2B \norm{\mathcal{A}}^2_2 \frobnorm{U}^2 \norm{(L_1, R_1)-(L_2, R_2)}_2.
     \]
     To bound the term in Line~\ref{eq:quadratic-diff-line-3}, we first observe that
     \begin{align*}
        \langle &\mathcal{A}(UR_1^\T), \mathcal{A}(L_1V^\T)\rangle-\langle \mathcal{A}(UR_2^\T), \mathcal{A}(L_2V^\T)\rangle\\
        &=\langle \mathcal{A}\left(U(R_1-R_2)^\T\right), \mathcal{A}(L_1V^\T)\rangle+\langle\mathcal{A}\left((L_1-L_2)V^\T\right), \mathcal{A}(U R_2^\T)\rangle
     \end{align*}
     Thus
     \begin{align*}
        & \abs{\ip{\mathcal{A}(UR_1^\T), \mathcal{A}(L_1V^\T)}-\ip{ \mathcal{A}(UR_2^\T), \mathcal{A}(L_2V^\T)}}\\
        &=\abs{\langle \mathcal{A}\left(U(R_1-R_2)^\T\right), \mathcal{A}(L_1V^\T)\rangle+\langle\mathcal{A}\left((L_1-L_2)V^\T\right), \mathcal{A}(U R_2^\T)\rangle}\\
        &\leq \norm{\mathcal{A}}^2_2\frobnorm{U}\frobnorm{R_1-R_2}\frobnorm{L_1}\frobnorm{V}+\norm{\mathcal{A}}^2_2\frobnorm{L_1-L_2}\frobnorm{V} \frobnorm{U} \frobnorm{R_2}\\
        &= \norm{\mathcal{A}}^2_2\frobnorm{U}\frobnorm{V} \left(\frobnorm{R_1-R_2}\frobnorm{L_1}+\frobnorm{L_1-L_2}\frobnorm{R_2}\right)
    \end{align*}
    In the ball $\norm{\theta}_2 \leq B$, we have that for $\frobnorm{L_1} \leq B$ and $\frobnorm{R_2}\leq B$. Thus
    \begin{align*}
        \abs{\langle \mathcal{A}(UR_1^\T), \mathcal{A}(L_1V^\T)\rangle&-\langle \mathcal{A}(UR_2^\T), \mathcal{A}(L_2V^\T)\rangle}\\
        &\leq B \norm{\mathcal{A}}^2_2\frobnorm{U}\frobnorm{V}\left(\frobnorm{R_1-R_2}+\frobnorm{L_1-L_2}\right).
    \end{align*}
    Recall that for any $a, b\in R$, it holds that $a+b \leq 2\sqrt{a^2 + b^2}$ and $2ab \leq a^2 + b^2$. Thus
     \[
        \frobnorm{R_1-R_2} + \frobnorm{L_1-L_2} \leq 2 \sqrt{\frobnorm{L_1 - L_2}^2 + \frobnorm{R_1 - R_2}^2} = 2\norm{(L_1,R_1) - (L_2, R_2)}_2
     \]
     and
     \[
        \frobnorm{U} \frobnorm{V} \leq \frac{1}{2}\left(\frobnorm{U}^2  + \frobnorm{V}^2\right).
     \]
     Combining these bounds yields
     \begin{align*}
        \abs{\langle \mathcal{A}(UR_1^\T), \mathcal{A}(L_1V^\T)\rangle&-\langle \mathcal{A}(UR_2^\T), \mathcal{A}(L_2V^\T)\rangle}\\
        &\leq B \norm{\mathcal{A}}^2_2\left(\frobnorm{U}^2 + \frobnorm{V}^2\right)\norm{(L_1,R_1) - (L_2, R_2)}_2.
    \end{align*}
    Lastly, we bound the term in Line~\ref{eq:quadratic-diff-line-4}. We begin by observing that
    \begin{align*}
        \mathcal{A}(L_1 R_1^\T)-\mathcal{A}(L_2 R_2^\T) &= \mathcal{A}(L_1 R_1^\T)-\mathcal{A}(L_1 R_2^\T)+\mathcal{A}(L_1 R_2^\T)-\mathcal{A}(L_2 R_2^\T)\\
        &=\mathcal{A}\left(L_1 (R_1-R_2)^\T\right)+\mathcal{A}\left((L_1-L_2)R_2^\T\right)
    \end{align*}
    Thus
    \begin{align*}
        \abs{\langle \mathcal{A}(L_1 R_1^\T)&-\mathcal{A}(L_2 R_2^\T), \mathcal{A}(UV^\T)\rangle} =\abs{\langle \mathcal{A}\left(L_1 (R_1-R_2)^\T\right)+\mathcal{A}\left((L_1-L_2)R_2^\T\right), \mathcal{A}(UV^\T)\rangle}\\
        &=\abs{\langle \mathcal{A}\left(L_1 (R_1-R_2)^\T\right),\mathcal{A}(UV^\T)\rangle +\langle \mathcal{A}\left((L_1-L_2)R_2^\T\right), \mathcal{A}(UV^\T)\rangle}\\
        &\leq \norm{\mathcal{A}}^2_2\left(\frobnorm{L_1} \frobnorm{R_1-R_2}\frobnorm{U}\frobnorm{V} +\frobnorm{L_1-L_2}\frobnorm{R_2}\frobnorm{U}\frobnorm{V}\right)\\
        &= \norm{\mathcal{A}}^2_2\frobnorm{U}\frobnorm{V}\left(\frobnorm{L_1} \frobnorm{R_1-R_2} +\frobnorm{L_1-L_2}\frobnorm{R_2}\right).
    \end{align*}
    In the ball $\norm{\theta}_2 \leq B$, we have that for $\frobnorm{L_1} \leq B$ and $\frobnorm{R_2}\leq B$. Thus
    \begin{align*}
        \abs{\langle \mathcal{A}(L_1 R_1^\T)-\mathcal{A}(L_2 R_2^\T), \mathcal{A}(UV^\T)\rangle} &\leq B\norm{\mathcal{A}}^2_2\frobnorm{U}\frobnorm{V}\left(\frobnorm{R_1-R_2} +\frobnorm{L_1-L_2}\right).
    \end{align*}
    Recalling the bounds $\frobnorm{V} \frobnorm{U}$ and $\left(\frobnorm{L_1 - L_2}+\frobnorm{R_1 - R_2}\right)$ established above, we have
    \[
        \abs{\langle \mathcal{A}(L_1 R_1^\T)-\mathcal{A}(L_2 R_2^\T), \mathcal{A}(UV^\T)\rangle} \leq B\norm{\mathcal{A}}^2_2\left(\frobnorm{U}^2+\frobnorm{V}^2\right)\norm{(L_1, R_1)-(L_2,R_2)}_2.
    \]
    Employing triangle inequality and each of the above bounds implies
    \begin{align*}
        \left|D^2f(L_1, R_1)[U,V] - D^2f(L_2, R_2)[U,V]\right|&\leq 2\cdot 2B \norm{\mathcal{A}}^2_2 \frobnorm{V}^2 \norm{(L_1, R_1)-(L_2, R_2)}_2\\
        &\quad + 2\cdot 2B \norm{\mathcal{A}}^2_2 \frobnorm{U}^2 \norm{(L_1, R_1)-(L_2, R_2)}_2\\
        &\quad + 4B \norm{\mathcal{A}}^2_2\left(\frobnorm{U}^2 + \frobnorm{V}^2\right)\norm{(L_1,R_1) - (L_2, R_2)}_2\\
        &\quad +4B\norm{\mathcal{A}}^2_2\left(\frobnorm{U}^2+\frobnorm{V}^2\right)\norm{(L_1, R_1)-(L_2,R_2)}_2\\
        &\leq 12 B \norm{\mathcal{A}}^2_2\left(\frobnorm{U}^2+\frobnorm{V}^2\right)\norm{(L_1, R_1)-(L_2,R_2)}_2
    \end{align*}
    which implies the result.
\end{proof}

\subsection{Proofs from Section~\ref{sec:EGOP-spectral-decay}}\label{ssec:spectral-decay-proofs}

In Section~\ref{sec:EGOP-spectral-decay}, we noted that for objectives of the form $f(\theta) = h(A\theta)$, for some loss function $h(\cdot)$ and data matrix $A \in \R^{d\times n}$, the EGOP of $f(\cdot)$ is
    \begin{equation}\label{eq:EGOP-chain-rule}
        \EGOP(f) = A^\T \mathbb{E}_{\theta\sim \rho}\left[\nabla_\theta h(A\theta) \nabla_\theta h(A\theta)^\T\right] A
    \end{equation}

For general loss functions $h(\cdot)$, one can establish the following upper bounds showing how the singular values of $A$ control the EGOP eigepsectrum of $f(\cdot)$.    
    \begin{lemma}\label{lem:data-decay-induces-EGOP-decay}
        Consider $f:\R^n\rightarrow \R$ satisfying $f(\theta) = h(A\theta)$ for some loss function $h:\R^n\rightarrow \R$ and nonsingular data matrix $A \in \R^{n\times n}$. Denote by  $\sigma_i(\cdot)$ and $\lambda_i(\cdot)$ the $i$th singular value and eigenvalue of a matrix respectively, indexed by decreasing value. Then all nonzero eigenvalues of the EGOP of $f(\cdot)$ satisfy
        \[
            \frac{\lambda_k(\EGOP(f))}{ \lambda_1(\EGOP(f))} \leq \left(\frac{\sigma_k(A)}{\sigma_1(A)}\right)^2\frac{\lambda_1(M)}{\lambda_n(M)}.
        \]
        where
        \[
            M\defeq \mathbb{E}_{\theta\sim\rho}[\nabla_\theta h(A\theta) \nabla_\theta h(A\theta)^\T].
        \]
    \end{lemma}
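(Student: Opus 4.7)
The decomposition $\EGOP(f) = A^\T M A$ follows immediately from the chain rule identity already displayed in \eqref{eq:EGOP-chain-rule} above, so I treat this as the starting point. The task then reduces to a purely linear-algebraic question: given PSD matrices $M \in \R^{n\times n}$ and $A^\T A \in \R^{n\times n}$ (with $A$ nonsingular), bound the spectrum of the conjugated PSD matrix $B \defeq A^\T M A$ in terms of the spectra of $M$ and of $A$. The plan is to obtain an upper bound on $\lambda_k(B)$ and a lower bound on $\lambda_1(B)$ separately, and then divide.

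\textbf{Upper bound on $\lambda_k(B)$.} I will use the Courant--Fischer min-max characterization
\[
    \lambda_k(B) = \min_{\substack{W \subseteq \R^n \\ \dim W = n-k+1}} \; \max_{\substack{v \in W \\ \|v\|_2 = 1}} \; v^\T A^\T M A v.
\]
For the test subspace take $W = \operatorname{span}(\tilde{v}_k, \tilde{v}_{k+1}, \ldots, \tilde{v}_n)$, where $\tilde{v}_i$ denotes the right singular vector of $A$ associated with $\sigma_i(A)$. For any unit $v \in W$, expanding in the $\{\tilde{v}_i\}$ basis and using orthonormality of the left singular vectors yields $\|A v\|_2^2 \leq \sigma_k(A)^2$. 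Since $M$ is PSD,
\[
    v^\T A^\T M A v = (Av)^\T M (Av) \leq \lambda_1(M) \, \|Av\|_2^2 \leq \lambda_1(M) \, \sigma_k(A)^2,
\]
so $\lambda_k(B) \leq \lambda_1(M) \, \sigma_k(A)^2$.

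\textbf{Lower bound on $\lambda_1(B)$.} By the max characterization and by taking $v$ to be the top right singular vector $\tilde{v}_1$ of $A$,
\[
    \lambda_1(B) = \max_{\|v\|_2 = 1} v^\T A^\T M A v \geq (A\tilde{v}_1)^\T M (A\tilde{v}_1) \geq \lambda_n(M) \, \|A\tilde{v}_1\|_2^2 = \lambda_n(M) \, \sigma_1(A)^2,
\]
where the middle inequality uses that $M$ is PSD with smallest eigenvalue $\lambda_n(M)$. Dividing the two bounds yields exactly the claimed ratio. Nonsingularity of $A$ ensures that $\sigma_1(A)$ and $\lambda_n(M)$ are strictly positive when the stated ratio makes sense, and more generally guarantees that nonzero eigenvalues of $B$ correspond to genuine directions not annihilated by $A$.

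\textbf{Anticipated obstacle.} This argument is essentially mechanical once one recognizes the min-max trick, so there is no serious analytical obstruction. The only subtlety is that $M$ and $A^\T A$ need not be simultaneously diagonalizable, so one cannot just multiply eigenvalues directly; the key idea is that the Courant--Fischer witness subspace is chosen purely from the right singular basis of $A$, after which the quadratic form in $M$ is bounded by the extremal eigenvalues of $M$ uniformly over that subspace. A minor cosmetic point is handling the case $\lambda_n(M) = 0$: in that situation the stated bound is vacuous (the right-hand side is $+\infty$), so no extra work is needed.
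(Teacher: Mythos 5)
Your proof is correct. The paper states this lemma in the appendix without supplying a proof, so there is no argument of the authors' to compare against; your Courant--Fischer derivation fills that gap cleanly. Concretely, the decomposition $\EGOP(f) = A^\T M A$ follows from the chain rule as in \eqref{eq:EGOP-chain-rule}, your choice of test subspace $W = \operatorname{span}(\tilde v_k,\dots,\tilde v_n)$ has the required dimension $n-k+1$, and the chain of inequalities
\[
v^\T A^\T M A v \leq \lambda_1(M)\,\|Av\|_2^2 \leq \lambda_1(M)\,\sigma_k(A)^2
\]
is valid for unit $v$ in $W$, while $\lambda_1(A^\T M A) \geq (A\tilde v_1)^\T M (A\tilde v_1) \geq \lambda_n(M)\sigma_1(A)^2$. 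Dividing gives the claimed ratio. (An equivalent route is to note $A^\T M A = (M^{1/2}A)^\T (M^{1/2}A)$ and apply the singular-value product inequalities $\sigma_k(M^{1/2}A)\leq\sigma_1(M^{1/2})\sigma_k(A)$ and $\sigma_1(M^{1/2}A)\geq\sigma_n(M^{1/2})\sigma_1(A)$; this is the same content.) Your handling of the degenerate case $\lambda_n(M)=0$ is also appropriate, as the bound then trivializes. No gaps.
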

    If the matrix $M$ has some finite condition number that does not go to infinity as the spectral decay in $A$ increases, then Lemma~\ref{lem:data-decay-induces-EGOP-decay} shows that increasing spectral decay in the data matrix $A$ induces spectral decay in the EGOP eigenvalues of $f(\cdot)$.

    For specific choices of $h(\cdot)$, such as $h(\cdot)\defeq \shortnorm{\cdot}^2_2$, one can more precisely characterize how spectral decay in the matrix $A$ induces decay in the EGOP of $f\defeq h\circ A$.
    \begin{lemma}\label{lem:spectral-decay-least-squares}
        Consider $A \in \R^{d\times n}$ and let $f(\theta) = \frac{1}{2}\norm{A\theta- y}^2_2$ where $y = A\theta^* + \eta$, for $\eta$ some mean-zero measurement noise. Assume sampling density $\rho$ is a standard Gaussian distribution. Then the eigenvalues of $\EGOP(f)$,  $\{\lambda_k\}_{k=1}^d$ indexed in decreasing order, satisfy
        \begin{equation}
            \frac{\lambda_k}{\lambda_1} \leq \left(\frac{\sigma_{k-1}(A)}{\sigma_1(A)}\right)^4 \ \forall k\in [2,\dots,n]
        \end{equation}
        where $\sigma_i(A)$ denotes the $i$\ts{th} singular value of $A$, indexed in decreasing order.
    \end{lemma}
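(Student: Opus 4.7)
} The plan is to exploit that $f$ is quadratic, so its gradient is affine in $\theta$ and $\EGOP(f)$ admits a closed-form expression that decomposes into $(A^{\T}A)^2$ plus a rank-one PSD perturbation. First I would compute $\nabla f(\theta) = A^{\T}(A\theta - y) = A^{\T}A\theta - A^{\T}y$. Since $\rho$ is standard Gaussian, $\mathbb{E}[\theta] = 0$ and $\mathbb{E}[\theta\theta^{\T}] = I$, so writing $g \defeq A^{\T}y$, the linear-in-$\theta$ cross terms vanish under expectation and
\[
    \EGOP(f) = A^{\T}A\,\mathbb{E}[\theta\theta^{\T}]\,A^{\T}A + gg^{\T} = (A^{\T}A)^2 + gg^{\T}.
\]

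From here I would note that $(A^{\T}A)^2$ has eigenvalues $\sigma_i(A)^4$ while $gg^{\T}$ is rank-one and PSD, so I can apply Weyl's inequality $\lambda_k(X+Y) \leq \lambda_{k-1}(X) + \lambda_2(Y)$ (equivalently, Cauchy interlacing for a rank-one positive update) with $X = (A^{\T}A)^2$ and $Y = gg^{\T}$. Since $\lambda_2(gg^{\T}) = 0$, this yields
\[
    \lambda_k(\EGOP(f)) \leq \lambda_{k-1}\bigl((A^{\T}A)^2\bigr) = \sigma_{k-1}(A)^4
    \quad \text{for all } k \geq 2.
\]
Conversely, since $gg^{\T} \succeq 0$, adding it cannot shrink the leading eigenvalue, giving $\lambda_1(\EGOP(f)) \geq \lambda_1((A^{\T}A)^2) = \sigma_1(A)^4$. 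Dividing these two inequalities gives the stated bound.

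The main subtlety will be interpretational rather than technical. The statement describes $\eta$ as mean-zero noise, but the EGOP is defined only as an expectation over $\theta \sim \rho$, so I would treat $y$ (and hence $g$) as a fixed realization; the zero-mean property of $\eta$ then plays no role in the inequality above. If one instead wished to marginalize a random $\eta$ out of $y$ inside the EGOP, one would pick up an extra term $A^{\T}\Sigma_{\eta}A$ that is typically full-rank, and recovering the clean $\sigma_{k-1}(A)^4$ upper bound would require additional structure on $\Sigma_{\eta}$ (e.g., isotropy combined with a perturbation bound) rather than the single rank-one correction used here.
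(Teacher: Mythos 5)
Your proof follows essentially the same route as the paper's: compute the gradient of the quadratic, use $\mathbb{E}[\theta]=0$ and $\mathbb{E}[\theta\theta^{\T}]=I$ to obtain $\EGOP(f)=(A^{\T}A)^2+(A^{\T}y)(A^{\T}y)^{\T}$, then apply eigenvalue interlacing for a rank-one PSD perturbation (you invoke Weyl; the paper cites Golub's secular-equation result, which gives the same interlacing). Your statement of the interlacing bounds $\lambda_k\le\sigma_{k-1}^4$ and $\lambda_1\ge\sigma_1^4$ is in fact cleaner and more carefully indexed than the chain of inequalities written in the paper, and your remark that $\eta$ must be treated as fixed (since the EGOP only averages over $\theta$) correctly identifies an implicit assumption the paper leaves unstated.
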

    \begin{proof}
        Observe that for any $\theta\in \R^d$,
        \[
            \nabla f(\theta) = A^{\T} (A\theta-y) = A^{\T}(A\theta - A\theta^* - A\eta) = A^{\T} A(\theta-\theta^*) + A\eta.
        \]
        Thus for $\rho$ a standard Gaussian,
        \begin{align*}
            \EGOP(f) &= \mathbb{E}_{\theta\sim \rho}\left[\left(A^{\T} (A\theta-y)\right) \left(A^{\T} (A\theta-y)\right)^{\T}\right]\\
            &=A^{\T} \mathbb{E}_{\theta\sim \rho}\left[A \theta \theta^{\T} A + yy^{\T}\right]A\\
            &=\left(A^{\T} A\right)^2 + (A^{\T}y)(A^{\T} y)^{\T}
        \end{align*}
        using the fact that $\rho$ is mean-zero and isotropic. Thus $\EGOP(f)$ is a rank-1 perturbation of the matrix $(A^{\T} A)^2$, which is itself PSD and has eigenvalues $\sigma_k(A)^4$. Moreover if we let $A = Q_1 \Sigma Q_2^\T$ denote the SVD of $A$, we can rewrite
        \[
            \EGOP(f) = Q_2\left(\Sigma^4 + \Sigma Q_1^{\T} y y^{\T} Q_1 \Sigma \right)V^{\T}
        \]
        where $\Sigma^4$ is diagonal and has entries $\{\sigma_i^4(A)\}_{i=1}^n$. By \citet{golub1973some}, the eigenvalues of $\EGOP(f)$ thus satisfy
        \[
            \sigma_i^4(A) \geq \lambda_{i}(\EGOP(f)) \geq \sigma^4_{i+1}(A) \ \forall i\in[2,\dots,n] 
        \]
        and
        \[
            \sigma_1^4(A) + \norm{\Sigma Q_1^{\T} y}^2_2 \geq \lambda_1(\EGOP)\geq \sigma^4_2(A)
        \]
        which implies the result.
    \end{proof}

\section{Supplementary Figures}\label{sec:supplementary-figures}

Here we present figures deferred from the main body.

\subsection{Supplementary Figures for Section~\ref{sec:EGOP-spectral-decay}}\label{ssec:sup-figs-spectral-decay}

\paragraph{Spectral decay is robust to choice of sampling distribution} In Figure~\ref{fig:tinyMNIST-compare-spectral-decay}, we present evidence that spectral decay visualized in Figure~\ref{fig:tinyMNIST-global-spectral-decay} is robust to choice of sampling distribution. The EGOP whose eigenspectrum is displayed in Figure~\ref{fig:tinyMNIST-global-spectral-decay} was generated from gradient samples $\nabla f(\theta_i)$ for $\theta_i\sim \mathcal{N}(0,\mathbb{I})$, but as visualized in later figures  (Figure~\ref{fig:tinyMNIST-compare-spectral-decay}), the level of spectral decay is comparable to that obtained with Gaussian distributions of differing scales. We also compare to the spectral decay exhibited by the EGOP matrix when $\rho$ is taken to be an initialization distribution used in practice \cite{glorot2010understanding}. For details on this distribution, see Section~\ref{ssec:details-for-main-body-experimental-results}.

\begin{figure*}[t]
    \centering
    \includegraphics[width=.9\linewidth]{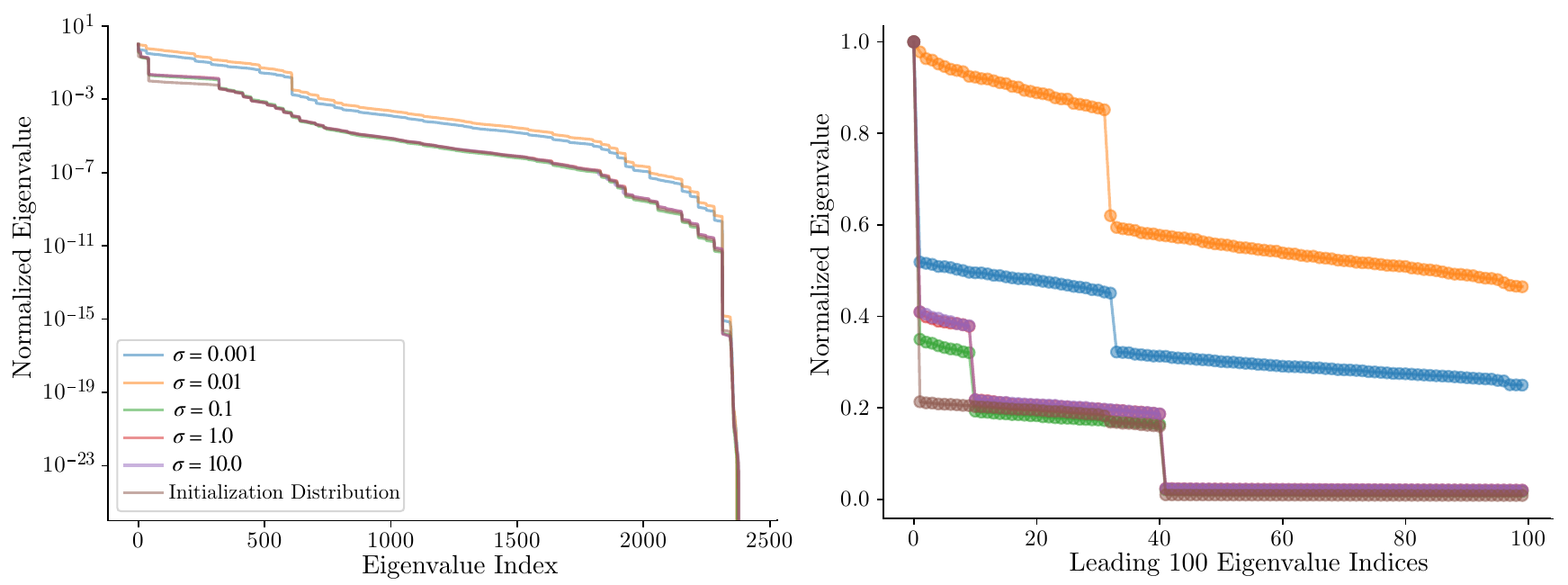}
    \caption{Comparing EGOP spectral decay of a 2-layer ReLU network on tinyMNIST dataset. Plot displays the ratio $\lambda_k/\lambda_1$ as a function of eigenvalue index $k$, for eigenvalues indexed in decreasing order. The blue, orange, green, red, and purple colored traces display the eigenspectrum of the EGOP with respect to a mean-zero Gaussian with covariance $\sigma^2 \mathbb{I}$, for varying values of $\sigma$. The brown trace displays the eigenspectrum of the EGOP with respect to a realistic initialization distribution for this architecture: weights for each layer are drawn from a scaled Xavier normal distribution, and biases are initialized from a scaled uniform distribution (see Section~\ref{ssec:details-for-main-body-experimental-results}). We observe that under all sampling distributions, the eigenspectrum exhibits spectral decay, and that the realistic initialization distribution has spectral decay very comparable to that of the standard Gaussian, displayed in Figure~\ref{fig:tinyMNIST-global-spectral-decay} of the main body. }
    \label{fig:tinyMNIST-compare-spectral-decay}
\end{figure*}

We also note that the interesting shelf structure of the leading eigenvalues is also robust to choice of sampling distribution, as illustrated in Figure~\ref{fig:tinyMNIST-compare-spectral-decay}. Full details for the objective and EGOP estimation procedure for this figure are detailed in Section~\ref{ssec:details-for-spectral-decay}.

\paragraph{Spectral decay persists in block EGOP matrices} Figure~\ref{fig:layer-by-layer-spectra} plots the normalized eigenspectra of the block matrices corresponding to the first and second layers of ReLU networks on the UCI digits dataset and fashionMNIST dataset respectively. For full details on these datasets and the architectures used, see Section~\ref{ssec:details-for-spectral-decay}. Interestingly, both datasets exhibit shared characteristics: the normalized spectral decay in the first layer is strikingly similar, and in both networks the spectral decay in the first layer is more pronounced than in the second layer.

\begin{figure*}[h]
    \centering        \includegraphics[width=0.9\linewidth]{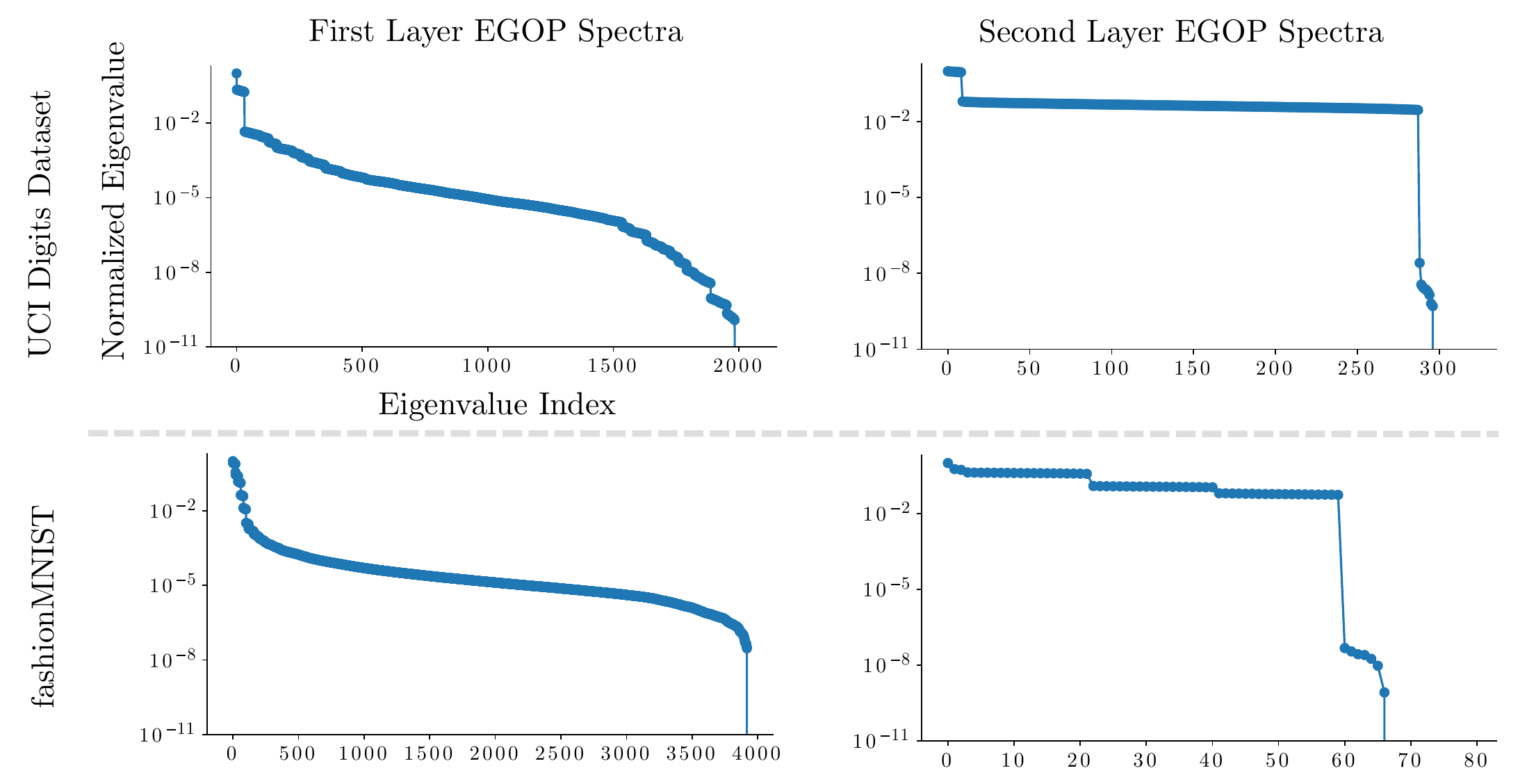}
    \caption{Eigenspectra of the layerwise EGOP matrices of neural networks on \texttt{tinyMNIST} and \texttt{fashionMNIST}. The spectral decay observed in Figure~\ref{fig:tinyMNIST-global-spectral-decay} persists for layer EGOP matrices, defined in Section~\ref{sec:efficient-heuristics}, and across datasets. Y-axes for all figures display identical ranges.}
    \label{fig:layer-by-layer-spectra}
\end{figure*}

\paragraph{Density of leading EGOP eigenvectors} Theorems~\ref{lemma:convex-cvgnce-ball-version}, \ref{lemma:convex-cvgnce} and \ref{lemma:nonconvex-cvgnce} show that when the leading EGOP eigenvector is dense, reparameterized Adagrad enjoys much stronger convergence guarantees. Density of the $k$\ts{th} eigenvector is measured by $\beta_k\defeq \norm{v_k}^2_1$. Our results state a guarantee in terms of $\beta_1$, the density of the leading eigenvector, though as we state in Section~\ref{sec:deferred-proofs} our guarantees can also be formalized in terms of $\beta_k$ the density of the $k$th eigenvector (see e.g. Lemma~\ref{claim:OG-coor-LB}). In Figure~\ref{fig:MNIST-global-eigvec-density}, we confirm that in real-world tasks not only does the leading eigenvector satisfy $\beta_1 \gg 1/d$, but that this also holds for all of the 100 leading eigenvectors.

\begin{figure*}[h]
    \centering        \includegraphics[width=0.5\linewidth]{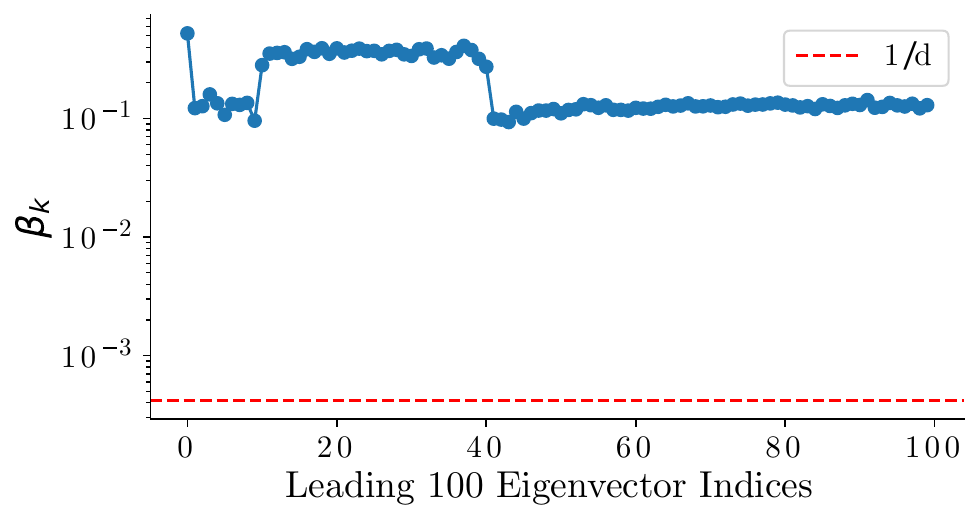}
    \caption{Plotting the density measure $\beta_k\defeq \norm{v_k}^2_1$ for the leading 100 eigenvectors of $\EGOP(f)$, where $f(\cdot)$ is the cross-entropy loss of a 2-layer ReLU neural network on the UCI digits training dataset. The leading eigenvector satisfies $\beta_1 > 0.5$ and several have density $\beta_k > 0.3$. We visualize the value $1/d$ in red  (for this example, $d = 2,410$) to verify that for the leading eigenvectors, $\beta_k \gg 1/d$.}
    \label{fig:MNIST-global-eigvec-density}
\end{figure*}

In order for the factors $\sr(f)/d(\beta_1-\delta)$ in the bound for reparameterized Adagrad from Theorems~\ref{lemma:convex-cvgnce-ball-version}, \ref{lemma:convex-cvgnce} and \ref{lemma:nonconvex-cvgnce} to reflect a strong improvement, it is necessary that $\beta_1\gg1/d$. In Figure~\ref{fig:MNIST-global-eigvec-density}, we show that for the UCI digits dataset, this condition is satisfied. Specifically, we show that $\beta_1 > 0.5$, while for this network $1/d < 5e-4$. Moreover, we show that not only does the leading eigenvector satisfy this density assumption, but several of the leading eigenvectors satisfy $\beta_k \gg1/d$.

\subsection{Supplementary Figures for Section~\ref{sec:experimental-results}}\label{ssec:sup-figs-main-experiments}

\paragraph{Residual Networks} In this section, we supplement the results shown in Figure~\ref{fig:new-resnet-fig-plus-table}. In Figure~\ref{fig:ResNet-sup-val} we plot validation over epochs. These results show that EGOP reparameterization offers a benefit over AdamW in original coordinates in terms of validation loss minimization, not only training loss minimization. Also, comparing EGOP reparameterization to SOAP, we find that EGOP reparameterization minimizes validation loss at a comparable per-epoch rate to SOAP and achieves a lower minimum validation loss (the median validation loss achieved by periodic EGOP reparameterization is $1.19$, compared with $1.22$ for SOAP).

\begin{figure}[t]
    \centering
    \includegraphics[width=0.55\linewidth]{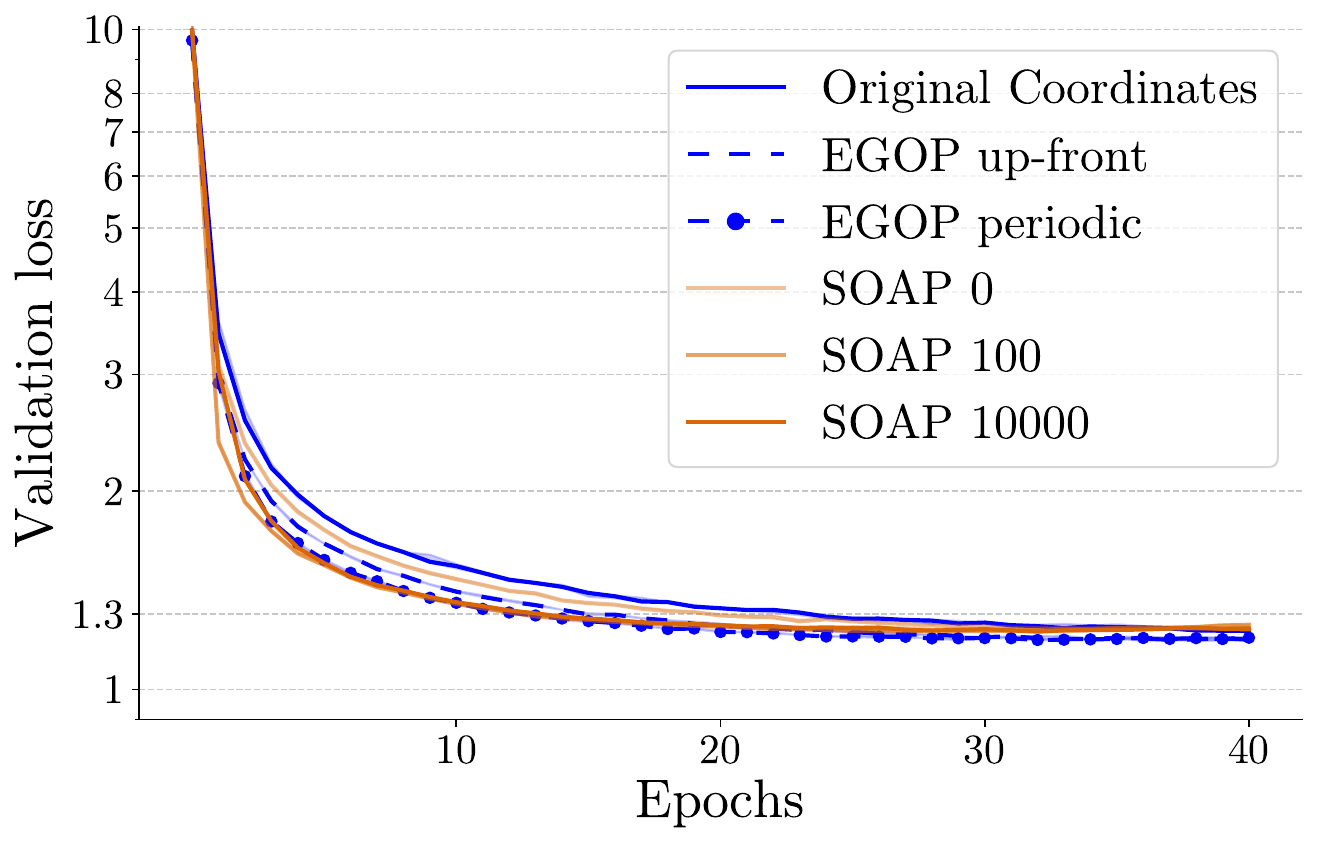}
    \caption{Validation loss for image classification with residual networks.
    }
    \label{fig:ResNet-sup-val}
\end{figure}

In Figure~\ref{fig:ResNet-sup}, we plot training and validation accuracy over epochs, confirming that the improved training loss convergence under reparameterization leads to improved classification accuracy. Moreover, we confirm that EGOP reparameterization is competitive with SOAP across both accuracy and loss metrics.

\begin{figure}[h]
    \centering
    \begin{subfigure}[t]{0.49\textwidth}
        \centering
        \includegraphics[width=\linewidth]{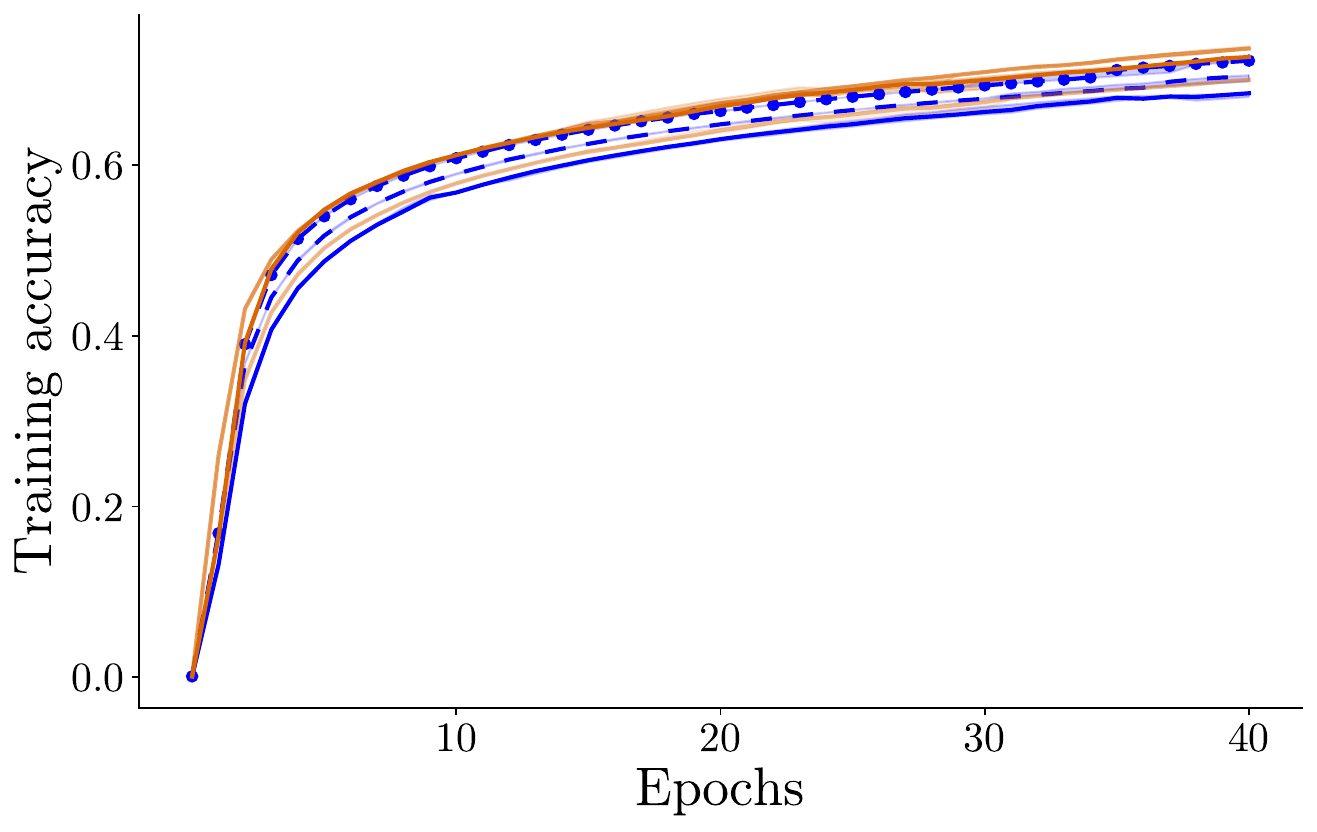}
    \end{subfigure}
    \begin{subfigure}[t]{0.49\textwidth}
        \centering
        \includegraphics[width=\linewidth]{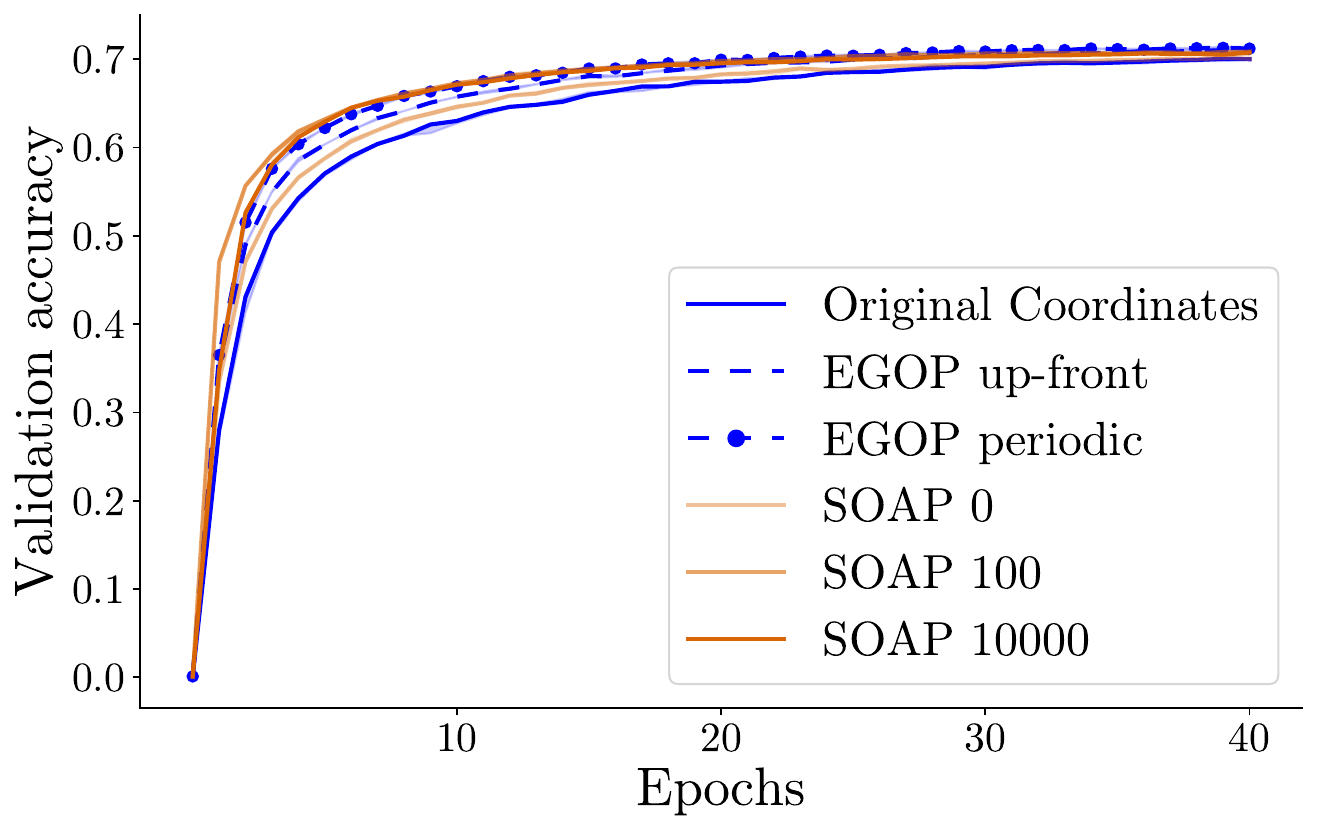}
    \end{subfigure}
    \caption{Training and validation accuracy for image classification with residual networks, corresponding to the experiments discussed in Section~\ref{ssec:main-body-large-scale}.}\label{fig:ResNet-sup}
\end{figure}

\paragraph{Multilayer linear networks} We compare three methods for EGOP reparameterization in order to examine some heuristics proposed in Section~\ref{sec:efficient-heuristics}. In Figure~\ref{fig:sup-linear-networks}, we consider training a multilayer linear network (\ref{eq:linear-feedforward-objective}) under  global EGOP reparameterization, wherein all parameters are reparameterized simultaneously as in \cref{alg:meta-algorithm-block} (Figure~\ref{fig:global-sup});  block reparameterization for all layers, following the procedure defined in Section~\ref{sec:efficient-heuristics} (Figure~\ref{fig:block-sup}); and block reparameterization of only the parameters in the first layer (Figure~\ref{fig:first-layer-only-sup}). For all three methods, we estimate the EGOP using the same number of gradient samples: $M = 2d$, where $d$ is the total number of network parameters.

Comparing Figure~\ref{fig:global-sup} with Figure~\ref{fig:block-sup} shows that for this problem, EGOP reparameterization offers comparable benefit when using block reparameterization as when using global reparameterization. This suggests that block EGOP reparameterization, which has a reduced computational cost compared to global EGOP reparameterization, may be an effective way to accelerate adaptive methods when problem instances are too large to permit global reparameterization. Reparameterizing only the first layer (Figure~\ref{fig:first-layer-only-sup}) improves Adagrad's performance by a margin comparable to that of global and block reparameterization of all layers, but the benefit to Adam under reparameterization of only the first layer is much less pronounced.

\begin{figure*}[h]
    \centering
    \begin{subfigure}[t]{0.32\textwidth}
        \centering
        \includegraphics[width=\linewidth]{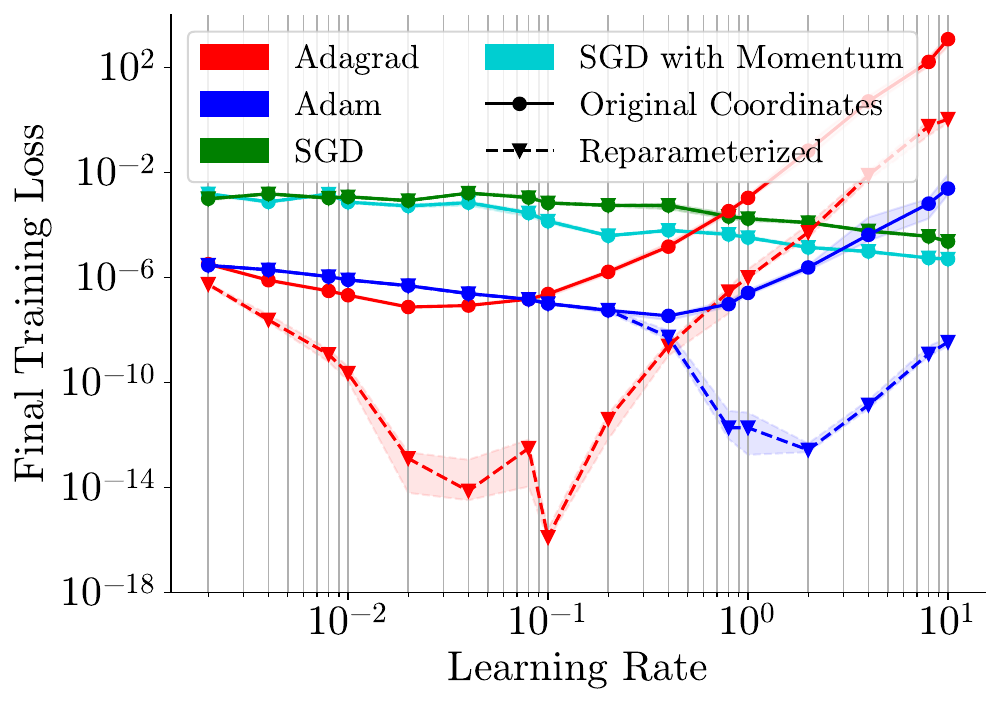}
        \caption{\centering Global EGOP Reparameterization}
        \label{fig:global-sup}
    \end{subfigure}
    \begin{subfigure}[t]{0.32\textwidth} 
        \centering
        \includegraphics[width=\linewidth]{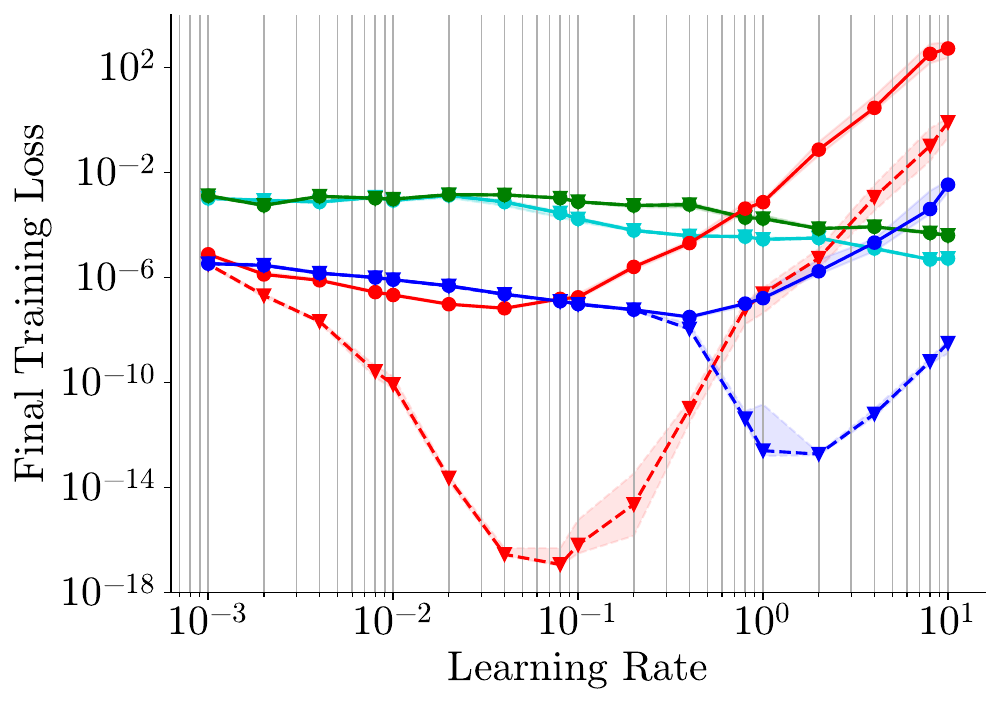}
        \caption{\centering  Block reparameterization, all layers}
        \label{fig:block-sup}
    \end{subfigure}
    \begin{subfigure}[t]{0.32\textwidth} 
        \centering
        \includegraphics[width=\linewidth]{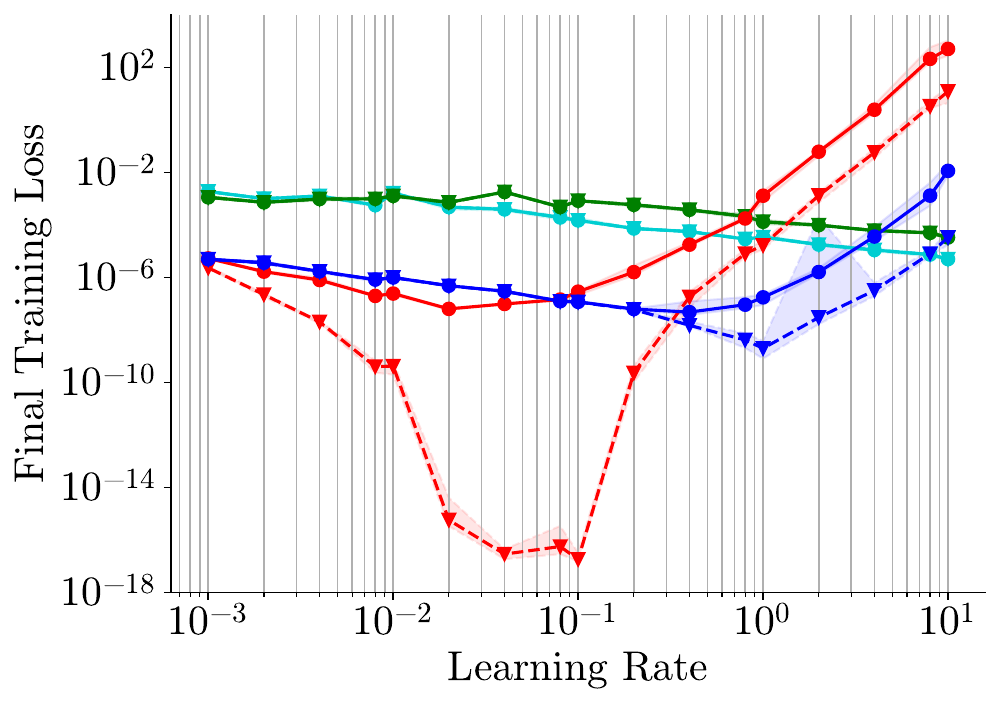}
        \caption{\centering Block reparameterization, first layer only}
        \label{fig:first-layer-only-sup}
    \end{subfigure}
    \caption{Comparing three  EGOP reparameterization methods for training a multilayer linear network (\ref{eq:linear-feedforward-objective}). Figure~\ref{fig:global-sup} is a reproduction of Figure~\ref{fig:linear-layers-global-reparam-loss-vs-LR} in the main body, and shows results for performing EGOP reparameterization of all parameters simultaneously. Figure~\ref{fig:block-sup} shows results when performing block EGOP reparameterization, where each network layer forms a block. Figure~\ref{fig:first-layer-only-sup} shows results when block-reparameterizing only the parameters in the first layer.}
    \label{fig:sup-linear-networks}
\end{figure*}

\paragraph{Image Classification with ReLU Networks} Figure~\ref{fig:tinyMNIST-sup} expands on the results shown in Figure~\ref{fig:opener-cartoon}  (right) for 2-layer ReLU networks on the UCI digits dataset. Figure~\ref{fig:tinyMNIST-loss-sup} plots final training loss versus learning rate, and shows that benefit of reparameterization shown in Figure~\ref{fig:opener-cartoon} is robust to choice of learning rate. Figure~\ref{fig:tinyMNIST-testacc-sup} plots final test accuracy versus learning rate, and shows that the improved training offered by reparameterization does not lead to over-fitting. For full experimental details on the architecture, dataset, and loss function used for these experiments, see Section~\ref{sec:experimental-details}.

\begin{figure*}[h]
    \centering
    \begin{subfigure}[t]{0.45\textwidth}
        \centering
        \includegraphics[width=\linewidth]{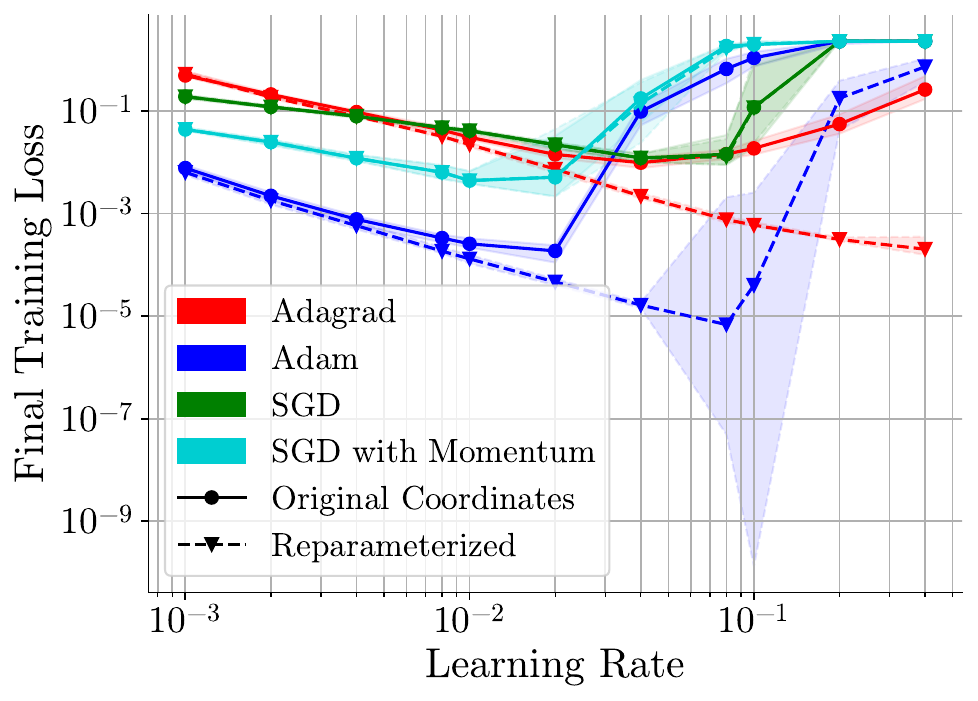}
        \caption{\centering Median final training loss versus learning rate}
        \label{fig:tinyMNIST-loss-sup}
    \end{subfigure}
    \hspace{0.05\textwidth} 
    \begin{subfigure}[t]{0.45\textwidth} 
        \centering
        \includegraphics[width=\linewidth]{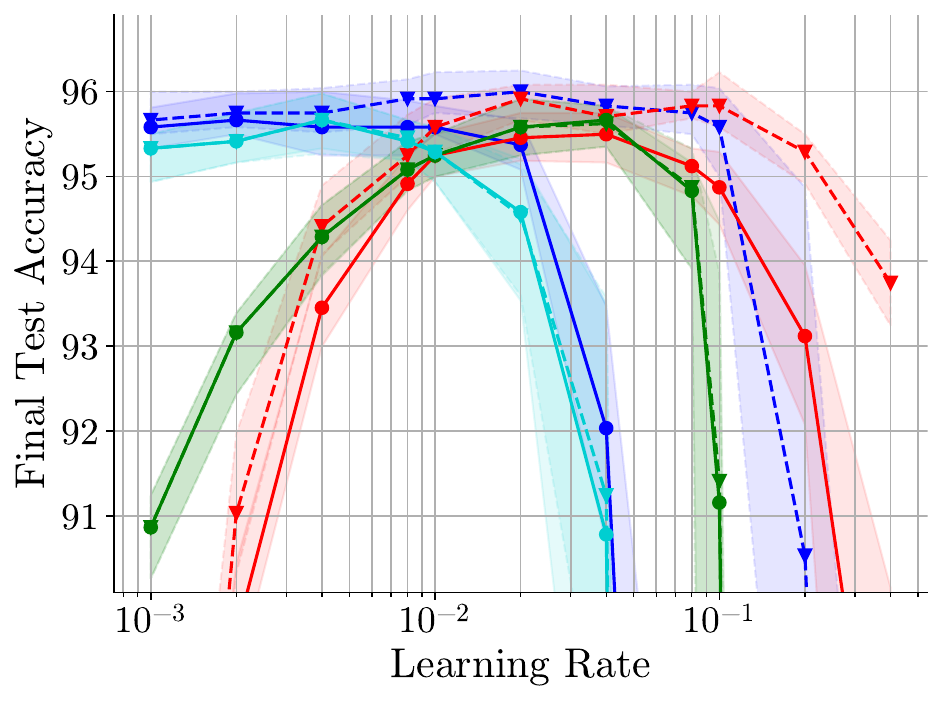}
        \caption{\centering Median final test accuracy versus learning rate}
        \label{fig:tinyMNIST-testacc-sup}
    \end{subfigure}
    \caption{Training loss and test accuracy results for the UCI digits dataset image classification task. Results are aggregated over independent trials corresponding to different random initializations. Medians are plotted as traces, and shaded regions indicate the 25\ts{th}-75\ts{th} percentiles.
     } \label{fig:tinyMNIST-sup}
\end{figure*}

\begin{figure*}[h]
    \centering        \includegraphics[width=\linewidth]{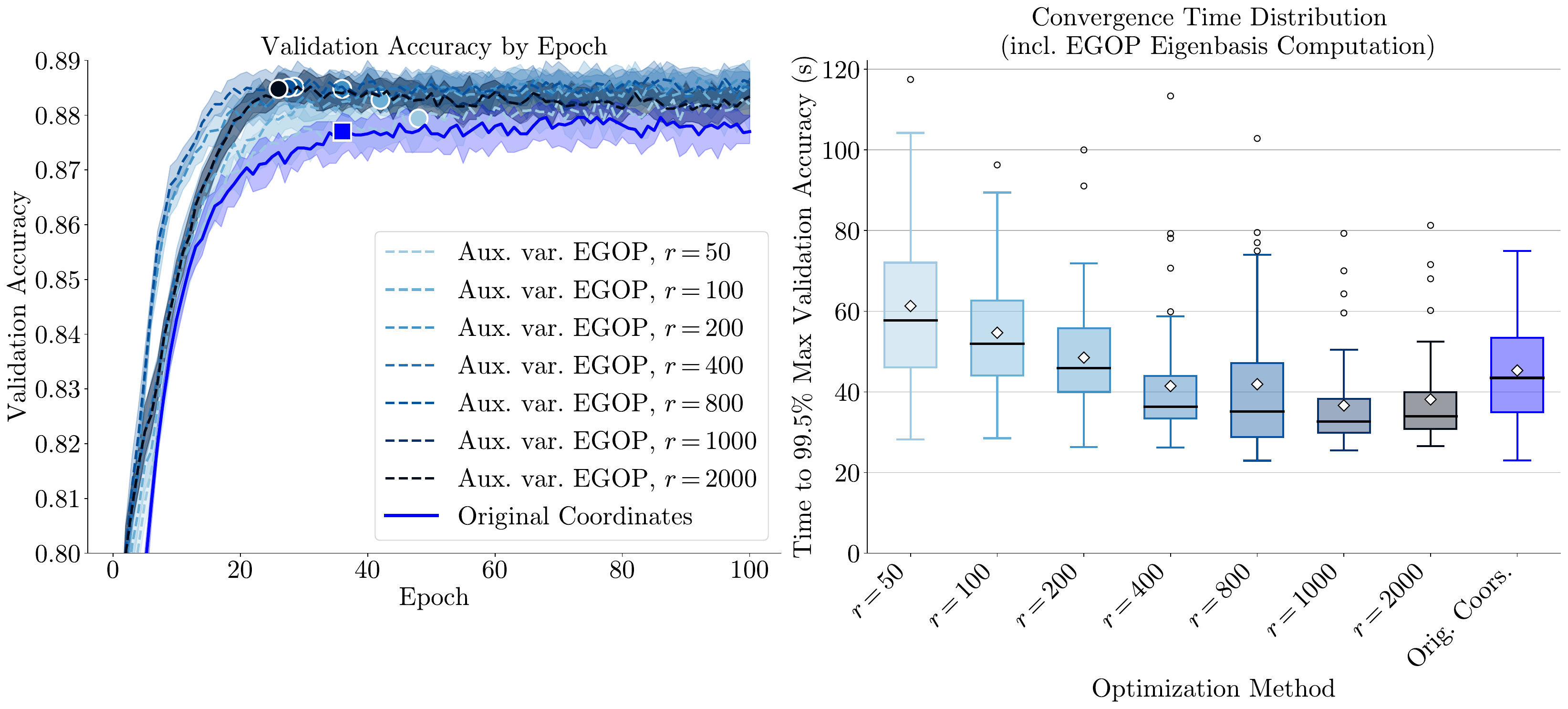}
    \caption{Validation accuracies and timing results for image classification on fashionMNIST. Counterpart to Figure~\ref{fig:fashionMNIST}. (Left) Plotting validation accuracy demonstrates that the improved minimization of training loss leads to improved generalization. (Right) Plotting wall-clock time to convergence for the model in original coordinates, as well as under auxiliary variable EGOP reparameterization, as described in Section~\ref{sec:efficient-heuristics}, for varying values of $r$. We find that the cost-benefit tradeoff favors intermediate values of $r$, e.g. $r=1000$ in this figure. Recall that for this architecture, $r = 1000$ is much smaller than the number of optimization variables in the largest network layer, which contains $78.4k$ weights.}
    \label{fig:auxilliary-cost-benefit-tradeoff}
\end{figure*}

Similarly, Figure~\ref{fig:auxilliary-cost-benefit-tradeoff} expands on the results presented in Figure~\ref{fig:fashionMNIST}. Figure~\ref{fig:auxilliary-cost-benefit-tradeoff} (left) shows that the improved training does not lead to over-fitting, but rather that reparameterization leads to improved accuracy on hold-out data. Figure~\ref{fig:auxilliary-cost-benefit-tradeoff} (right) reports wallclock time to convergence, demonstrating that the improved convergence in epochs shown in Figure~\ref{fig:fashionMNIST} translates to faster convergence in wallclock time. For this experiment, we define convergence as the first epoch at which the model attains $99.5\%$ of its maximum validation accuracy. In Figure~\ref{fig:auxilliary-cost-benefit-tradeoff} (left), we use square a circular markers to indicate the epochs at which the model in original coordinates and reparameterized models respectively achieve this threshold.

\paragraph{Convex Objectives} In addition to training neural networks, which is a primary application of adaptive optimization algorithms, we also study reparameterization for convex optimization. Figure~\ref{fig:log-sum-exp-sup} shows the result of minimizing
\begin{equation}\label{eq:log-sum-exp-objective}
    f(\theta) = \log\left(\sum_{i=1}^n\exp\left((\langle a_i, \theta\rangle -y_i)^2\right)\right),
\end{equation}
where $a_i\in \R^d$ denote vectors of observations and $y_i \defeq \langle a_i, \theta^*\rangle$ for ground truth $\theta^*$. We also plot results from minimizing logistic regression objectives (Figure~\ref{fig:log-reg-sup}) and linear least-squares objectives (Figure~\ref{fig:linear-least-squares-sup}) arising from problems with data matrices $A\in \R^{n\times d}$. For all convex objectives, we induce EGOP spectral decay by choosing matrices $A$ with singular value decay. For these objectives, we estimate the EGOP and perform optimization using noiseless (full-batch) gradients. 

Figure~\ref{fig:sup-cvx-objectives} demonstrates that EGOP reparameterization can improve convergence of adaptive algorithms to global minima of convex objectives.
EGOP-reparameterized Adagrad outperforms its counterpart in original coordinates for all three objectives. Notably, methods with momentum excel for this logistic regression, and reparameterization boosts the performance of Adagrad (which does not use momentum), making it competitive with the momentum-based optimizers. EGOP reparameterization improves Adam's convergence on the log-sum-exp objective and linear least-squares objectives (Figures~\ref{fig:log-sum-exp-sup} and \ref{fig:linear-least-squares-sup}), but has no impact on Adam's performance for logistic regression (Figure~\ref{fig:log-reg-sup}).

\begin{figure*}[h]
    \centering
    \begin{subfigure}[t]{0.34\textwidth}
        \centering
        \includegraphics[width=\linewidth]{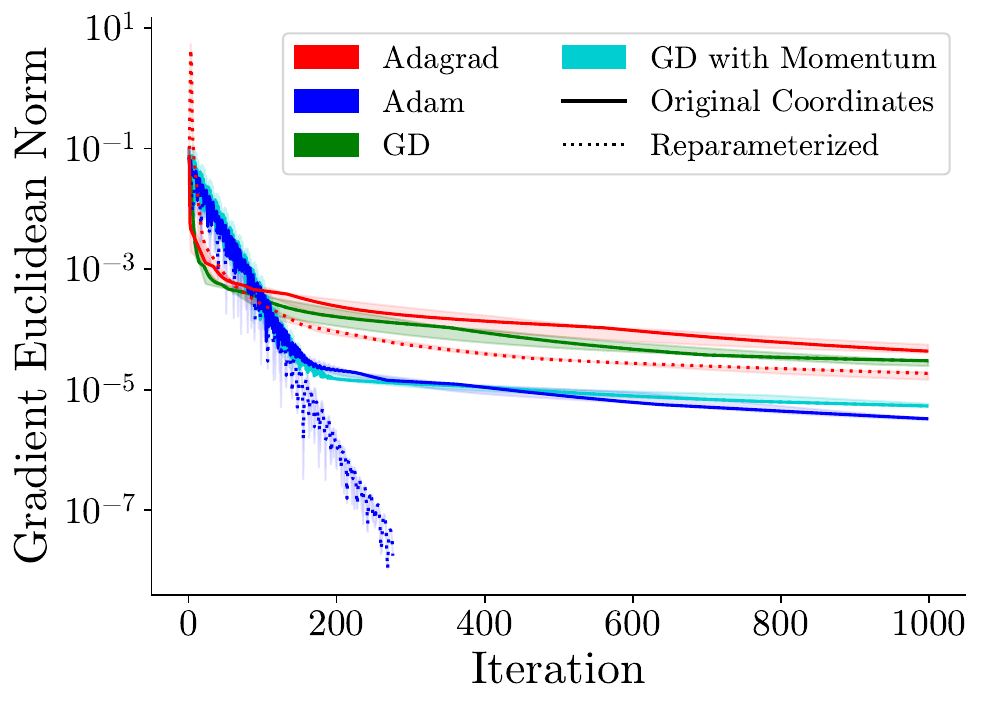}
        \caption{\centering Log-sum-exp with $\alpha = 2$}
        \label{fig:log-sum-exp-sup}
    \end{subfigure}
    \begin{subfigure}[t]{0.315\textwidth} 
        \centering
        \includegraphics[width=\linewidth]{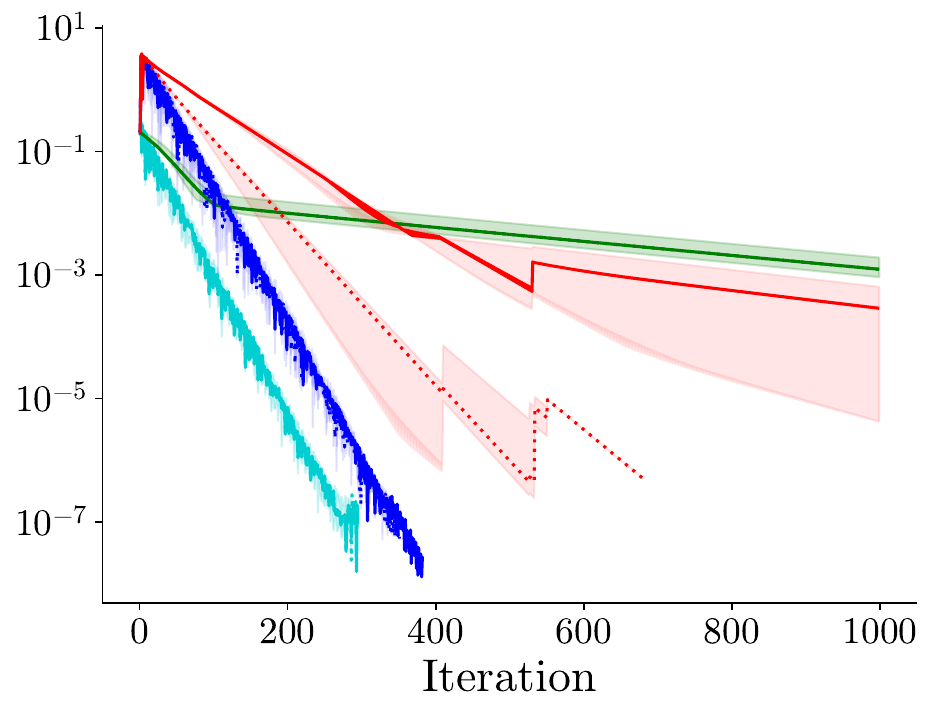}
        \caption{\centering Logistic regression with $\alpha = 3$}
        \label{fig:log-reg-sup}
    \end{subfigure}
    \begin{subfigure}[t]{0.315\textwidth} 
        \centering
        \includegraphics[width=\linewidth]{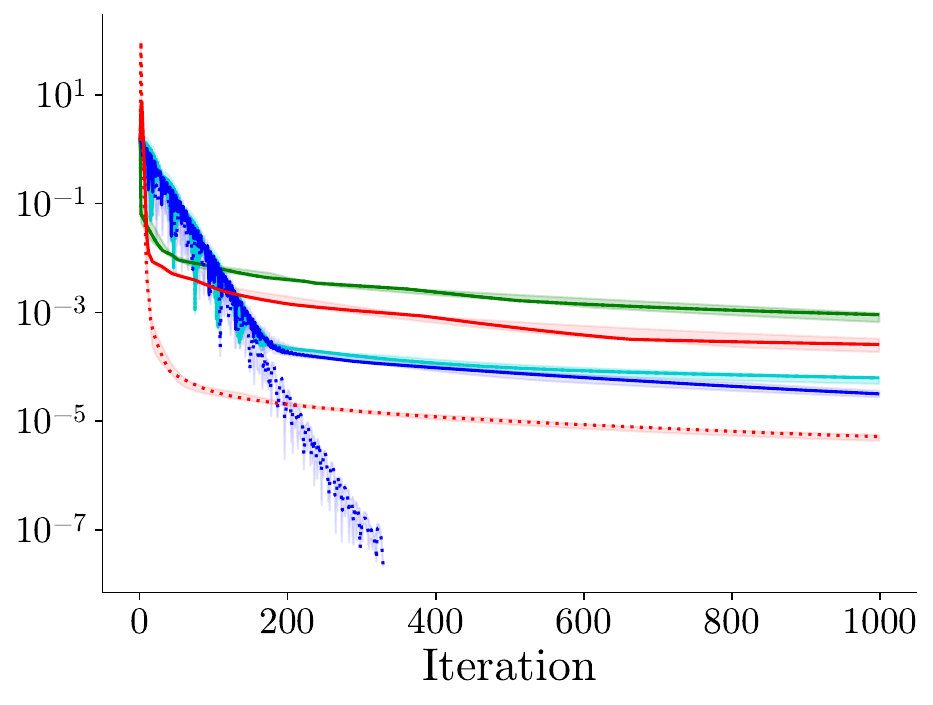}
        \caption{\centering Linear least-squares with $\alpha = 2$}
        \label{fig:linear-least-squares-sup}
    \end{subfigure}
    \caption{Gradient Euclidean norm of solution at $t$\ts{th} iterate. Learning rates tuned separately for methods in original coordinates and under reparameterization. We induce EGOP spectral decay by choice of data matrix $A$ with singular values $\sigma_k(A) = k^{-\alpha}$. As noted in the prose, in some plots the dotted traces coincide with the solid and are thus not visible (Adam in  Figure~\ref{fig:log-reg-sup}, and equivariant methods GD and GD with momentum).}
    \label{fig:sup-cvx-objectives}
\end{figure*}

\subsection{Optimizing with Weight Decay}\label{ssec:L2-reg}

In Section~\ref{sec:convergence-analysis}, we instantiated guarantees for a ball constraint set, and remarked that in practice adaptive optimization algorithms are often employed along with weight decay/L2 regularization, which implicitly constrains the algorithms to some ball in parameter space centered at the origin. We note that our experiments on residual networks and on image classification using fashionMNIST both employ AdamW, an adaptive algorithm with explicit weight decay, thus validating that EGOP reparameterization can also improve convergence when weights are implicitly constrained to some ball.

\section{Experimental details}\label{sec:experimental-details}

Experiments were implemented in python, and both code and data will be made publicly available upon publication. Experiments with neural networks were implemented in Pytorch, and experiments with convex objectives were implemented using auto-differentiation in Jax and optimizers from Optax \cite{paszke2017automatic, jax2018github, deepmind2020jax}. With the exception of experiments performed on residual networks, all experiments can be performed on CPU with a MacBook Pro with an Intel Core i5 processor and 16GB memory. However, all experiments can be run more quickly on GPU, and GPU support is included in the codebase. Experiments with residual networks were performed on an internal cluster, using 1 RTX 6000 Ada GPU, and 8 CPU cores on an AMD EPYC 74F3 processor. The timing results with auxiliary variable reparameterization for ReLU networks in Figures~\ref{fig:fashionMNIST} and \ref{fig:auxilliary-cost-benefit-tradeoff} were performed on an internal cluster using 1 RTX A6000 GPU and 2 CPU cores on an Intel Xeon Silver 4114 processor.

\subsection{Dataset Details}\label{ssec:dataset-details}

\paragraph{UCI handwritten digits dataset} The UCI digits dataset \cite{optical_recognition_of_handwritten_digits_80} contains 5,620 instances, each an $8\times 8$ pixel greyscale image of a handwritten digit of values $0,\dots, 9$. Each instance has a integer label, $0,\dots, 9$. We split the dataset into a training dataset with 3,823 instances, a validation dataset with 598 instances, and a test dataset with 1,199 instances.

\paragraph{fashionMNIST dataset} The fashionMNIST dataset consists of $28\times 28$ pixel grayscale images of clothing, each having a label from one of 10 classes. The full dataset contains 60k training samples and 10k test samples; for the results in Figure~\ref{fig:fashionMNIST}, we use the full training set and we subdivide the test set into a validation set of 2.5k samples and a test set of 7.5k samples \cite{xiao2017fashion}. For the experiments in Figure~\ref{fig:layer-by-layer-spectra}, we restrict to only instances corresponding to the first four classes (labeled ``t-shirt,'' ``trouser,'' ``pullover,'' and ``dress''). This yields a dataset of 28,000 instances, which we subdivide into a training set of 24,000 instances, a validation set of 1,000 instances, and a test set of 3,000 instances. We do this in order to reduce the problem size to a setting where we can visualize the full EGOP eigenspectrum obtained without using heuristics.

\paragraph{ImageNet dataset}
For our large-scale experiments, we use the ImageNet classification dataset \cite{deng2009imagenet}. ImageNet consists of approximately 1.28 million training images and 50,000 validation images spanning 1,000 object categories. Following standard practice, we train on the full training set and report results on the validation set. During training, all models use the conventional preprocessing pipeline of random resized crops to $224\times224$ and horizontal flips. For evaluation, we resize each image so that its shorter side is 256 pixels and use a ten-crop evaluation protocol (four corners, center crop, and their horizontal flips). ImageNet serves as a challenging benchmark to evaluate the scalability and efficiency of our method in deep residual architectures.

\subsection{Architecture Details}\label{ssec:architecture-details}

\paragraph{Architecture for multilayer linear networks} To implement the objective 
\begin{equation}
    f(\theta) = \frobnorm{W_3 W_2 W_1 A - Y}^2 /\nsamples
\end{equation}
we use a 3-layer fully connected network without any nonlinear activations. In accordance with the description in Section~\ref{sec:experimental-results}, the first layer has 10 input nodes and 30 output nodes, the second layer has 50 output nodes, and the last layer has 10 output nodes.

\paragraph{Architecture for UCI handwritten digits}  For the UCI digits dataset we use a 2-layer ReLU network. Its first layer is a  fully-connected linear layer with bias, with 64 input nodes and 32 output nodes, followed by a ReLU nonlinearity. Its second layer is a fully-connected linear layer with bias, with 32 input nodes and 10 output nodes, followed by a log-softmax. We form the function $f(\cdot)$ by taking the negative log-likelihood loss on the training dataset.

\paragraph{Architecture for fashionMNIST dataset} For results reported in Figure~\ref{fig:fashionMNIST}, we use a 2-layer ReLU network. The first layer is a fully connected linear layer with bias, with 784 input nodes and 100 output nodes, followed by a ReLU activation function. The second layer is a fully-connected linear layer with bias, with 100 input nodes and 10 output nodes. We reparameterize each fully connected layer using the auxiliary variable reparameterization method defined in Section~\ref{sec:reduced-egop}. We define the function $f(\cdot)$ as the cross-entropy loss computed on the fashionMNIST training dataset.

For the eigenspectrum results visualized in Figure~\ref{fig:layer-by-layer-spectra}, we reduce the problem size in order to study a regime where computing the full EGOP eigenspectrum is tractable. We restrict to 4 classes and use the train-validation-test split, as described in Section~\ref{ssec:dataset-details}. Additionally, we down-sample the input to our network from 28 × 28 pixels to 14 × 14 pixels using a 2D max-pooling operation as the first layer of our network. The next layer is then a fully-connected linear layer with bias, with 196 input nodes and 20 output nodes, followed by a ReLU activation function. The last layer is a fully-connected linear layer with bias, with 20 input nodes and 4 output nodes. 

\paragraph{Architecture for ImageNet} For the ImageNet experiments, we use the 34\,layer residual network (ResNet-34) architecture presented in the original paper \cite{he2016resnet}. The model begins with a convolutional layer consisting of a $7\times 7$ convolution with 64 channels and stride~2, followed by batch normalization, a ReLU activation, and a $3\times 3$ max pooling layer with stride~2. The network then contains four stages of residual blocks: the first stage has three residual blocks with 64 channels; the second stage has four residual blocks with 128 channels, with the first block using stride~2 for spatial downsampling; the third stage has six residual blocks with 256 channels, again with stride~2 in the first block; and the final stage has three residual blocks with 512 channels, with stride~2 in the first block. Each residual block consists of two $3\times 3$ convolutional layers with batch normalization and ReLU activations, together with a skip connection. After the residual stages, we apply global average pooling and a fully connected layer mapping from 512 features to 1000 output classes. We define the function $f(\cdot)$ as the cross entropy loss computed on the ImageNet training dataset.

\subsection{Algorithm Details}\label{ssec:algorithm-details}

We consider four optimization algorithms: Adagrad, Adam, SGD, and SGD with momentum. We use standard settings for all hyperparameters except learning rate: for Adam, we use $\beta_1 = 0.9$ and $\beta_2 = 0.999$, and for SGD with momentum we use momentum parameter $0.9$, matching that of Adam. When using AdamW, we use default settings  $\beta_1 = 0.9$ and $\beta_2 = 0.999$, and tune the weight decay parameter. 

We tune learning rates for each algorithm. Unless otherwise noted, when choosing the range of learning rates to sweep for tuning, we the following two-step procedure. First we sweep a coarse sweep using powers of 10 (e.g. 1e-3, 1e-2, $\dots$ , 100) to find an upper and lower bound on the learning rates that produce best performance. The metric by which we quantify the ``best performance'' for each different experiment is discussed below in Section~\ref{ssec:details-for-main-body-experimental-results}. We then perform a refined sweep, where we discretize each interval between subsequent power of 10 using doubling. For example, if the coarse sweep identified lower bound 0.01 and upper bound 1.0, we would perform the refined sweep over values [0.01, 0.02, 0.04, 0.08, 0.1, 0.2, 0.4, 0.8, 1.0].

\paragraph{Auxiliary Variables} For the auxiliary variables reparameterization experiments, we use AdamW with $\beta_1 = 0.9$ and $\beta_2 = 0.999$. As described in Section~\ref{sec:reduced-egop}, the auxiliary variables method allows the model to represent parameters in the full $d$-dimensional space rather than being constrained to the reduced $r$-dimensional subspace $\text{span}(V_r)$. We therefore expect auxiliary variables EGOP to achieve better performance than reduced-dimension EGOP. We apply weight decay to both $\theta_r$ and $\theta_d$ parameters, but not to the bias terms. Similarly to the learning rate sweep procedure described in the preceding paragraph, we perform a two-stage coarse and fine grain grid search to select learning rates and weight decay, ultimately sweeping learning rates $[0.00001, 0.00005, 0.0001, 0.0005, 0.001, 0.005, 0.01, 0.05, 0.1]$ and weight decay values $[0, 0.00001, 0.0001, 0.001, 0.01, 0.05, 0.1, 0.2, 0.5]$, training each combination for 30 epochs and selecting the hyperparameters that achieve the highest validation accuracy.

\paragraph{ResNet for ImageNet}
Following the standard ResNet training protocol, we use hyperparameters comparable to those reported in the original work. Although the original ResNet work \cite{he2016resnet} trains using SGD with momentum~$0.9$, our experiments use AdamW. To preserve comparable smoothing of gradient information, we set $\beta_1 = 0.9$, which plays a role analogous to momentum in SGD, together with the standard choice $\beta_2 = 0.999$. We use the same weight decay parameter as the original ResNet work, namely $10^{-4}$. For learning rate selection, we perform a logarithmic sweep over the interval $[10^{-5}, 1]$, training each candidate learning rate for 10~epochs and selecting the value that achieves the highest top-1 validation accuracy under ten-crop evaluation. After selecting the learning rate, we train the final model for 40~epochs using a batch size of 128 and the standard ImageNet data augmentations described in Section~\ref{ssec:dataset-details}. We chose 40~epochs because, under a fixed learning rate without decay, the validation losses consistently plateau by this point, indicating that the model has reached its first stage of convergence.

To compute the EGOP matrix for convolutional layers, we repeatedly reinitialize the network weights and record the gradients obtained from a single forward–backward pass at initialization. Because a single pass produces one gradient sample for each kernel in a convolutional layer, a layer with $c$ output channels yields $c$ gradient samples per pass. We therefore determine the required number of passes by ensuring that every convolutional layer accumulates at least 1000 gradient samples in total. Let $c_{\min}$ denote the smallest number of output channels across all convolutional layers. Since this layer contributes only $c_{\min}$ samples per pass, we perform $\lceil 1000 / c_{\min} \rceil$ forward–backward passes. In each pass, we compute the gradient of the cross-entropy loss with respect to all convolutional and linear weights.

\subsection{Additional Details for Figures in Section~\ref{sec:intro}}\label{ssec:details-for-opener-cartoon}

For Figure~\ref{fig:opener-cartoon} (left), we generate the loss landscape pictured by taking $d=2$ dimensions and $n=100$ samples. We generate a matrix $A$ with singular values $\sigma_k = k^{-\alpha}$ with $\alpha = 1.5$ and random right- and left-singular vectors. We sampled $\theta^* \sim \mathcal{N}(0, \mathbb{I})$ and used $A, \theta^*$ to induce $f(\cdot)$ a log-sum-exp objective, as defined in Section~\ref{sec:experimental-results}. We generated problem instances at random, and chose a random seed that produced a problem where the primary directions of variation for $f(\cdot)$ were clearly un-aligned with the coordinate axes. We allow Adagrad to take 1000 iterations.

We selected an initial point $\theta_0$ that would produce a clear visual distinction between the two coordinate systems in early iterates. To select the learning rate, we first swept over the values $1, 10, 20,\dots, 100$, and examined the suboptimality of the solution produced after 1000 iterates. For each learning rate we conducted 5 random trials on log-sum-exp objectives generated with identical values of $d, n, \alpha$, and with $\theta^*\sim \mathcal{N}(0,\mathbb{I})$; these trials were initialized at points drawn from $\mathbf{N}(0, \norm{\theta_0}^2_2\mathbb{I})$, where $\theta_0$ is the initial point used in Figure~\ref{fig:opener-cartoon} (left). We found that for learning rates 30 through 90, the suboptimality of the solution returned by the algorithm in original coordinates was very comparable across learning rates and very close to zero ($<1e-10$). Thus we selected a learning rate with these properties that was on the lower end (hence 30) because at larger learning rates, the oscillations around the global minimum made it more difficult to visually assess the difference between trajectories before and after reparameterization.

\subsection{Additional Details for Figures in Section~\ref{sec:EGOP-spectral-decay}}\label{ssec:details-for-spectral-decay} 

Figure~\ref{fig:tinyMNIST-global-spectral-decay} plots the eigenspectrum of an empirically estimated EGOP matrix. We use the UCI digits dataset and consider the training subset described in Section~\ref{ssec:dataset-details}. We use the 2-layer ReLU network detailed in Section~\ref{ssec:architecture-details}. 

For Figure~\ref{fig:tinyMNIST-global-spectral-decay} we use full-batch gradients to estimate the EGOP matrix. We take $\rho$ to be a standard normal distribution: for each gradient sample, we form $\theta$ by sampling the entries of all weights and biases i.i.d. from a standard Gaussian. We estimate $\EGOP(f)$ using $M = 10d$, where $d$ is the total number of parameters (sum of all weight and bias entries) for the network. This is a larger number of EGOP samples than we use in later experiments; this is done intentionally with the goal of clearly resolving the spectral decay, rather than having decay appear as an artifact of numerical estimation with few samples.

For Figure~\ref{fig:tinyMNIST-compare-spectral-decay}, we use the same dataset, architecture, objective $f(\cdot)$, and procedure for EGOP estimation, but we use different sampling distributions $\rho$. The ``realistic initialization'' distribution is described in full detail below in Section~\ref{ssec:details-for-main-body-experimental-results}.

For Figure~\ref{fig:layer-by-layer-spectra}, we use the same architecture and objective for the UCI digits dataset. For fashionMNIST, we use the train-validation-test split, as described in Section~\ref{ssec:dataset-details}, and use the architecture described in Section~\ref{ssec:architecture-details}. Wwe use minibatches of size 300 to estimate gradients. For each architecture, we use $M = 5d$ gradient samples to estimate each block EGOP matrix, where $d$ is the number of parameters in the network. This is a larger number of EGOP samples than we use in later experiments; this is done intentionally with the goal of clearly resolving the spectral decay, rather than having decay appear as an artifact of numerical estimation with few samples. We perform block EGOP reparameterization for both networks, where the blocks are defined by the weight matrices of each network. For more details on block reparameterization for neural network weights, see the example in Section~\ref{sec:expanded-heuristics-discussion}.

When drawing gradients to estimate the block EGOP matrices, the distribution over parameters $\rho$ from which we draw is the same as the distribution we later use to initialize the networks during training. For a full description of this initialization distribution, see Section~\ref{ssec:details-for-main-body-experimental-results}.

\subsection{Additional Details for Figures in Section~\ref{sec:experimental-results}}\label{ssec:details-for-main-body-experimental-results}

\paragraph{Linear Feedforward Networks} As described in Section~\ref{sec:experimental-results}, for Figure~\ref{fig:global-reparam-linear-layers} we consider parameters $\theta = [\textrm{vec}(W_1), \textrm{vec}(W_2), \textrm{vec}(W_3)]$ where $W_1 \in \R^{50\times 10}$, $W_2 \in \R^{30\times 50}$, $W_3 \in \R^{10\times 30}$. We seek to minimize loss
\[
    f(\theta) = \frobnorm{W_3 W_2 W_1 A - Y}^2 /\nsamples
\]
where $A\in \R^{10\times \nsamples}$, and $Y = M^* A$ for $M^*\in \R^{10\times 10}$ drawn from a standard Gaussian distribution. We induce spectral decay in $\EGOP(f)$ by generating $A$ with singular values $\sigma_k(A) = k^{-2}$ and random right- and left-singular vectors. For each trial, we generate $20,000$ data samples, which we split into a test set of $10,000$ training data instances, $4,000$ validation instances, and $6,000$ test instances. We define the training loss to be $f(\cdot)$ restricted to the training instances and their labels, and use this function when estimating the EGOP matrix. We use stochastic mini-batches of size 500 when estimating the EGOP and when performing optimization.

For each trial, we generate $A, M^*$, and $Y$ as described above. We then estimate $\EGOP(f)$ using $M = 2d$ samples, where $d$ is the total number of parameters in the network ($d=2,300$). When estimating the EGOP, we let $\rho$ be the same distribution used when initializing networks for training. Specifically, we initialize each weight matrix from a mean-zero Xavier normal distribution, also called Glorot initialization, which is widespread in practice \cite{glorot2010understanding}. The Xavier normal distribution is a Gaussian with standard deviation $\sqrt{2/\FIFO}$, where for a matrix $W \in \R^{n\times m}$, $\FIFO = n+m$. We compute the full eigenvalue decomposition to find the change-of-basis matrix $V$.

We use 1000 epochs during training. See Section~\ref{ssec:algorithm-details} for choice of  hyperparameters and choice of the range of learning rates to sweep for tuning. We measure performance using the median minimum validation loss achieved during training. We perform 10 independent trials at each learning rate. For each algorithm, we choose the learning rate at which the algorithm in original achieved lowest median minimum validation loss. Results of this learning rate sweep are in Figure~\ref{fig:linear-layers-global-reparam-valloss-vs-LR}.

Because Adam in original coordinates exhibited numerical instability due to the near-zero gradient values encountered, we used cosine annealing to decay the learning rate over the coarse of training. We use Pytorch's default implementation of cosine annealing. For ease of comparison, we apply the same learning rate decay schedule to all algorithms (Adagrad, Adam, SGD, and SGD with momentum) in both original and reparameterized coordinates. We use this learning rate decay schedule throughout learning rate tuning, and in all results displayed in Figure~\ref{fig:global-reparam-linear-layers}.

\paragraph{ReLU Networks for Image Classification} We use the UCI digits dataset and fashionMNIST, with the train-validation-split described in Section~\ref{ssec:dataset-details} and architectures described in Section~\ref{ssec:architecture-details}. We use the same (random) partition of datapoints into train, validation, and test datasets for all trials. 

For both networks and objectives, we use minibatches of size 300 to estimate gradients during both EGOP estimation and optimization. We perform block EGOP reparameterization for both networks, where the blocks are defined by the weight matrices of each network. For more details on block reparameterization for neural network weights, see the example in Section~\ref{sec:expanded-heuristics-discussion}. For the UCI digits dataset, we use $M=d$ minibatch gradient samples to estimate the EGOP, where $d$ is the number of parameters in the network, and compute the orthonormal reparameterization matrices for each layer by taking the (full, deterministic) SVD of the matrix of gradient samples for that layer, as this is equivalent to taking the eigenvalue decomposition of the empirical block EGOP matrix. 

For the fashionMNIST experiments reported in Figures~\ref{fig:fashionMNIST} and ~\ref{fig:auxilliary-cost-benefit-tradeoff},  we use pytorch's \texttt{svd\_lowrank} function to implement randomized SVD, which in turn is based off Algorithms 4.1 and 5.1 in Halko et al. \cite{halko2011finding}. This yields $V_r \in \R^{d_1\cdot d_2\times r}$ a matrix with orthonormal columns. We use this matrix to perform auxiliary variable reparameterization, as introduced in Section~\ref{sec:efficient-heuristics}. We collect $M = 0.01d$ minibatch gradient samples, where $d$ is the total number of parameters in the model. For each layer, we form the gradient matrix $G \in \mathbb{R}^{d_1 \cdot d_2 \times M}$ and use randomized SVD to compute the leading $r$ left-singular vectors. See Section~\ref{sec:expanded-heuristics-discussion} for full pseudocode for auxiliary reparameterization. We use the same value of $r$ for all layers and vary $r$ to study the effect of subspace dimension on convergence speed. 

For each objective and network, we use the same distribution over parameters for drawing gradient samples to estimate the EGOP and for initializing the network at training time. For the experiments in Figure~\ref{fig:opener-cartoon} (right), we draw weight entries i.i.d. from a mean-zero Xavier distribution, and bias entries i.i.d. from a uniform distribution with range 
\[
    [-(\FIFO)^{-1/2},(\FIFO)^{-1/2}].
\]
For the experiments in Figure~\ref{fig:fashionMNIST}, we draw weight entries i.i.d. from a standard Gaussian distribution $\mathcal{N}(0, 1)$, and bias entries i.i.d. from a uniform distribution with range 
\[
    [-(\textrm{in-features})^{-1/2}, (\textrm{in-features})^{-1/2}].
\]

All of these initialization distributions stem from conventions that are widespread in practice. The distinction between the weight distributions used with the digits dataset versus the fashionMNIST dataset was the result of using legacy code; we did not tune initializations to different applications. 

We selected hyperparameters and tuned learning rates following the procedures discussed in Section~\ref{ssec:algorithm-details}. We used a constant learning rate for both datasets, as all methods appeared stable. We measure performance using the median maximum validation classification accuracy achieved during training. When tuning learning rates for the UCI digits task, for each algorithm we choose the learning rate at which the algorithm in original coordinates achieved highest median maximum validation accuracy. For the digits dataset, we performed 50 independent trials at each learning rate.

For fashionMNIST, we optimize using AdamW. We performed a single trial for each combination of learning rate and weight decay, and only trained for 30 epochs during tuning. 

Once the combination of learning rate and weight decay was selected, we set the number of epochs over which to train by examining loss and accuracy curves by eye, and choosing a value large enough that all algorithms had converged. For the digits dataset, we trained for 200 epochs, and for fashionMNIST we trained for 100 epochs.

\paragraph{Regression with Errors-in-Variables} As described in Section~\ref{sec:experimental-results}, we optimize
\[
    f(\theta) = \frac{1}{2} \norm{A(M\odot W) - Y}^2_F.
\]
For each trial, the data are generated as follows. We set the problem dimension to $n = d = 10$ and sample the design matrix $A$ so that its singular values obey $\sigma_k = k^{-\alpha}$ with $\alpha = 2.0$. For $\nu \geq 0$, each entry $M_{ij}$ is drawn independently from $\mathcal{N}(1, \nu^2)$. The ground truth parameter $W^*$ is sampled entrywise from a standard normal distribution, and the target $Y$ is set as $Y = A (M \odot W^*)$.

For optimization, we compare the performance of SOAP, Shampoo, base Adam and Adagrad, and EGOP-reparameterized Adam and Adagrad. All parameters are initialized at zero, and each optimization run proceeds for 1000 steps using a cosine annealing learning rate schedule. We employ this learning rate schedule in order to improve SOAP's convergence. On independent trials, a new random instance of $A$, $M$, $W^*$, and $Y$ is generated, on which each optimizer is evaluated. 

We perform a logarithmic learning rate sweep for each optimizer and each $\nu$. In every trial, the training loss trajectory is recorded. We measure performance for each learning rate using the median final training loss across the five trials. The best learning rate for each setting is taken to be the one with the lowest median final loss.

For EGOP reparameterization, we sample 1000 gradients from parameters $W$, where the entries of $W$ are sampled from a standard normal distribution. 

\paragraph{Low-rank Matrix Factorization} As stated in Section~\ref{sec:experimental-results}, we consider minimizing 
\[
    f(\theta) = \frac{1}{2}\norm{A(LR^T) - Y}^2_F.
\]
where $L \in \mathbb{R}^{n \times r}$ and $R \in \mathbb{R}^{m \times r}$, and $A \in \R^{n\times n}$. On each trial, the synthetic ground truth $X^*$ is generated to have rank $r$ and a prescribed condition number $\kappa$. Specifically, we factor $X^*$ as $X^* = U \Sigma V^\top$ with random orthonormal matrices $U \in \mathbb{R}^{n \times r}$, $V \in \mathbb{R}^{m \times r}$, and singular values $\Sigma = \mathrm{diag}(\sigma_1, \ldots, \sigma_r)$, for values forming a linear grid from $1$ to $1/\kappa$. The measurement operator $A$ is also constructed as a random matrix with controlled singular value decay: its singular values follow $\sigma_k = k^{-\alpha}$. Both $X^*$ and $A$ are independently regenerated for each trial. Each observation matrix $Y$ was generated using $Y = A X^* + \mathcal{N}(0,\, \sigma_{\text{obs}}^2)$. All experiments are performed with $n=m=20$, $r=5$, $\kappa = 2$, and $\alpha = 1/2$, and the standard deviation of the observation noise was set to $\sigma_{\text{obs}} = 0.01$.

We use spectral initialization: to form $L_0, R_0$, we solve for $A^\dag Y$ where $A^\dag$ indicates the Moore-Penrose pseudoinverse, perform an SVD, and truncate to rank $r$ to obtain $U_r, \Sigma_r, V_r$, and form left- and right- factors $L_0 = U_r \Sigma_r{-1/2}$,  $R_0 = V_r \Sigma_r^{-1/2}$. SOAP, Shampoo, and base Adam and Adagrad are initialized at $(L_0, R_0)$, and EGOP-reparameterized methods are initialized at the appropriate equivalent points in the coordinates of the EGOP-eigenbasis.

For each optimizer, we perform logarithmic learning rate sweeps, performing five independent trials on each. For each optimizer, we selected the learning rate achieving the lowest median final loss. All parameters are optimized for $1000$ steps starting from this initialization. Figure~\ref{fig:LRMF} reports results aggregated over $5$ independent trials, corresponding to problem instances \textit{independent} from any problems generated during the learning rate sweep. We use a cosine annealing learning rate schedule both in learning rate sweeps and in the results reported in Figure~\ref{fig:LRMF}; this learning rate schedule was selected to stabilize and promote best performance in momentum-based methods (particularly SOAP, Adam, and EGOP-reparameterized Adam), but the overarching trends persist when the learning rate schedule is removed or modified.

\paragraph{Convex objectives} We study three objectives. For each, we generate a matrix $A \in \R^{\nsamples\times d}$ with singular value decay $\sigma_k = k^{-\alpha}$ and random orthonormal right- and left-singular vectors. The values of $\alpha$ are specified in the caption of Figure~\ref{fig:sup-cvx-objectives}. The log-sum-exp objective is defined in Eq.~\ref{eq:log-sum-exp-objective}. The logistic regression objective is defined as
\[
    f(\theta) = \sum_{i=1}^{\nsamples} -y_i \log\left(\frac{1}{1+e^{-\langle a_i, \theta\rangle}}\right) -(1-y_i)\log\left(\frac{e^{-\langle a_i, \theta\rangle}}{1+e^{-\langle a_i, \theta\rangle}}\right)
\]
where $\{a_i\in \R^d\}_{i=1}^{\nsamples}$ are the columns of $A$, and labels $y_i\sim \operatorname{Bernoulli}(\pi(\theta^*)_i)$ where $\theta^* \in \R^d$ is drawn from $\mathcal{N}(0, \mathbb{I})$ and 
\[
    \pi(\theta^*)_i \defeq \frac{e^{-\langle a_i, \theta^*\rangle}}{1+e^{-\langle a_i, \theta^*\rangle}}.
\]
The logistic regression objective is 
\[
    f(\theta) = \frac{1}{2}\norm{A\theta-y}^2_2
\]
where $y=A\theta^*$ and $\theta^*\in \R^d$ is drawn from $\mathcal{N}(0, \mathbb{I})$.

For log-sum-exp and linear least squares, $\nsamples=d=100$. For logistic regression, $\nsamples=100$ and $d=3$. For each objective, on each trial we generate $A, \theta^*$, and thus $f(\cdot)$ randomly and independently following the above procedure. 

For all convex objectives, use deterministic (full-batch) gradients to estimate the EGOP and to optimize. For all objectives, we use $M = 5d$ gradient samples to estimate the EGOP matrix. We take $\rho$ to be a standard Gaussian distribution, and use the same distribution to initialize when optimizing. See Section~\ref{ssec:algorithm-details} for choice of hyperparameters and choice of the range of learning rates to sweep for tuning. We used a constant learning rate for all convex objectives. We measure performance using the median final training loss achieved. For each algorithm, we choose the learning rate at which the algorithm in original achieved lowest median training loss. We perform 5 independent trials at each learning rate.

For each objective, we optimize for 1000 iterations, or until the gradient Euclidean norm drops below $1e-8$. The latter termination condition is comparable to default termination conditions employed by \texttt{scipy.optimize} \cite{2020SciPy-NMeth}.

\section{Expanded Discussion of Heuristics for Scalability}\label{sec:expanded-heuristics-discussion}
In this section, we expand on the heuristics proposed in Section~\ref{sec:efficient-heuristics}.

\paragraph{Block Reparameterization} Here we instantiate block reparameterization for multilayer neural networks. Given an $L$-layer neural network whose $\ell$th layer is parameterized by weight matrix $W_\ell\in \R^{\nin^{(\ell)}\times \nout^{(\ell)}}$ and bias vector $b_\ell\in \R^{\nout^{(\ell)}}$, we consider optimizing $f(\theta) = \textrm{loss}(\theta; X_{\textrm{train}}, y_{\textrm{train}})$ with respect to parameters
$ \theta = [(W_1, b_1),\dots,(W_L, b_L)].$

Rather than forming the full EGOP as described in Algorithm~\ref{alg:meta-algorithm-block}, we consider estimating the \textit{layer} EGOP matrices
\[
    \EGOP^{(\ell)} \defeq \frac{1}{M}\sum_{k=1}^M \nabla_{W_\ell} f(\theta_k)\nabla_{W_\ell} f(\theta_k)^\T
\]
where $\nabla_{W_\ell} f(\theta_k) \in \R^{\nin^{(\ell)}\nout^{(\ell)}}$ is the vector of partial derivatives of $f$ w.r.t. the entries of $W_\ell$ evaluated at $\theta_k$, and the points $\{\theta_k\}_{k=1}^M \sim \rho$ are drawn i.i.d.. 

Given these $L$ empirical layer EGOP matrices, one can obtain $L$ change-of-basis matrices $V^{(\ell)}\in \R^{\nin^{(\ell)}\nout^{(\ell)}\times \nin^{(\ell)}\nout^{(i)}}$, and form the objective
\[
    \tf(\theta) = f([(V^{(1)}W_1, b_1),\dots,(V^{(L)}W_L, b_L)]).
\]

Block EGOP reparameterization requires storing and applying $L$ orthonormal matrices, each of size $\nin^{(\ell)}\nout^{(\ell)}\times \nin^{(\ell)}\nout^{(\ell)}$. For deep neural networks, this can be considerably less expensive than storing and applying the global change-of-basis matrix, which would be of size $\sum_{\ell=1}^L \nin^{(\ell)}\nout^{(\ell)}\times \sum_{\ell=1}^L \nin^{(\ell)}\nout^{(\ell)}$. This may also reduce the sampling cost; each layer EGOP is of smaller dimension than the global EGOP and thus has a smaller number of eigenvectors to estimate. Thus the layer EGOP eigenbases may be estimated with fewer total gradient samples.

Another benefit to block EGOP reparameterization is that it offers an easy way to reduce cost by choosing to only reparameterize a subset of layers. Thus for example if a network has a subset of layers which are too wide to efficiently reparameterize, one can choose to only reparameterize the narrower layers. 

For networks where the first layer involves a linear transformation of the input data, reparameterizing only the first layer corresponds to an orthogonal transformation of the data. Thus pre-computing the EGOP eigenbasis and applying this matrix to the input data up-front would allow one to reparameterize the first layer, without requiring one to store and apply the change-of-basis matrix during training.

\subsection{Efficient Heuristics for Large-Scale Neural Networks}\label{appendix:large-neural-networks}

One major application area for adaptive gradient methods is the training of massive multi-layer neural networks. In this section, we present computational heuristics that enable EGOP-reparameterization to scale to massive architectures.

\subsubsection{Reparameterizing Massive Fully-Connected Layers}\label{sec:reduced-egop}

For large fully-connected layers, storing and applying the full reparameterization matrix $V \in \R^{d \times d}$ can be prohibitively expensive. We describe two memory-efficient variants that compute and retain only the top-$r$ eigenvectors of the EGOP matrix, forming a reduced basis $V_r \in \R^{d \times r}$ where $r \ll d$. The first approach constrains parameters to the subspace spanned by $V_r$, while the second uses auxiliary variables to maintain full expressivity while still storing only $V_r$.

\paragraph{Reduced-dimension EGOP} The primary motivation for reduced-dimension reparameterization is to reduce the memory cost of storing the reparameterization matrix $V$. Storing the full orthonormal basis $V \in \R^{d \times d}$ requires $O(d^2)$ memory, while storing only the top-$r$ eigenvectors $V_r \in \R^{d \times r}$ requires $O(dr)$ memory. Since the memory savings from reducing $V$ scale as $d^2 - dr = d(d-r)$ while the savings from reducing the parameter count scale as $d - r$, shrinking $V$ to $V_r$ provides a factor of $d$ larger benefit than shrinking $\theta$ to $\theta_r$. Additionally, computing only the top-$r$ eigenvectors via randomized SVD is significantly faster than computing the full eigendecomposition.

Rather than the full reparameterization $f(\theta) = f(V\theta)$ with $\theta \in \R^d$ and $V \in \R^{d \times d}$, reduced-dimension EGOP uses
$$
f(\theta) = f(V_r \theta_r)
$$
where $\theta_r \in \R^r$ and $V_r \in \R^{d \times r}$ contains the top-$r$ eigenvectors of the empirical EGOP matrix $\widehat{P}$. This constrains $\theta$ to lie in the $r$-dimensional subspace $\text{span}(V_r) \subset \R^d$.

\paragraph{Computing the reduced basis}
In practice, we compute $V_r$ by first sampling $M$ gradients $\{g_i\}_{i=1}^M$ at random initializations $\{\theta_i\}_{i=1}^M \sim \rho$, forming the gradient matrix $G \in \R^{d \times M}$ with columns $g_i$. We then apply randomized SVD to $G$ to obtain the top-$r$ left singular vectors, which form $V_r$, without explicitly forming the full $\widehat{P} = \frac{1}{M}GG^\T$ matrix. 

If the EGOP matrix $V$ has exact rank $r$, then $V_r$ spans the full column space of $V$ and we lose no expressivity by switching to $V_r$. However, in practice, the EGOP matrix is typically full-rank with rapidly decaying eigenvalues rather than having exact rank $r$. Thus reduced-dimension reparameterization necessarily introduces a projection error of $\|\theta - V_r V_r^\T \theta\| > 0$ for general $\theta \in \R^d$, limiting the expressivity of the reparameterized model to the $r$-dimensional subspace $\text{span}(V_r)$. We introduce auxiliary variables EGOP below to overcome this limitation.

\paragraph{Auxiliary variables EGOP}
Rather than completing the basis by appending a random orthonormal basis for $\text{span}(V_r)^\perp$, we introduce auxiliary parameters to represent the orthogonal complement. The key idea is to decompose parameters into two orthogonal components via the reparameterized objective:
$$
\tilde{f}(\theta_r, \theta_d) = f(V_r \theta_r + (I - V_r V_r^T) \theta_d)
$$
where $\theta_r \in \R^r$ represents coordinates in the dominant $r$-dimensional EGOP subspace, and $\theta_d \in \R^d$ are auxiliary variables whose projection $(I - V_r V_r^\T) \theta_d$ represents the residual component orthogonal to this subspace. The full parameter vector in $\R^d$ is thus reconstructed according to \eqref{eq:aux-reconstruction}.

\paragraph{Exact reconstruction} 
To initialize the auxiliary variables model equivalently to an original model at $\theta_0$, we set:
\begin{align*}
\theta_{r} &= V_r^T \theta_0 \in \R^r \\
\theta_{d} &= (I - V_r V_r^T) \theta_0 \in \R^d
\end{align*}
This initialization guarantees that the reparameterized model exactly reconstructs $\theta_0$:
$$
V_r \theta_{r} + (I - V_r V_r^T) \theta_{d} = \theta_0
$$
ensuring functional equivalence: $f(\theta_0) = \tilde{f}(\theta_{r}, \theta_{d})$ for all inputs.

\begin{algorithm}[H]
    \caption{EGOP Auxiliary Variables Linear Layer Forward Pass}\label{alg:aux-forward-optimized}
    \begin{algorithmic}[1]
        \State {\bfseries Input:} $x \in \R^{n \times d_{\text{in}}}$ (batch input), $\theta_r \in \R^r$ (reduced params), $\theta_d \in \R^d$ (auxiliary params), $V \in \R^{d \times r}$ (EGOP basis)
        \State {\bfseries Layer dims:} $d_{\text{out}}, d_{\text{in}}$ where $d = d_{\text{out}} \times d_{\text{in}}$
        \State \Comment{Compute $\theta = V_r \theta_r + (I - V_r V_r^\T) \theta_d$ using only 2 matrix multiplications}
        \State $\theta_{\text{full}} \gets \theta_d + V (\theta_r - V^\T \theta_d)$
        \State $W \gets \text{reshape}(\theta_{\text{full}}, (d_{\text{out}}, d_{\text{in}}))$ \Comment{Reshape to weight matrix}
        \State $y \gets x W^T + b$
        \State {\bfseries Return:} $y \in \R^{n \times d_{\text{out}}}$
    \end{algorithmic}
\end{algorithm}

\subsubsection{Reparameterizing Convolutional Network Layers}\label{ssec:CNN-sup-details}

To evaluate the scalability and generalization of EGOP reparameterization beyond fully connected architectures, we extend the method to convolutional neural networks (CNNs) and residual networks (ResNets). We study EGOP reparameterization on larger and deeper models such as AlexNet and ResNet trained on ImageNet, and compare performance against well tuned baseline optimizers. This extension allows us to assess whether EGOP can improve optimization in modern large scale visual recognition pipelines.

\paragraph{Background on convolutional layers}
A convolutional layer consists of a collection of learnable filters (also called kernels). Each filter is a three-dimensional tensor of shape $k\times k\times C_{\text{in}}$, where $k$ is the spatial size and $C_{\text{in}}$ is the number of input channels. A layer with $C_{\text{out}}$ output channels therefore contains $C_{\text{out}}$ such filters, each producing one output channel by convolving the filter with the input feature map. Stacking the $C_{\text{out}}$ resulting feature maps yields the layer’s activation tensor.

\paragraph{Motivation for EGOP in convolutional networks}
A direct application of EGOP to convolutional layers would require constructing and applying a separate reparameterization matrix for every filter, which is prohibitive for modern CNNs. However, filters within the same layer are drawn from the same initialization distribution, so their first-step gradient statistics are identically distributed; consequently, their EGOP matrices at early stages of training may also be similar. This motivates our shared reparameterization strategy: we compute a single EGOP matrix per convolutional layer and reuse it across all filters within that layer. This reduces computational cost while preserving the benefits of EGOP, enabling its use in deep architectures such as AlexNet and ResNet.

\paragraph{Shared Gradient Structure at Initialization} 
A key observation that enables an efficient extension of EGOP to CNNs is that, for most practical initialization distributions (e.g. Kaiming, Xavier, Glorot normal/uniform, filters within the same convolutional layer are identically distributed and so are their gradients. This property allows us to substantially reduce the computational cost of the reparameterization. During initialization, each forward/backward pass yields multiple gradients simultaneously rather than a single one, which decreases the number of required passes proportionally to the number of filters. Once these gradients are collected, we compute a single orthogonal matrix \( V \) that captures the shared structure of the layer. This \( V \) is then reused across all filters, leading to a dramatic reduction in memory footprint and computational speedup, due to the low number of required forward/backward passes required for gradient sampling.

\paragraph{Kronecker Product Structure from Reusing $V$} As referenced in Section~\ref{sec:efficient-heuristics}, reusing $V$ across all filters as described above is equivalent to employing a Kronecker-product reparameterization matrix. Moreover, the effective Kronecker product imposed by this construction is different from that employed by SOAP/Shampoo, and the effective EGOP change-of-basis matrix applied to each convolutional filter does \textit{not} have any structural constraint. We clarify this point below.

In the EGOP reparameterization procedure described in the preceeding paragraphs, for each convolutional layer we form $V \in \R^{(k^2 C_{\text{in}
})\times (k^2 C_{\text{in}
})}$ and apply this change-of-basis to each (vectorized) filter in the convolutional layer, denoted $\theta_j$ for $j = 1\dots C_{\text{out}}$, as
\[
    V \vecop(\theta_{j}).
\]
This is equivalent to employing the following Kronecker-product structured change-of-basis
\[
    (\mathbb{I}_{C_{\text{out}}} \otimes V) 
    \begin{bmatrix}
        \vecop(\theta_1)\\
        \vdots\\
        \vecop(\theta_{C_{\text{out}}})
    \end{bmatrix}.
\]

We now compare with the Kronecker-product structured change-of-basis employed by SOAP/Shampoo. For a convolutional layer with the above-described dimensions, SOAP and Shampoo form four orthogonal matrices
\[
    Q_h \in \R^{k\times k}, \quad Q_{w} \in \R^{k \times k}, \quad Q_{\text{in}} \in \R^{C_{\text{in}} \times C_{\text{in}}}, \quad Q_{\text{out}} \in \R^{C_{\text{out}} \times C_{\text{out}}}.
\]
The SOAP/Shampoo forward pass is then equivalent to performing
\[
    (Q_{\text{out}} \otimes Q_{w} \otimes Q_h  \otimes Q_{\text{in}}) \begin{bmatrix}
        \vecop(\theta_1)\\
        \vdots\\
        \vecop(\theta_{C_{\text{out}}})
    \end{bmatrix}.
\]
Comparing the transformation employed in EGOP reparameterization with that employed by SOAP/Shampoo, we see that the effective transformation applied to each individual filter by SOAP/Shampoo is constrained to have a Kronecker product structure, $Q_{w} \otimes Q_h  \otimes Q_{\text{in}}$, whereas the transformation $V$ employed in EGOP reparameterization is a generic matrix. SOAP/Shampoo employ a more general mapping over output channels, namely $Q_{\text{out}}$, compared to $\mathbb{I}_{C_{\text{out}}}$ employed in EGOP reparameterization. However, as explained in the preceding paragraphs, the distribution over different filters is identical at initialization, and in practice, we find employing the identity mapping over output channels in EGOP reparameterization yields performance competitive with that of SOAP/Shampoo.

\paragraph{Algorithmic Implementation}
The convolutional reparameterization follows the same principle as in the linear case, with additional tensor reshaping steps to accommodate convolutional filters. For each layer, we simultaneously collect gradients at initialization and flatten them into a single matrix. A full SVD is performed for each layer, and the resulting right singular vectors \(V\) define a shared orthogonal basis that captures the dominant gradient directions. Each filter is then flattened, preconditioned by \(V\), reshaped back to its convolutional form, and used in the standard forward pass. For full pseudocode, see \cref{alg:cnn-reparam-multiinit}.

\begin{algorithm}[h!]
\caption{EGOP Reparameterization for Convolutional Layers}
\label{alg:cnn-reparam-multiinit}
\begin{algorithmic}[1]
\State {\bfseries Input:} CNN forward map $f$ with convolutional layers 
$\{W^{(l)}\}_{l=1}^L$, number of initializations $K$, sampling distribution $\rho$
\State \Comment{1. Sample random initializations and collect gradients}
\State Sample $\{\theta_k\}_{k=1}^K \sim \rho$ i.i.d.
\For{$k = 1,\dots,K$}
    \State Initialize network parameters with $\theta_k$
    \State Perform one forward/backward pass
    \State Store gradients $\{\nabla W^{(l,k)}\}_{l=1}^L$
\EndFor

\State
\State \Comment{2. Form per-layer gradient matrices}
\For{each convolutional layer $l = 1,\dots,L$}
    \State \Comment{Flatten each kernel gradient $\nabla W^{(l,k)}_j \in 
    \R^{c_{\text{in}} \times k_h \times k_w}$ into a vector}
    \State \Comment{Stack across initializations $k$ and output channels $j$}
    \State $G^{(l)} \gets 
    \text{stack}\!\left(
    \text{flatten}(\nabla W^{(l,k)}_j)
    \right)_{k=1,\dots,K;\,j=1,\dots,c_{\text{out}}}$
    \Comment{$G^{(l)} \in \R^{(c_{\text{in}} k_h k_w) \times (K c_{\text{out}})}$}
\EndFor

\State
\State \Comment{3. Compute EGOP eigenbasis per layer}
\For{each layer $l = 1,\dots,L$}
    \State $V^{(l)} \gets \texttt{left\_singular\_vectors}(G^{(l)})$
\EndFor

\State
\State \Comment{4. Define reparameterized forward map}
\State Define $\tilde f(\{W^{(l)}_j\}_{j=1}^L) = 
f\!\left(
\left\{
\text{reshape}\!\left(
V^{(l)}\, \text{flatten}(W^{(l)}_j),
(c_{\text{in}}, k_h, k_w)
\right)
\right\}_{j=1}^L
\right)$

\State
\State \Comment{5. Optimize the reparameterized model}
\State Optimize $\tilde f$ using an adaptive optimizer of choice

\State {\bfseries Return:} Reparameterized forward map $\tilde f$
\end{algorithmic}
\end{algorithm}

\section{Kronecker Product Constraints in SOAP/Shampoo}\label{sec:compare-w-SOAP-Galore}

    In this section, we provide a detailed comparison of the change of basis employed by EGOP reparameterization and those performed by SOAP and Shmpoo \cite{vyas2024soap, gupta2018shampoo}. We instantiate an example of EGOP reparameterization for an objective whose parameters can be viewed as matrices, as this highlights some of the conceptual differences between the methods.

    In many key applications of adaptive algorithms, including training neural networks, the parameters over which optimization occurs are matrix-valued. As a simple illustration, consider optimizing a single-layer linear fully-connected network. Let $\nin$ denote the number of input features to the layer, and $\nout$ denote the number of output features. This network then has $\nin\cdot\nout$ parameters, which can be expressed either as a vector, $\theta\in \R^{\nin\nout}$, or as a matrix, denoted $\mat(\theta)\in\R^{\nin\times \nout}$.
    
    Denote the training data and labels by $A \in \R^{\nin\times\nsamples}$ and $Y \in \R^{\nout \times \nsamples}$, and consider minimizing loss function
    \[
        f(\theta) = \frac{1}{2}\frobnorm{\mat(\theta)^\T A - Y}^2.
    \]
    Similarly, the vector-valued gradients $\nabla f(\theta) \in \R^{\nin \nout}$ can also be viewed as matrices, $\mat(\nabla f(\theta)) \in \R^{\nin\times \nout}$.
    
    In the method proposed in this work, we consider gradients to be vector-valued when forming the EGOP. Thus for this single-layer objective, $\EGOP(f) \in \R^{\nin\nout \times \nin\nout}$. We emphasize this vector-view of gradients for clarity, because the EGOP matrix has distinct eigenvectors from the expectation of the matrix product, $\mat(\nabla f(\theta))\mat(\nabla f(\theta))^{\T}$. The related methods discussed in the introduction, including SOAP and GaLore, consider transformations by the matrices
    \begin{align*}
        Q_L &= \operatorname{eigenvectors}(\mathbb{E} [\mat\left(\nabla f(\theta)\right)\mat\left(\nabla f(\theta)\right)^{\T}])\\
        Q_R &= \operatorname{eigenvectors}(\mathbb{E} [\mat\left(\nabla f(\theta)\right)^{\T}\mat\left(\nabla f(\theta)\right)])
    \end{align*}
    and closely related transformations \cite{maes2024understanding,vyas2024soap, zhao2024galore}. In general the eigenvectors in $Q_L, Q_R$ correspond to different transformations than the eigenvectors of the EGOP. In particular, letting $V$ denote the eigenbasis of the EGOP formed from vector-valued gradients, $V\neq Q_L\otimes Q_R$. Moreover, the class of orthonormal matrices obtainable from the EGOP eigenbasis is strictly more general than the class $Q_L \otimes Q_R$ for pairs of orthogonal matrices $Q_L, Q_R$, as formalized below:
    \begin{lemma}
        For any orthogonal matrices $Q_L\in\R^{\nin\times \nin},Q_R\in\R^{\nout\times\nout}$ and any objective function $f:\R^{\nin\nout}\rightarrow \R$, there exists orthogonal $V\in \R^{\nin\nout\times\nin\nout}$ such that
        \[
            \forall \theta \in \R^{\nin\nout}, \quad Q_L^\T \mat(\nabla f(\theta)) Q_R = \mat(V\nabla f(\theta)).
        \]
        However, there exist values of $\nin, \nout$, objective functions $f:\R^{\nin\nout}\rightarrow \R$, and orthogonal matrices $V\in \R^{\nin\nout\times \nin\nout}$ such that no orthogonal matrices $Q_L\in\R^{\nin\times \nin},Q_R\in\R^{\nout\times\nout}$satisfy
        \[
            \forall \theta \in \R^{\nin\nout}, \quad Q_L^\T \mat(\nabla f(\theta)) Q_R = \mat(V\nabla f(\theta)).
        \]
    \end{lemma}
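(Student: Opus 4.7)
The plan is to prove both halves via a single linear-algebraic tool: the standard identity $\operatorname{vec}(AXB) = (B^\T \otimes A)\operatorname{vec}(X)$ relating sandwiched matrix products to Kronecker products. For the forward direction, applying this identity with $A = Q_L^\T$, $X = \mat(\nabla f(\theta))$, and $B = Q_R$ gives
\[
\operatorname{vec}(Q_L^\T \mat(\nabla f(\theta)) Q_R) = (Q_R^\T \otimes Q_L^\T)\nabla f(\theta).
\]
Setting $V := Q_R^\T \otimes Q_L^\T$ and applying $\mat$ to both sides yields the required identity uniformly in $f$ and $\theta$, since the derivation only uses the reshaping bijection between $\R^{\nin\nout}$ and $\R^{\nin\times\nout}$. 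Orthogonality of $V$ is immediate from $V V^\T = (Q_R^\T Q_R)\otimes(Q_L^\T Q_L) = I \otimes I = I$.

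For the reverse direction, I would first reduce the statement to a pure question about orthogonal matrices by choosing an $f$ whose gradient is surjective onto $\R^{\nin\nout}$. Taking $f(\theta) = \tfrac{1}{2}\norm{\theta}_2^2$ gives $\nabla f(\theta) = \theta$, so the required identity $\mat(V\theta) = Q_L^\T \mat(\theta) Q_R$ holding for all $\theta$ is, by the same $\operatorname{vec}$/Kronecker identity, equivalent to the \emph{matrix} equality $V = Q_R^\T \otimes Q_L^\T$. It then suffices to exhibit an orthogonal $V$ that is not a Kronecker product of two orthogonal matrices. A dimension count makes this transparent: $O(\nin\nout)$ is a manifold of dimension $\nin\nout(\nin\nout-1)/2$, while the image of the map $(Q_L, Q_R) \mapsto Q_R^\T \otimes Q_L^\T$ from $O(\nin)\times O(\nout)$ has dimension at most $\nin(\nin-1)/2 + \nout(\nout-1)/2$, strictly smaller whenever $\nin,\nout \geq 2$. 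For a concrete witness, take $\nin = \nout = 2$ and let $V \in \R^{4\times 4}$ be the cyclic permutation sending $e_i \mapsto e_{(i \bmod 4)+1}$. Any Kronecker product $Q_R^\T \otimes Q_L^\T$ has the property that all four of its $2\times 2$ blocks are scalar multiples of a single matrix $Q_L^\T$; but the $2\times 2$ blocks of this $V$ are not collinear, so $V$ cannot factor as a Kronecker product.

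The main subtlety I expect is the ``for all $\theta$'' quantifier in the reverse direction. A priori, the hypothesized identity only constrains $V$ through its action on the image of the gradient map $\theta \mapsto \nabla f(\theta)$, not on all of $\R^{\nin\nout}$, so without a careful choice of $f$ one could worry that $V$ might coincide with some Kronecker product on the range of gradients while disagreeing with it elsewhere. Picking $f$ with surjective gradient (any full-rank quadratic suffices) neutralizes this issue and makes the reduction to Kronecker non-factorization exact. Once the reduction is in hand, both the dimension argument and the explicit block-structure check are essentially one-line verifications.
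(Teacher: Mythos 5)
Your argument is correct. The forward direction is identical to the paper's: both rest on the identity $\operatorname{vec}(AXB) = (B^{\T}\otimes A)\operatorname{vec}(X)$ and take $V = Q_R^{\T}\otimes Q_L^{\T}$. For the second (``however'') half you and the paper diverge in the concrete non-Kronecker witness. The paper builds a $V\in\R^{\nin\nout\times\nin\nout}$ whose first column is the normalized all-ones vector and whose second column is a normalized sign-alternating vector, and argues that the block structure forced by the all-ones column (all $\nin\times 1$ blocks of the first column of $V$ equal a multiple of the first column of $Q_L^{\T}$) is incompatible with the alternating second column. You instead take $\nin=\nout=2$, choose $V$ to be the $4$-cycle permutation, and observe that its four $2\times 2$ blocks are not collinear, hence $V$ cannot equal a Kronecker product (where all blocks must be scalar multiples of a single matrix). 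Both witnesses are correct, and your block-collinearity check is arguably the cleaner verification. You also add a dimension count to show the failure is generic, which the paper does not include. Finally, your handling of the ``for all $\theta$'' quantifier is a bit more careful than the paper's: by picking $f(\theta)=\tfrac12\norm{\theta}_2^2$ (surjective gradient) you make the reduction from the function-valued statement to the pure matrix equality $V=Q_R^{\T}\otimes Q_L^{\T}$ exact, whereas the paper simply asserts the existence of an $f$ whose gradient hits a disagreeing vector $z$ without naming one. Net, same strategy, slightly different (and equally valid) witness and a slightly tighter treatment of the quantifier.
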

    \begin{proof}
        First we show that for any $Q_L, Q_R$, there exists suitable $V$ satisfying the property. We begin by noting that for any matrices $A, B, C$ of compatible dimension,
        \[
            \matvec(ABC) = (C^{\T} \otimes A) \matvec(B).
        \]
        Thus for any $Q_L, Q_R$,
        \[
            \matvec(Q_L^{\T} \mat(\nabla f(\theta)) Q_R) = (Q_R^\T \otimes Q_L^\T)\nabla f(\theta)
        \]
        where the second equality uses the fact that trivially $\matvec(\mat(\nabla f(\theta))) = \nabla f(\theta)$. The Kronecker product of orthogonal matrices is orthogonal, so choice of $V = (Q_R^\T \otimes Q_L^\T)$ satisfies the desired property.

        We now show there exist orthogonal matrices $V\in \R^{\nin\nout\times \nin\nout}$ such that no orthogonal matrices $Q_L\in\R^{\nin\times \nin},Q_R\in\R^{\nout\times\nout}$ satisfy
        \[
           Q_R \otimes Q_L^{\T} = V.
        \]
        This is a consequence of the fact that not all orthogonal matrices admit Kronecker product factorizations. Here we give one specific construction.
        
        Let $\vec{1}_d$ denote the all-ones vector in $\R^d$. Consider $V$ with leading column $$v_1 = \vec{1}_{\nin\nout}/\sqrt{\nin \nout}$$ and second column $$v_2 = (\nin\nout)^{-1/2}\cdot\operatorname{concatenate}(\vec{1}_{\nin\nout/2}, -\vec{1}_{\nin\nout/2})$$. Given the entries of $v_1$ are identical, this implies all the entries in the first column of $Q_R$ are identical, and thus that the first $\nin$ columns of $V$ comprise concatenated copies of $Q_L^T$. However this contradicts the fact that the entries of $v_2$ are identical in magnitude and nonzero but have positive sign for the first $\nin\nout/2$ entries and negative sign for the rest. Thus no orthogonal matrices $V$ with such first and second columns can be decomposed into $Q_R \otimes Q_L^T$, so in particular there exists some vector $z\in \R^{\nin\nout}$ such that
        \[
            Q_R \otimes Q_L^{\T} z \neq V z.
        \]
        Thus for any $f(\cdot)$ such that there exists $\theta$ with $\nabla f(\theta)=z$, it holds that for this value
        \[
            Q_L^{\T} \mat(\nabla f(\theta)) Q_R = \mat\left((Q_R \otimes Q_L^{\T})\nabla f(\theta)\right)\neq  \mat(V\nabla f(\theta)).
        \]
    \end{proof}

\section{Extended Discussion of Related Works}\label{sec:extended-related-works}

In this section, we expand on the overview of related work presented in Section~\ref{ssec:related-work}.

\paragraph{Geometric sensitivity of adaptive methods} A large body of research has been devoted to understanding the settings in which the per-coordinate learning rates of adaptive algorithms confer a benefit over more basic methods, such as (S)GD. Recently there has been renewed interest in distinguishing the properties of adaptive algorithms versus SGD because several empirical studies suggest that adaptive methods outperform SGD when training transformer models \cite{kunstner2024heavy,zhang2024transformers}. Traditional analyses of Adagrad in the context of online convex optimization establish regret bounds which can be either better or worse than those enjoyed by SGD by up to a factor of $\sqrt{d}$, for $d$ the number of problem parameters \cite{pmlr-v247-chen24e,duchi2011adaptive}. More recent research has studied the rates at which adaptive methods converge to stationary points. Several works study convergence guarantees, measured in terms of the $\ell_2$ norm of the gradient, on smooth non-convex objectives \cite{defossez2020simple,ward2020adagrad}. These results show that Adagrad, Adam and related variants achieve rates matching those enjoyed by SGD, while having the key distinction that Adagrad and its variants do not require a-priori knowledge of the objective function's Lipschitz constant \cite{defossez2020simple,ward2020adagrad}.

In order to shed light on the question of when adaptive algorithms enjoy provably stronger guarantees than SGD, a recent line of work studies convergence under more refined geometric assumptions, with particular emphasis on assumptions that are \textit{not} rotationally invariant \cite{jiang2024convergence, liu2024adagrad, xie2024adamexploitsellinftygeometryloss}. Xie et al. \cite{xie2024adamexploitsellinftygeometryloss} establish convergence guarantees in terms of the $L_\infty$ smoothness constant of the objective and also in terms of the related Hessian 1-norm. They show that rotationally invariant geometric assumptions do not suffice to capture settings when Adam out-performs SGD through experiments examining the sensitivity of Adam to orthonormal rotations \cite{xie2024adamexploitsellinftygeometryloss}. 

Jiang et al. \cite{jiang2024convergence} and Liu et al. \cite{liu2024adagrad} study convergence of Adagrad on objectives that satisfy coordinate-wise smoothness. Both works prove similar convergence guarantees for Adagrad in terms of $\ell_1$ gradient norm, rather than the $\ell_2$ gradient norm measure more widely studied; Jiang et al. \cite{jiang2024convergence} establish convergence guarantees showing that SGD remains worst-case optimal even in the setting of coordinate-wise smoothness when measuring stationarity with the $\ell_2$ gradient norm, motivating the need to measure stationarity with the $\ell_1$ norm in order to prove separation between Adagrad and SGD. Using this $\ell_1$ stationarity measure, Jiang et al. \cite{jiang2024convergence} establish provable separation between SGD's and Adagrad's convergence to stationary points of non-convex objectives. They show that when objective functions exhibit certain geometric properties, measured by their coordinate-wise smoothness constants, Adagrad's upper bounds are lower than SGD's lower bounds by a factor of $d$ \cite{jiang2024convergence}. Our analysis builds on that of Jiang et al. \cite{jiang2024convergence} and Liu et al. \cite{liu2024adagrad}, as a consequence of our results is that the EGOP reparameterization proposed in this work acts to transform objectives into the setting identified by Jiang et al. \cite{jiang2024convergence} where Adagrad's convergence guarantees compare most favorably with SGD's. 

Ling et al. \cite{ling2022vectoradam} develop a rotationally equivariant extension of Adam, termed VectorAdam, in an effort to reduce axis-aligned artifacts that arise when using adaptive methods to solve geometric optimization problems such as differentiable rendering. This method targets applications when the problem parameters $\theta\in \R^{r\cdot n}$ represent a collection of $r$ different vectors in $\R^n$. VectorAdam uses the squared gradient norm of each $n$-dimensional vector to rescale the learning rates of all entries in each of the $r$ vectors comprising $\theta$, making the algorithm equivariant to transformations of the form $Q\mat{\theta}$, where $\mat{\theta}\in \R^{n\times r}$ is the reshaping of $\theta\in \R^{r\cdot n}$ and $Q \in \R^{n\times n}$ is orthonormal \cite{ling2022vectoradam}.

\paragraph{Change-of-basis for Optimization}

Several recent works propose that when using Adam and its variants to train neural networks,  different choices of orthonormal transformation can reduce the computational cost associated with these algorithms and improve their performance \cite{maes2024understanding,vyas2024soap, zhao2024galore}. Gupta et al. \cite{gupta2018shampoo} introduced Shampoo, an efficient preconditioning method for optimizing tensor spaces. Vyas et al. \cite{vyas2024soap} formalized a connection between Shampoo and a variant of Adagrad, leading to a new proposed method called SOAP. Designed for training neural networks, SOAP computes an orthonormal reparameterization based on the singular vectors of the matrix-valued network gradients and performs optimization in this basis. Vyas et al. \cite{vyas2024soap} empirically examine the performance of SOAP and find that it outperforms both Adam and Shampoo in LLM pre-training. Zhao et al. \cite{zhao2024galore} propose GaLore, a method that simultaneously performs reparameterization and dimensionality reduction. GaLore computes a similar orthogonal basis to that used in SOAP, but instead of a full-dimensional change-of-basis GaLore retains only leading basis vectors in order to reduce dimension \cite{zhao2024galore}. Maes et al. \cite{maes2024understanding} empirically study Adam's rotational sensitivity and examine the power of existing geometric assumptions (such as those leveraged by Xie et al. \cite{xie2024adamexploitsellinftygeometryloss}) to explain Adam's performance when training transformer architectures. They propose an orthonormal reparameterization, similar to those used by SOAP and GaLore, and show empirically that this can improve Adam's performance \cite{maes2024understanding}.

SOAP and GaLore both call for periodically re-computing the change-of-basis matrices. In their experiments, the fewest re-computations performed by Vyas et al. \cite{vyas2024soap} was once every 100 batches, and they show that the performance gap between SOAP and Adam narrows as the number of batches between re-computations increases. In contrast, our we show that with our proposed choice of basis, a single up-front computation of the change-of-basis suffices to improve the performance of both Adam and Adagrad in a variety of settings, as shown in Section~\ref{sec:experimental-results}.

Outside of the domain neural network training, several works have considered data-driven methods for performing dimensionality reduction when optimizing objectives with low-rank EGOP matrices, including the works of Cartis et al. \cite{cartis2024learning} and Cosson et al. \cite{cosson2023low}. For functions with low-dimensional active subspaces, Cartis et al. \cite{cartis2024learning} study using the EGOP eigenbasis to reparameterize and reduce the dimension of optimization problems. They demonstrate that this approach yields computational speedups for loss functions whose EGOP matrix is low-rank. Cosson et al. \cite{cosson2023low} develop gradient descent algorithms which leverage the EGOP eigenbasis. Their method first estimates the EGOP and computes the corresponding leading $r$-eigenvectors, and then performs gradient descent steps along the directions of these $r$ vectors. In the setting of exactly low-rank functions, they prove convergence results illustrating that this approach improves the dimensional dependency of gradient descent.

\end{document}